\documentclass[11pt]{article}
\usepackage[margin=1in]{geometry}
\usepackage{amsmath, amssymb, graphicx, booktabs}
\usepackage{natbib}
\usepackage{multirow}
\usepackage{float}
\usepackage{subcaption}
\usepackage{pgffor}
\usepackage[T1]{fontenc} 
\usepackage{bbm}
\usepackage{amsthm}
\usepackage{newtxtext, newtxmath} 
\usepackage{indentfirst}
\usepackage{hyperref}
\usepackage{authblk}

\title{Orthogonal machine learning for conditional odds and risk ratios}
\author[1]{Jiacheng Ge\footnote{Correspondence to: jiacheng.ge@nyulangone.org}}
\author[1]{Iv\'an D\'iaz}

\affil[1]{Division of Biostatistics, Department of Population Health, 
New York University Grossman School of Medicine, New York, NY, USA}
\date{\today}
\theoremstyle{definition}
\newtheorem{theorem}{Theorem}
\newtheorem{proposition}{Proposition}

\begin{document}
\maketitle

\begin{abstract}
Conditional effects are commonly used measures for understanding how treatment effects vary across different groups, and are often used to target treatments/interventions to groups who benefit most. In this work we review existing methods and propose novel ones, focusing on the odds ratio (OR) and the risk ratio (RR). While estimation of the conditional average treatment effect (ATE) has been widely studied, estimators for the OR and RR lag behind, and cutting edge estimators such as those based on doubly robust transformations or orthogonal risk functions have not been generalized to these parameters. We propose such a generalization here, focusing on the DR-learner and the R-learner. We derive orthogonal risk functions for the OR and RR and show that the associated pseudo-outcomes satisfy second-order conditional-mean remainder properties analogous to the ATE case. We also evaluate estimators for the conditional ATE, OR, and RR in a comprehensive nonparametric Monte Carlo simulation study to compare them with common alternatives under hundreds of different data-generating distributions. Our numerical studies provide empirical guidance for choosing an estimator. For instance, they show that while parametric models are useful in very simple settings, the proposed nonparametric estimators significantly reduce bias and mean squared error in the more complex settings expected in the real world. We illustrate the methods in the analysis of physical activity and sleep trouble in U.S. adults using data from the National Health and Nutrition Examination Survey (NHANES) \citep{NHANES}. The results demonstrate that our estimators uncover substantial treatment effect heterogeneity that is obscured by traditional regression approaches and lead to improved treatment decision rules, highlighting the importance of data-adaptive methods for advancing precision health research.
\end{abstract}

\section{Introduction}
\label{sec:intro}
Conditional effect metrics such as the risk difference, the odds ratio, and risk ratio measure how treatment effects vary across levels of covariates, enabling better mechanistic understanding and more tailored interventions. Compared to more common effect heterogeneity measures such as risk difference, the conditional odds ratios and risk ratios are sometimes more desirable. For example the conditional odds ratio can be recovered from studies with selection bias, such as case-control studies, even in settings where the ATE cannot \citep{pmlr-v22-bareinboim12}, and it plays a fundamental role in identifiability of joint distributions under certain missing data models \citep{malinsky2022semiparametric}.

Traditional statistics education and practice prescribe estimation of conditional effect metrics using simple parametric models. For instance, logistic regression is often promoted as the go-to tool for estimating odds ratios due to its purported advantages \citep{hosmer2013applied,harrell2015rms, doi2022odds, mittinty2023reflection}. However, recent literature shows that reliance on inflexible parametric models can induce bias under misspecification and motivates nonparametric approaches that leverage modern machine learning \citep[e.g.,][]{rudolph2023comparing}. 
Many of these recent nonparametric approaches are based on orthogonal pseudo-outcomes, including the doubly robust unbiased transformations that appear in the ATE literature: transformations of the data that depend on two or more nuisance parameters and whose conditional mean equals the target parameter up to a remainder that is second order in the nuisance errors. These constructions are closely related to and are often derived from so-called \textit{efficient influence functions} \citep[EIFs,][]{bickel1993efficient}. They were first proposed as a functional estimation tool by \citet{RubinVanderLaan2007} in censored time-to-event settings to enable consistent nonparametric estimation of risk differences, and have been widely used in multiple statistical problems ranging from estimation of dose-response curves \citep{KennedyMaMcHughSmall2017}, estimation of causal effects in longitudinal settings \citep{luedtke2017sequential, diaz2023nonparametric}, and conditional effects in survival settings \citep{DiazSavenkovBallman2018}. For conditional effects, \citet{Kennedy2023EJS} reviews multiple estimators for the conditional average treatment effect, including the so-called DR-learner, which had been discussed in \citep{van2006statistical, luedtke2016super}, among others. The R-learner \citep{nie2021quasi} is another popular estimator for conditional average treatment effects based on such an orthogonal pseudo-outcome. Recently, \citet{morzywolek2025weighted} presented theoretical developments unifying these two estimators in a general class of weighted orthogonal estimators. The estimators we use are also closely related to the literature on orthogonal statistical learning, where one studies empirical risk minimization problems involving nuisance functions and exploits so-called ``Neyman orthogonality'' to ensure that first-stage nuisance estimation error enters the target risk only at second order \citep{chernozhukov2018double, foster2023orthogonal}. From this perspective, DR- and R-learners can be viewed as instances of a broader strategy that combines orthogonalization, cross-fitting, and flexible regression learners to obtain robust estimation of heterogeneous effects without requiring correct parametric specification \citep{hines2022demystifying, morzywolek2025weighted}.

Existing DR- and R-learners allow for estimation of heterogeneous treatment effects, but the literature has almost exclusively focused on additive-scale heterogeneity (e.g., average treatment effect). By contrast, ratio-based parameters such as conditional odds and risk ratios have received less attention. A notable recent exception is the work of \citet{vanderLaanCaroneLuedtke2024}, who introduced EP-learning, a framework that combines the stability of plug-in estimators with the efficiency of doubly robust approaches for a general class of causal contrasts which includes conditional risk ratios. EP-learning can be viewed as an advanced plug-in learner. It enforces oracle efficiency by updating preliminary outcome regressions on the link scale using sieve fluctuations targeted to obtain risk estimators that are non-parametrically optimal. This combination guarantees bounded predictions and oracle efficiency, but at the cost of the further tuning necessary (choosing a sieve basis and dimension) and additional computation. Meanwhile, conditional OR and RR estimators based directly on the canonical recipe to create suitable pseudo-outcomes and then regress these on covariates using any machine learning methods have not been developed. 

This gap motivates our two main contributions. First, we propose novel estimators for conditional odds and risk ratios based on efficient-influence-function-derived orthogonal pseudo-outcomes. In the DR-learner, the efficient influence function of the marginal ATE is used to define the pseudo-outcome used in estimation of the conditional ATE. The estimators we propose extend the same principle to multiplicative contrasts, replacing the ATE efficient influence function with efficient influence functions for the average conditional OR and RR. We also develop R-learner variants for these ratio-scale parameters, which differ from the DR-learner only in the weighting of the second-stage regression. This extends results by \citet{morzywolek2025weighted}, who showed that in the CATE setting, the DR-learner and R-learner share the same pseudo-outcome and differ only by a propensity-score-based coefficient in the loss function. Second, we conduct a comprehensive simulation study to compare the DR- and R-learners for the conditional ATE, OR, and RR with plug-in estimators (i.e., the T-Learner and S-Learner \citep{kunzel2019metalearners}) based on traditional logistic regression models and the SuperLearner \citet{vanderLaanPolleyHubbard2007}, under a wide range of realistic scenarios. Our simulation framework adopts the approach of \citet{rudolph2023comparing}, who compared parametric and nonparametric estimators for the average treatment effect using a universal Monte Carlo simulation approach that allows the comparison of estimators under hundreds of data-generating mechanisms generated at random, providing a more complete picture of estimator performance than standard simulation studies based on bespoke data-generating mechanisms. 

The remainder of this paper is organized as follows. Section \ref{sec:setup} defines the target parameters, measurements, and existing estimators. Section \ref{sec:estimators} proposes the new EIF estimators. Section \ref{sec:numstu} describes the data-generating process, details the Monte Carlo simulation study design. Section \ref{sec:results} presents the main results. Section \ref{sec:illustrative} provides a real data illustration. Section \ref{sec:discussion} provides a discussion of practical implications. Additional plots and detailed proofs are included in the Supplementary Materials (Section \ref{sec:supp_materials}).

\section{Preliminaries and existing estimators} \label{sec:setup}
In what follows we will assume access to an i.i.d. sample $Z_1,\ldots, Z_n$ from a random variable $Z=(X,T,Y)\sim P$, where $X$ is a vector of covariates, $T$ is a binary treatment, and $Y$ is a binary outcome of interest. We assume the distribution $P$ is in a nonparametric model. We focus on estimation of the conditional average treatment effect, defined as
\[ATE(X) = P(Y=1|T=1,X) - P(Y=1|T=0,X),\]
the conditional odds ratio, defined as 
    \[
    OR(X) = \frac{P(Y=1|T=1,X)/(1-P(Y=1|T=1,X))}{P(Y=1|T=0,X)/(1-P(Y=1|T=0,X))},
    \]
and the conditional risk ratio, defined as
    \[
    RR(X) = \frac{P(Y=1|T=1,X)}{P(Y=1|T=0,X)}.
    \]

Under standard assumptions in causal inference (namely, there are no unmeasured confounders and there is enough overlap) these quantities represent conditional causal effects. In what follows we assume the relevant assumptions hold, and refer to the conditional odds ratios, risk ratios, and treatment effects as "effects", but we emphasize that such language is only warranted in practice within the context of a rigorous causal model and well-justified identification assumptions. We refer the interested reader to standard textbooks on causality \cite[e.g.,][]{pearl2009causality} for more discussion of these assumptions.

In what follows, to simplify notation, we will use
\[
p_1(X) = P(Y=1 \mid T=1, X), \quad \text{ and }\quad p_0(X) = P(Y=1 \mid T=0, X).
\]
Straightforward estimators of $ATE(X)$, $OR(X)$, and $RR(X)$ may be constructed by estimating the probabilities $p_t(X)$ and plugging them into the definitions of $ATE(X)$, $OR(X)$ and $RR(X)$. These probabilities may be estimated stratified by $T$, or pooling data in $T$. In the case of the $ATE(X)$, these estimators have been referred to as the T- and S-Learners, where the T and S stand for \textit{two} and \textit{single}, respectively. 

\subsection{Plug-in estimators using the S- and T-Learners for treatment-specific probabilities}

The simplest option to estimate $ATE(X)$, $OR(X)$, or $RR(X)$ is to estimate $p_1(X)$ and $p_0(X)$ first and then plug them into the definition of the corresponding parameter. Estimators of that type are naturally referred to as \textit{plug-in} estimators. To implement plug-in estimators based on the S-Learner, the easiest and perhaps the most common in practice option in some applied fields (e.g., clinical research) is to impose the logistic regression parametrization without interactions, i.e., to assume that 
    \[
    \text{logit}\{p_t(X)\} = \beta_0 + \beta_T t + \sum_j \beta_j X_j.
    \]
It is straightforward to verify that under this model we have $OR(X) = \exp(\beta_T)$ is not a function of $X$, so that there is no treatment effect heterogeneity in the odds ratio scale, although the same is not true in other scales such as the risk ratio or the additive scale. This fact is sometimes incorrectly described as a virtue of the odds ratio \cite[e.g.,][]{doi2022odds}, but in reality it arises from the modeling assumption rather than from any intrinsic property of the odds ratio as a measure. We refer to this estimator as \texttt{LR} throughout, clarifying when necessary if it is used to estimate ATE, OR, or RR. This estimator is an example of an S-Learner, where the data are pooled and a single regression is used to estimate outcome expectations under treatment and control.

Alternatively, it is also sometimes common to add an interaction term between treatment and covariates. This allows treatment effects to vary by covariate levels, capturing heterogeneous effects through terms like $T X_j$:
    \[
     \text{logit}\{p_t(X)\} = \beta_0 + \beta_T t + \sum_j \beta_j X_j + \sum_j \gamma_j t X_j
    \]
This is equivalent to fitting two separate logistic regression models---one for the treated ($T=1$) and one for the control ($T=0$) group---and is therefore an example of a  T-learner. We refer to estimators based on the logistic regression model as \texttt{LR-T} throughout, clarifying whenever necessary if it refers to the ATE, OR, or RR.

The \texttt{SuperLearner} (\texttt{SL}) by \citet{vanderLaanPolleyHubbard2007} is an ensemble machine learning method that combines multiple base learners to minimize prediction error. 
\texttt{SuperLearner} uses V-fold cross-validation to evaluate each learner's performance, assigning weights to create a weighted combination of predictions. For each $X$, it estimates $\hat{p}_1(X)$ and $\hat{p}_0(X)$. As with logistic regression, the \texttt{SuperLearner} can be trained using data stratified by treatment $T$, or pooling treatment arms, leading to T- and S-learners. We will refer to such estimators as \texttt{SL-T} and \texttt{SL}, respectively.

\subsection{Plug-in estimators using the DR-learner for treatment-specific probabilities}

Plug-in estimators for $ATE(X)$, $OR(X)$, or $RR(X)$ may be constructed by first estimating the conditional probabilities $p_t(X)$ using the DR-learner \citep{luedtke2016super, Kennedy2023EJS}, and then plugging them into the definition of the effect. We will refer to this estimator as \texttt{DR-P} below.

The \texttt{DR-P} approach estimates the conditional probabilities in two stages before calculating the contrasts (ATE, OR, or RR). In the first stage, we use \texttt{SuperLearner} to estimate the nuisance functions---outcome probabilities $\hat{p}_1(X) = \hat{P}(Y=1|T=1,X)$, $\hat{p}_0(X) = \hat{P}(Y=1|T=0,X)$, and propensity score $\hat{e}(X) = \hat{P}(T=1|X)$. In our implementation, we use two-fold cross-fitting; see Section~\ref{sec:mc_details} for details.

In the second stage, these cross-fitted nuisance estimates are used to construct pseudo-outcomes for $P(Y=1|T=1,X)$ and $P(Y=1|T=0,X)$ as:
\[
\hat\varphi_1(Z) = \frac{I(T=1)}{\hat{e}(X)} (Y - \hat{p}_1(X)) + \hat{p}_1(X),
\]
\[
\hat\varphi_0(Z) = \frac{I(T=0)}{1-\hat{e}(X)} (Y - \hat{p}_0(X)) + \hat{p}_0(X).
\]
These pseudo-outcomes are then used as the response variables in \texttt{SuperLearner} regression models with covariates $X$, and the resulting fold-specific predictions are averaged to obtain the refined probabilities $\hat{p}_1^*(X)$ and $\hat{p}_0^*(X)$ on the target distribution. The final conditional ATE, OR, and RR are still obtained by plugging refined probability estimates into the usual effect formulas. This construction is attractive because the arm-specific pseudo-outcomes above are efficient influence-function-based scores for the treatment-specific outcome regressions, and related DR-learner theory suggests that cross-fitted two-stage procedures based on such orthogonal scores can enjoy reduced sensitivity to first-stage nuisance estimation error together with oracle-type error bounds \citet{Kennedy2023EJS}.

\section{Novel estimators}
\label{sec:estimators}

For the case of the conditional ATE, it has been argued that separately estimating $p_1(X)$ and $p_0(X)$ can lead to suboptimal estimators, particularly if the underlying conditional ATE is smooth but the outcome probabilities are not \citep{Kennedy2023EJS}. This motivates the DR-learner, which regresses $\hat\varphi_1(Z)-\hat\varphi_0(Z)$ on $X$ rather than fitting S- or T-learners or regressing the pseudo-outcomes separately. The optimality of this approach (which will be referred to as \texttt{DR-CATE}) has been widely discussed, and oracle inequalities showing double robustness have been proved \citep{Kennedy2023EJS, DiazSavenkovBallman2018}. In this section, we develop DR-learner estimators for the conditional OR and RR. The motivation is the same as for the conditional ATE: if the OR or RR is smooth in the covariates, but the treatment-specific outcome probabilities $p_t(X)$ are not, then the plug-in estimator \texttt{DR-P} from the previous section can have inflated variance.

Our approach can be understood from two complementary angles. First, we replace unstable plug-in estimators of odds ratios or risk ratios with orthogonal pseudo-outcomes derived from the efficient influence function of related pathwise differentiable parameters \citep{bickel1993efficient,hines2022demystifying,Kennedy2023EJS,jun2023average}. In particular, we use the uncentered EIFs for $E[OR(X)]$ and $E[\log OR(X)]$ as pseudo-outcomes to construct estimators of the conditional odds ratio, and the uncentered EIFs for $E[RR(X)]$ and $E[\log RR(X)]$ as pseudo-outcomes to construct estimators of the conditional risk ratio. These EIF-type pseudo-outcomes satisfy conditional-mean identities with second-order remainder terms in the nuisance errors. Second, these pseudo-outcomes fit naturally into the orthogonal statistical learning framework of \citet{foster2023orthogonal}: after defining the corresponding squared losses, we verify the orthogonality condition in Assumption~1 of \citet{foster2023orthogonal}, the first-order optimality condition in Assumption~2 of \citet{foster2023orthogonal}, and the smoothness and strong-convexity conditions in Assumptions~3--4 of \citet{foster2023orthogonal}, and then invoke the plug-in empirical risk minimization (ERM) bound in Theorem~3 of \citet{foster2023orthogonal}.

\subsection{DR-learner for the conditional OR and RR}
We now give the formal definitions of the four DR-learners. For the OR case, our derivations are closely related to work by \citet{jun2023average}, who derived the EIF of $E[\log OR(X)]$ with the goal of estimating it directly as a scalar summary of marginal treatment effectiveness.

To keep the notation consistent throughout Theorems~\ref{thm:dr_transforms} and~\ref{thm:or_orthogonal}, let $\eta=(q_1,q_0,\pi)$ denote a generic nuisance triple, let $\eta_0=(p_1,p_0,e)$ denote the true nuisance tuple, and let $\hat\eta=(\hat p_1,\hat p_0,\hat e)$ denote the first-stage estimate. Define
\[
OR_\eta(X)=\frac{q_1(X)\{1-q_0(X)\}}{q_0(X)\{1-q_1(X)\}},
\qquad
\log OR_\eta(X)=\log\{OR_\eta(X)\},
\]
\[
RR_\eta(X)=\frac{q_1(X)}{q_0(X)},
\qquad
\log RR_\eta(X)=\log\{RR_\eta(X)\}.
\]
The pseudo-outcomes corresponding to each estimator are as follows:
\begin{itemize}
  \item For estimating $OR(X)$, define
    \[
    \varphi_{OR}(Z;\eta)=OR_\eta(X)
    +\frac{OR_\eta(X)}{q_1(X)\{1-q_1(X)\}}\frac{I(T=1)}{\pi(X)}\{Y-q_1(X)\}
    -\frac{OR_\eta(X)}{q_0(X)\{1-q_0(X)\}}\frac{I(T=0)}{1-\pi(X)}\{Y-q_0(X)\}.
    \]
    The resulting estimator is referred to as \texttt{DR-OR}.
  \item For estimating $\log(OR(X))$, define
    \[
    \varphi_{\log OR}(Z;\eta)=\log OR_\eta(X)
    +\frac{I(T=1)}{\pi(X)}\frac{Y-q_1(X)}{q_1(X)\{1-q_1(X)\}}
    -\frac{I(T=0)}{1-\pi(X)}\frac{Y-q_0(X)}{q_0(X)\{1-q_0(X)\}}.
    \]
    The resulting estimator is referred to as \texttt{DR-LOR}.
  \item For estimating $RR(X)$, define
    \[
    \varphi_{RR}(Z;\eta)=RR_\eta(X)
    +\frac{1}{q_0(X)}\frac{I(T=1)}{\pi(X)}\{Y-q_1(X)\}
    -\frac{q_1(X)}{q_0(X)^2}\frac{I(T=0)}{1-\pi(X)}\{Y-q_0(X)\}.
    \]
    The resulting estimator is referred to as \texttt{DR-RR}.
  \item Lastly, for estimating $\log(RR(X))$, define
    \[
    \varphi_{\log RR}(Z;\eta)=\log RR_\eta(X)
    +\frac{1}{q_1(X)}\frac{I(T=1)}{\pi(X)}\{Y-q_1(X)\}
    -\frac{1}{q_0(X)}\frac{I(T=0)}{1-\pi(X)}\{Y-q_0(X)\}.
    \]
    The resulting estimator is referred to as \texttt{DR-LRR}.
\end{itemize}

Theorem~\ref{thm:dr_transforms} and Theorem~\ref{thm:or_orthogonal} will use three related versions of the same pseudo-outcome:
\[
\varphi_\theta(Z):=\varphi_\theta(Z;\eta_0),
\qquad
\hat\varphi_\theta(Z):=\varphi_\theta(Z;\hat\eta),
\qquad
\theta\in\{OR,\log OR,RR,\log RR\}.
\]
Here $\varphi_\theta(Z)$ is the population pseudo-outcome evaluated at the true nuisance $\eta_0$, while $\hat\varphi_\theta(Z)$ is the computed pseudo-outcome evaluated at the first-stage estimate $\hat\eta$.
For each target parameter, the second stage fits
\[
\hat\theta\in\arg\min_{f\in\mathcal{F}}\frac{1}{n}\sum_{i=1}^{n}\left(\hat\varphi_\theta(Z_i)-f(X_i)\right)^2.
\]
For theoretical analysis we also define the corresponding population risk
\[
\mathcal{L}_{\theta}(f,\eta)=E\!\left[\left\{\varphi_{\theta}(Z;\eta)-f(X)\right\}^2\right].
\]

We now establish that the pseudo-outcomes defined above satisfy a second-order remainder property: their conditional expectation equals the target parameter up to a remainder term that is second order in the nuisance estimation errors. This property underlies both the quality of the resulting conditional effect estimator and the Neyman orthogonality of the associated squared losses.

\begin{theorem}[Conditional-mean identities with second-order remainders]
\label{thm:dr_transforms}
Let $\mathcal{G}$ be a class of nuisance triples. Assume the true nuisance tuple $\eta_0=(p_1,p_0,e)$ and every $\eta=(q_1,q_0,\pi)\in\mathcal{G}$ satisfy
\[
0<e(X)<1,
\quad
0<p_t(X)<1,
\quad
0<\pi(X)<1,
\quad
0<q_t(X)<1
\]
for $t\in\{0,1\}$. For each $\theta\in\{OR,\log OR,RR,\log RR\}$ and each $\eta\in\mathcal{G}$, let $\varphi_\theta(Z;\eta)$ be defined as above. Then
\begin{align*}
E[\varphi_{OR}(Z;\eta)\mid X] &= OR(X) + R_{OR}(X;\eta),\\
E[\varphi_{\log OR}(Z;\eta)\mid X] &= \log OR(X) + R_{\log OR}(X;\eta),\\
E[\varphi_{RR}(Z;\eta)\mid X] &= RR(X) + R_{RR}(X;\eta),\\
E[\varphi_{\log RR}(Z;\eta)\mid X] &= \log RR(X) + R_{\log RR}(X;\eta),
\end{align*}
where each $R_\theta(X;\eta)$ is a second-order remainder. Each remainder consists of (a) cross terms that are products of propensity-score errors $(e-\pi)$ and outcome-regression errors $(p_t-q_t)$, and (b) a second-order Taylor remainder term that is quadratic in the outcome-regression errors, given explicitly in the appendix. In particular, $R_\theta(X;\eta)=0$ whenever $q_t(X)=p_t(X)$ for $t\in\{0,1\}$. For the nonlinear parameters considered here, this second-order Taylor remainder term generally remains even when the propensity score is correct but the outcome regressions are not.
\end{theorem}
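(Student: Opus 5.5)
All four pseudo-outcomes share one template,
\[
\varphi_\theta(Z;\eta)=g_\theta(q_1,q_0)+a_\theta(q_1,q_0)\frac{I(T=1)}{\pi}(Y-q_1)-b_\theta(q_1,q_0)\frac{I(T=0)}{1-\pi}(Y-q_0),
\]
where $g_\theta$ is the map from $(q_1,q_0)$ to the target, $a_\theta=\partial g_\theta/\partial q_1$ and $b_\theta=-\partial g_\theta/\partial q_0$. I will prove the result once for a generic smooth $g$ on $(0,1)^2$ and then specialize it to each case.

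For the derivatives, $g_{OR}=q_1(1-q_0)/\{q_0(1-q_1)\}$ has $\partial_{q_1}g=OR_\eta/\{q_1(1-q_1)\}$ and $\partial_{q_0}g=-OR_\eta/\{q_0(1-q_0)\}$. For $\log OR$ the derivatives are $1/\{q_1(1-q_1)\}$ and $-1/\{q_0(1-q_0)\}$. For $RR$ they are $1/q_0$ and $-q_1/q_0^2$. For $\log RR$ they are $1/q_1$ and $-1/q_0$. These match the coefficients in the stated definitions.

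\textbf{Step 1: condition on $X$.} Iterated expectations and the positivity assumptions give $E[I(T=1)(Y-q_1)\mid X]=e(p_1-q_1)$ and $E[I(T=0)(Y-q_0)\mid X]=(1-e)(p_0-q_0)$. Hence
\[
E[\varphi_\theta(Z;\eta)\mid X]=g(q_1,q_0)+\partial_1 g\,\frac{e}{\pi}(p_1-q_1)+\partial_0 g\,\frac{1-e}{1-\pi}(p_0-q_0).
\]

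\textbf{Step 2: split off the propensity error.} Write $e/\pi=1+(e-\pi)/\pi$ and $(1-e)/(1-\pi)=1-(e-\pi)/(1-\pi)$. This gives
\[
E[\varphi_\theta\mid X]=\big\{g(q)+\nabla g(q)\cdot(p-q)\big\}+\partial_1 g(q)\frac{e-\pi}{\pi}(p_1-q_1)-\partial_0 g(q)\frac{e-\pi}{1-\pi}(p_0-q_0).
\]
The last two terms are the cross terms (a). Each is a product of $(e-\pi)$ and $(p_t-q_t)$ with coefficients that are finite under the assumed bounds.

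\textbf{Step 3: Taylor remainder.} By Taylor's theorem with integral remainder along the segment from $q$ to $p$, which stays in $(0,1)^2$ by convexity,
\[
g(p)=g(q)+\nabla g(q)\cdot(p-q)+\int_0^1(1-s)(p-q)^\top\nabla^2 g\big(q+s(p-q)\big)(p-q)\,ds.
\]
Therefore $g(q)+\nabla g(q)\cdot(p-q)=\theta(X)-T_\theta(X;\eta)$, where $T_\theta$ is the integral term (b). It is quadratic in $(p_1-q_1,p_0-q_0)$. Setting $R_\theta=-T_\theta+(\text{cross terms})$ proves the identity.

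Both pieces vanish when $q=p$. When $\pi=e$ only the cross terms vanish, so $T_\theta$ remains. It is nonzero whenever $\nabla^2 g\neq0$ along the segment from $q$ to $p$, which holds for all four nonlinear $g_\theta$. For instance, for $RR$ we have $\partial_{q_0}^2 g=2q_1/q_0^3>0$.

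\textbf{Main obstacle.} The expected difficulty is not the algebra but stating the remainder explicitly and verifying that it is genuinely second order. This means checking that the Hessian entries are bounded along the segment, which requires the pointwise bounds $0<p_t,q_t<1$. For the explicit appendix form I would instead compute exact closed forms, e.g.
\[
\frac{1}{q_0}(p_1-q_1)-\frac{q_1}{q_0^2}(p_0-q_0)-\Big(\frac{p_1}{p_0}-\frac{q_1}{q_0}\Big)=\frac{(p_0-q_0)}{q_0}\Big(\frac{p_1}{p_0}-\frac{q_1}{q_0}\Big).
\]
This is a product of errors, which makes the quadratic structure transparent. Analogous factorizations hold for $\log RR$ and the OR cases via the logit scale, or the integral form can be kept as a fallback.
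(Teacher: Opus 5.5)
Your proposal is correct and follows essentially the paper's argument: condition on $X$, rewrite $e/\pi$ and $(1-e)/(1-\pi)$ to isolate the $(e-\pi)(p_t-q_t)$ cross terms, and add and subtract the first-order Taylor expansion of $g(p_1,p_0)$ around $(q_1,q_0)$. The only differences are that you handle a generic $g$ once instead of four separate cases, and you use the integral rather than the mean-value form of the remainder. One small correction: $T_\theta\neq 0$ does not follow merely from $\nabla^2 g\neq 0$; for $RR$ with $q_0=p_0$ one gets $T_{RR}=0$ even though the Hessian is nonzero. A specific direction suffices for the ``generally remains'' claim, however: with $q_1=p_1$ and $q_0\neq p_0$, one has $T_{RR}=q_1(p_0-q_0)^2/(q_0^2p_0)>0$.
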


In particular, after specializing the generic identity in Theorem~\ref{thm:dr_transforms} to $\eta=\eta_0$ and $\eta=\hat\eta$, we obtain the exact population identity
\[
E[\varphi_\theta(Z)\mid X]=\theta(X)
\]
for $\varphi_\theta(Z)=\varphi_\theta(Z;\eta_0)$, and the plug-in identity
\[
E[\hat\varphi_\theta(Z)\mid X]=\theta(X)+R_\theta(X;\hat\eta)
\]
for $\hat\varphi_\theta(Z)=\varphi_\theta(Z;\hat\eta)$. To rigorously justify that the second-stage empirical risk minimizer is a good estimator of $\theta(X)$, we next show that the population loss $\mathcal{L}_\theta(f,\eta)$ is a Neyman-orthogonal risk in the sense of \citet{foster2023orthogonal}. Proofs of Theorem~\ref{thm:dr_transforms}, including the OR case and the analogous arguments for $\log OR$, $RR$, and $\log RR$, are deferred to Supplementary Materials Section~\ref{sec:supp_thm1_proofs}.

We now use the explicit remainder structure from Theorem~\ref{thm:dr_transforms} to establish the orthogonal learning guarantee. The following result applies to all four DR-learners. Below, $D_f$ and $D_\eta$ denote Gateaux derivatives of $\mathcal{L}_\theta(f,\eta)$ with respect to the target function $f$ and nuisance argument $\eta$, respectively, and $h$ and $k$ denote arbitrary directions in the corresponding spaces.

\begin{theorem}[Orthogonal loss and oracle excess risk bound]
\label{thm:or_orthogonal}
For each $\theta\in\{OR,\log OR,RR,\log RR\}$, define the population risk
\[
\mathcal{L}_{\theta}(f,\eta)=E\!\left[\left\{\varphi_{\theta}(Z;\eta)-f(X)\right\}^2\right].
\]
Assume the true nuisance tuple $\eta_0=(p_1,p_0,e)$ and the nuisance class $\mathcal{G}$ satisfy the boundary condition that there exists $c_0\in(0,1/2)$ with
\[
c_0\le e(X)\le 1-c_0,
\quad
c_0\le p_t(X)\le 1-c_0,
\quad
c_0\le \pi(X)\le 1-c_0
\quad\text{and}\quad
c_0\le q_t(X)\le 1-c_0
\]
for all $\eta=(q_1,q_0,\pi)\in\mathcal{G}$ and $t\in\{0,1\}$. Let $\mathcal{F}$ be a bounded class of candidate functions containing $\theta(\cdot)$ with $M:=\max\!\left\{\sup_{f\in\mathcal{F}}\|f\|_\infty,1\right\}$. Then:
\begin{enumerate}
\item[\textbf{(i)}] \textbf{(Population minimizer.)} For every nuisance value $\eta$, the minimizer of $\mathcal{L}_{\theta}(f,\eta)$ over all measurable functions $f$ is $m_{\theta,\eta}(X)=E[\varphi_{\theta}(Z;\eta)\mid X]$. In particular, since $R_\theta(X;\eta_0)=0$ by Theorem~\ref{thm:dr_transforms},
\[
\theta(X) = \arg\min_{f\;\text{measurable}}\, \mathcal{L}_{\theta}(f,\eta_0),
\]
and the oracle excess risk satisfies $\mathcal{L}_{\theta}(f,\eta_0)-\mathcal{L}_{\theta}(\theta,\eta_0)=E\!\left[(f(X)-\theta(X))^2\right]$.

\item[\textbf{(ii)}] \textbf{(Neyman orthogonality.)} The population risk $\mathcal{L}_{\theta}(f,\eta)$ satisfies the Neyman orthogonality condition in Assumption~1 of \citet{foster2023orthogonal}:
\[
D_\eta\, D_f\, \mathcal{L}_{\theta}(\theta,\eta_0)[h,\,k] = 0 \qquad \forall\;\text{target direction}\;h,\;\forall\;\text{nuisance direction}\;k.
\]
This follows from the fact that $R_\theta(X;\eta)$ is second-order in the nuisance errors for each $\theta$ (Theorem~\ref{thm:dr_transforms}), so its pathwise derivative at $\eta_0$ vanishes.

\item[\textbf{(iii)}] \textbf{(Oracle excess risk bound.)} Let $\hat\eta=(\hat p_1,\hat p_0,\hat e)$ be the first-stage nuisance estimate and define the second-stage estimator
\[
\hat\theta\in\arg\min_{f\in\mathcal{F}}\frac{1}{n}\sum_{i=1}^n\left\{\hat\varphi_{\theta}(Z_i)-f(X_i)\right\}^2.
\]
Then Theorem~3 of \citet{foster2023orthogonal} implies that, with probability at least $1-\delta$,
\[
E\!\left[\left\{\hat\theta(X)-\theta(X)\right\}^2\right]
\le
\underbrace{C_1\!\left(\frac{\delta_n^2}{M^2}+\frac{\log(\delta^{-1})}{n}\right)}_{\text{oracle term}}
+\underbrace{C_2\,\|\hat\eta-\eta_0\|_\mathcal{G}^4}_{\text{nuisance term}},
\]
where $\delta_n$ is the critical radius appearing in Theorem~3 of \citet{foster2023orthogonal}, $\|\cdot\|_\mathcal{G}:=\|\cdot\|_{L_4(\ell_2)}$ is the nuisance pre-norm, and $C_1,C_2>0$ depend on the boundary constant $c_0$, the function-class bound $M$, and the parameter $\theta$.
\end{enumerate}
\end{theorem}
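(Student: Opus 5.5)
Part (i) is the standard $L_2$ projection argument. For fixed $\eta$, write $\varphi_\theta(Z;\eta)-f(X)=\{\varphi_\theta(Z;\eta)-m_{\theta,\eta}(X)\}+\{m_{\theta,\eta}(X)-f(X)\}$ and expand the square. Conditioning on $X$ kills the cross term, which gives
\[
\mathcal{L}_\theta(f,\eta)=E[\mathrm{Var}(\varphi_\theta(Z;\eta)\mid X)]+E[\{m_{\theta,\eta}(X)-f(X)\}^2].
\]
The minimizer over measurable $f$ is therefore $m_{\theta,\eta}$, and the excess risk is exactly the $L_2(P_X)$ distance to $m_{\theta,\eta}$. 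Integrability is not an issue: under the boundary condition with $c_0$, every $\varphi_\theta(Z;\eta)$ is bounded by a constant depending only on $c_0$. Specializing to $\eta=\eta_0$ and using $R_\theta(X;\eta_0)=0$ from Theorem~\ref{thm:dr_transforms} gives $m_{\theta,\eta_0}=\theta$ and the stated oracle excess-risk identity.

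For (ii), first compute $D_f\mathcal{L}_\theta(\theta,\eta)[h]=-2E[\{\varphi_\theta(Z;\eta)-\theta(X)\}h(X)]$. Then apply the tower property to get
\[
D_f\mathcal{L}_\theta(\theta,\eta)[h]=-2E[R_\theta(X;\eta)h(X)].
\]
Next take the Gateaux derivative along $\eta_0+t k$ at $t=0$, with $k=(k_1,k_0,k_\pi)$. Differentiating under the expectation is justified by dominated convergence, since everything is bounded on the $c_0$-box. It remains to show that $\frac{d}{dt}R_\theta(X;\eta_0+tk)|_{t=0}=0$ pointwise. Here I would use the explicit remainder structure of Theorem~\ref{thm:dr_transforms}.
\begin{itemize}
\item The cross terms have the form $(e-\pi)(p_t-q_t)$ times bounded smooth factors. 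Along the path each becomes $t^2k_\pi k_t\cdot(\text{bounded})$.
\item The Taylor remainder is quadratic in $(p_t-q_t)$ with bounded coefficients on the $c_0$-box, so it is also $O(t^2)$.
\end{itemize}
Hence $R_\theta(X;\eta_0+tk)=O(t^2)$ uniformly in $X$, and the derivative at $0$ vanishes for all $h,k$. As a sanity check I would also verify this directly for one case, say $\theta=RR$. Differentiating $E[\varphi_{RR}\mid X]$ in $q_1$ gives $k_1/p_0-(e/e)k_1/p_0=0$ at $\eta_0$; the $q_0$ and $\pi$ directions cancel similarly.

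For (iii), I would verify the hypotheses of Theorem~3 of \citet{foster2023orthogonal} and then apply it.
\begin{itemize}
\item Assumption~1 (orthogonality) is (ii).
\item Assumption~2 (first-order optimality) holds because $\theta$ minimizes $\mathcal{L}_\theta(\cdot,\eta_0)$ over measurable functions and $\theta\in\mathcal{F}$, so $D_f\mathcal{L}_\theta(\theta,\eta_0)[f-\theta]=0$.
\item Strong convexity holds with constant $2$ by the exact quadratic identity in (i).
\item The smoothness condition (Assumption~3) requires a bound on the second derivative of $\mathcal{L}_\theta$ in $\eta$ along $\bar\eta=\eta_0+s(\eta-\eta_0)$. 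The second derivative equals $-2E[\partial_s^2R_\theta(X;\bar\eta)\,h(X)]$ plus terms involving second derivatives of $\varphi_\theta$.
\end{itemize}

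I expect the smoothness verification to be the main obstacle. It requires bounding derivatives of the pseudo-outcome in $(q_1,q_0,\pi)$ uniformly on the $c_0$-box, and expressing the bound as $\beta\|\eta-\eta_0\|_{L_4(\ell_2)}^2\|h\|_{L_2}$ via Cauchy--Schwarz. The plan is as follows.
\begin{itemize}
\item Show that the second partial derivatives of $\varphi_\theta$ in $(q_1,q_0,\pi)$ are polynomials in $1/q_t$, $1/(1-q_t)$, $1/\pi$, $1/(1-\pi)$, $OR_\eta$ and $Y$. These are all bounded by constants in $c_0$; for the $OR$ case $OR_\eta\le (1-c_0)^2/c_0^2$, and the $\log$ cases are similar.
\item Bound the quadratic form in $\eta-\eta_0$ by $C\|\eta-\eta_0\|_{\ell_2}^2$ pointwise.
\item Apply Cauchy--Schwarz, which produces the $L_4$ norm.
\end{itemize}
Boundedness of $\mathcal{F}$ and of the pseudo-outcomes gives the bounded-loss conditions needed for the critical-radius localization. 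Theorem~3 then yields $\mathcal{L}_\theta(\hat\theta,\eta_0)-\mathcal{L}_\theta(\theta,\eta_0)\lesssim\delta_n^2/M^2+\log(\delta^{-1})/n+\|\hat\eta-\eta_0\|_{\mathcal{G}}^4$. Converting the left-hand side to $E[(\hat\theta-\theta)^2]$ via (i) finishes the proof, with constants depending on $c_0$, $M$ and $\theta$.
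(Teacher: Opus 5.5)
Your proposal is correct and follows essentially the same route as the paper. It uses the $L_2$-projection decomposition for (i), and for (ii) the reduction $D_f\mathcal{L}_\theta(\theta,\eta)[h]=-2E[R_\theta(X;\eta)h(X)]$ together with the second-order structure of $R_\theta$ from Theorem~\ref{thm:dr_transforms}. For (iii) it verifies Assumptions~1--4 of \citet{foster2023orthogonal} and a Lipschitz bound through uniform boundedness of $\varphi_\theta$ and its second nuisance derivatives on the $c_0$-box, with Cauchy--Schwarz producing the $L_4(\ell_2)$ norm. Your $R_\theta(X;\eta_0+tk)=O(t^2)$ argument for orthogonality is, if anything, slightly cleaner than the paper's factor-counting, since it does not implicitly differentiate through the mean-value point $\bar q(X)$.
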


The proof is deferred to Supplementary Materials Section~\ref{sec:supp_thm2_proof}.

\medskip
In words, Theorem~\ref{thm:or_orthogonal} shows that the squared loss constructed from each orthogonal pseudo-outcome $\varphi_\theta$ is a Neyman-orthogonal loss in the sense of \citet{foster2023orthogonal}. The final excess risk of the two-stage learner therefore decomposes into two pieces: (a)~the oracle term, which corresponds to the second-stage rate one would obtain if the true nuisance $\eta_0$ were known; and (b)~the nuisance term, which captures first-stage nuisance estimation error through $\|\hat\eta-\eta_0\|_\mathcal{G}^4$. This fourth-order dependence is the $r=0$ specialization of the orthogonal-loss bounds in \citet{foster2023orthogonal} and, in our plug-in empirical risk minimization setting, is instantiated through Theorem~3 of \citet{foster2023orthogonal}. Thus the two-stage learner inherits the second-stage oracle rate whenever the nuisance term is of smaller order than the oracle term.

\subsection{R-learner variants}
In addition to the DR-learner approach described above, we also propose R-learner variants for all four target parameters. The R-learner differs from the DR-learner only through the second-stage loss, where observations are reweighted by the propensity-overlap weights \citep{nie2021quasi,morzywolek2025weighted}
\[
w(X)=\hat e(X)\{1-\hat e(X)\}.
\]

We implement four R-learner estimators:
\begin{itemize}
  \item \texttt{R-OR}: R-learner for $OR(X)$
  \item \texttt{R-LOR}: R-learner for $\log(OR(X))$
  \item \texttt{R-RR}: R-learner for $RR(X)$
  \item \texttt{R-LRR}: R-learner for $\log(RR(X))$
\end{itemize}

Because the R-learner uses the same pseudo-outcomes $\hat\varphi_\theta(Z)$ as the DR-learner, the conditional-mean identities in Theorem~\ref{thm:dr_transforms} carry over unchanged. The distinction arises only in the second-stage empirical risk: Theorem~\ref{thm:or_orthogonal} studies the unweighted squared loss used by the DR-learner, whereas the R-learner corresponds to a propensity-overlap-weighted loss. In this sense, the R-learner may be viewed as a weighted DR-learner, consistent with the weighted orthogonal learner framework of \citet{morzywolek2025weighted}. The corresponding R-learner extension is analogous, with the unweighted squared loss in Theorem~\ref{thm:or_orthogonal} replaced by a propensity-overlap-weighted loss.

\section{Numerical Studies} \label{sec:numstu}
We assess the proposed estimators through a comprehensive Monte Carlo simulation study designed to evaluate performance across a broad range of data-generating mechanisms rather than under a single bespoke setting. The rationale for this approach is that any particular DGP may favor certain estimators over others, and that a meaningful comparison should span a representative portion of the nonparametric model. This allows us to make statements about typical estimator performance rather than best-case or worst-case behavior under a hand-picked scenario.

\subsection{Data-Generating Process} 
We adopt a data-generating process (DGP) by \citet{rudolph2023comparing}. This DGP generates synthetic datasets that mimic the complexity characterized by nonlinear relationships, covariate interactions, and treatment effect heterogeneity. It employs a Bayesian-inspired approach, sampling from a nonparametric model using minimally informative priors to reflect limited prior knowledge of the data-generating mechanism. It generates the joint distribution $P$ sequentially: first sampling the covariate distribution $P(X)$, then the treatment mechanism $P(T=1|X)$, and finally the outcome mechanism $P(Y=1|T,X)$. The joint distribution is given by $P =  P(X)\cdot P(T|X) \cdot P(Y|T,X)$. Because the algorithms used to generate the data-generating processes are computationally demanding, we focus on a simple setting with a few covariates, which is sufficient to illustrate the properties of the estimators.

The DGP generates data for a binary treatment $T$, binary outcome $Y$, five binary covariates ($X_{\text{bin}}$), and one numerical covariate ($X_{\text{num}}$) with cardinality 100. It is controlled by the following hyperparameters:

\begin{itemize}
  \item \texttt{n\_bin = 5}: Number of binary covariates, capturing discrete confounding factors.
  \item \texttt{n\_num = 1}: Number of numerical covariates, allowing continuous confounding effects.
  \item \texttt{inter\_order} $\in \{1, 2, 3\}$: Order of interactions among binary covariates, controlling complexity of covariate relationships.
  \item \texttt{tx\_inter} $\in \{TRUE, FALSE\}$: Controls whether treatment-covariate interactions are included in the outcome mechanism. When \texttt{TRUE}, the coefficient of the treatment depends on the covariates, allowing the conditional effects to vary across covariate levels.
  \item \texttt{conf\_bias}: Confounding bias, sampled from $U(-0.5, 0.5)$, to simulate realistic confounding scenarios.
  \item \texttt{eta}: Controls nonlinearity, sampled from $U(0.1, 30)$, allowing flexible functional forms.
  \item \texttt{rho}: Controls smoothness, sampled from $U(0.1, 30)$, ensuring realistic covariate effects.
  \item \texttt{pos\_bound = 1000}: Ensures treatment probabilities are $\geq 0.001$, preventing positivity violations.
  \item \texttt{npoints = 100}: Number of points for the numerical covariate, providing sufficient granularity.
  \item \texttt{tol = 0.01}: Tolerance for confounding bias constraints, ensuring feasible distributions.
\end{itemize}

The DGP operates as follows:

\begin{enumerate}
  \item \textbf{Covariate Distribution}: The joint distribution $P(X)$ is sampled using a Dirichlet distribution over the covariate space ($\{0,1\}^5 \times [0,1]$). This approach, common for categorical data \citep{dunson2009nonparametric}, generates diverse covariate patterns, mimicking real-world heterogeneity and correlations.
  \item \textbf{Treatment Mechanism}: The propensity score $P(T=1|X)$ is modeled as:
    \[
    P(T=1|X) = \sum_{l \in L} \left\{ \alpha_{0,l} + \alpha_{1,l} f_l(X_{\text{num}}) \right\} \prod_{j=1}^5 X_{\text{bin},j}^{l_j},
    \]
    where $L$ indexes interactions up to order \texttt{inter\_order}, and $f_l(X_{\text{num}})$ are Gaussian processes with covariance $K(x_i, x_j) = \eta \exp(-\rho \lVert x_i - x_j \rVert^2)$. Coefficients $\alpha_{0,l}, \alpha_{1,l}$ are sampled uniformly from a polytope defined by constraints ensuring $0 < P(T=1|X) < 1$ and positivity ($\max_{t,x} \frac{P(T=t)}{P(T=t|X=x)} \leq 1000$). This structure allows nonlinear and interactive treatment assignment, critical for testing estimator robustness to complex confounding.
  \item \textbf{Outcome Mechanism}: The outcome probability $P(Y=1|T,X)$ is modeled as:
\[
P(Y=1|T,X) = T \sum_{l \in L} \big\{ \lambda_{0,l} + \lambda_{1,l} \, h_l(X_{\text{num}}) \big\} \tilde{X}_l
+ \sum_{l \in L} \big\{ \beta_{0,l} + \beta_{1,l} \, w_l(X_{\text{num}}) \big\} \tilde{X}_l,
\]
when the treatment-covariate interactions are present (\texttt{tx\_inter} = TRUE), or without the $T$-dependent interaction term otherwise. Here,
\[
\tilde{X}_l = \prod_{j=1}^{5} X_{\text{bin},j}^{l_j}
\quad \text{is the interaction term of binary covariates.}
\]
Functions $h_l, w_l$ are Gaussian processes, and coefficients are sampled to satisfy confounding bias constraints. Outcome probabilities are constrained to $[0.05, 0.95]$ to prevent extreme values that could destabilize OR and RR estimates.
  \item \textbf{Confounding Bias}: The DGP ensures the confounding bias $C$, defined as:
    \[
    C = \sum_t (2t-1) \sum_x \frac{P(X=x)}{P(T=t)} \{P(T=t|X=x) - P(T=t)\} P(Y=1|T=t,X=x),
    \]
    lies within feasible bounds ($C_{\text{low}} \leq \text{conf\_bias} \leq C_{\text{high}}$). If not, the draw is rejected, ensuring realistic confounding scenarios.
  \item \textbf{Output}: Each distribution is saved as a CSV file containing covariates, treatment, outcome, and probabilities ($P(X)$, $P(T|X)$, $P(Y|T,X)$, $OR(X)$, $RR(X)$).
\end{enumerate}

We generate 100 distributions for each combination of \texttt{inter\_order} $\in \{1, 2, 3\}$ and \texttt{tx\_inter} $\in \{\text{True}, \text{False}\}$, yielding 600 distributions. This large number of distributions ensures robust evaluation across diverse DGPs, capturing variations in complexity critical for comparing estimators of conditional ATE, OR, and RR. Some covariate distributions may result in no feasible solution for the desired confounding bias, leading to frequent failures of the DGP. Specifically, if the confounding bias constraints cannot be satisfied after 1,000 iterations, the draw is rejected, indicating infeasible initial conditions.

For each target parameter separately, we compute the CV of the true conditional effect within each generated DGP. Since each DGP is generated under a specific $(\texttt{inter\_order}, \texttt{tx\_inter})$ combination, the HTE label is assigned separately at the DGP level. Within each target parameter, DGPs with CV $\geq 0.2$ are labeled HTE High, and those with CV $< 0.2$ are labeled HTE Low.

\subsection{Monte Carlo Simulation Details}
\label{sec:mc_details}
We conduct large-scale Monte Carlo simulations to compare estimators, following the framework of \citet{rudolph2023comparing}. For each of the 600 generated distributions:
\begin{itemize}
  \item We sampled $B=100$ datasets of sizes $n\in\{200, 500, 1000, 2000\}$ with replacement, using probability weights $P(X)$. If a sampled dataset contained only one observed level of $T$ or $Y$, we discarded it and resampled.
  \item We analyzed results for each hyperparameter combination of \texttt{inter\_order} $\in \{1, 2, 3\}$, HTE level $\in\{\text{High},\text{Low}\}$ , and sample size $n\in\{200, 500, 1000, 2000\}$.
\end{itemize}

\subsubsection{SuperLearner Configuration}
Our library for all \texttt{SuperLearner} fits includes:
\begin{itemize}
  \item Generalized Linear Models (GLM): Logistic regression with main effects.
  \item Light Gradient Boosting Machine (LightGBM) \citep{ke2017lightgbm}: A tree-based method for nonlinear relationships.
  \item Bayesian Additive Regression Trees (BART) \citep{chipman2010bart}: A Bayesian tree ensemble for flexible modeling.
  \item Multivariate Adaptive Regression Splines (Earth) \citep{friedman1991mars}: A spline-based method for capturing nonlinearities.
\end{itemize}
\texttt{SuperLearner} uses 5-fold cross-validation internally to select weights for combining these candidate learners; this cross-validation is entirely within the \texttt{SuperLearner} fitting procedure and is distinct from the sample-splitting cross-fitting used by the DR- and R-learner estimators. In our implementation, each sampled dataset is split into two complementary halves. For each split, nuisance models are fit on one half, the other half is used to construct the pseudo-outcomes and fit the second-stage regression, and the final predictions are obtained by averaging the two swapped-split fits.

\subsubsection{Truncation of estimates}
In simulation, we truncate estimated outcome probabilities and propensity scores using fixed bounds chosen with reference to the known support and overlap restrictions of the DGP in order to avoid numerical instability. Specifically:
\begin{itemize}
  \item \textbf{Outcome probabilities}: All estimated probabilities $\hat p_t(x)$ are truncated to $[0.05, 0.95]$, matching the outcome-probability support used in the DGP. This applies uniformly to all estimators that produce probability estimates, including \texttt{LR}, \texttt{LR-T}, \texttt{SL}, \texttt{SL-T}, and the first-stage nuisance estimates in all DR- and R-learner variants.
  \item \textbf{Propensity scores}: All estimated propensity scores $\hat e(x)$ are truncated to $[0.01, 0.99]$. This is a conservative overlap truncation used for numerical stability in the simulation study; it does not require knowledge of the true propensity-score function. This applies to all estimators that use propensity scores: \texttt{DR-P}, \texttt{DR-CATE}, \texttt{DR-OR}, \texttt{DR-LOR}, \texttt{DR-RR}, \texttt{DR-LRR}, \texttt{R-CATE}, \texttt{R-OR}, \texttt{R-LOR}, \texttt{R-RR}, \texttt{R-LRR}.  
  \item \textbf{Two-stage models}: For all two-stage estimators, truncation is first applied to the nuisance outcome and propensity estimates used to construct the second-stage pseudo-outcomes. For \texttt{DR-P}, the second stage refines treatment-arm probabilities, so the resulting estimates $\hat p_t^*(x)$ are truncated again to $[0.05, 0.95]$ before forming the target contrast. For the direct-effect DR- and R-learner estimators, the second stage targets the effect itself rather than treatment-arm probabilities, so the final second-stage predictions are truncated to the parameter-specific feasible ranges implied by the probability clamps.

\end{itemize}

In preliminary experiments, the ratio-scale estimators (\texttt{DR-OR}, \texttt{DR-RR}, \texttt{R-OR}, \texttt{R-RR}) produced similar but modestly inferior results compared to their log-scale counterparts. We therefore focus exclusively on the log-scale variants (\texttt{DR-LOR}, \texttt{DR-LRR}, \texttt{R-LOR}, \texttt{R-LRR}) in all subsequent results.

\subsection{Metrics}
To compare estimators fairly across different effect scales, we adopt scale-appropriate transformations. For CATE, we compare on the original probability difference scale, where $\theta(x) = p_1(x) - p_0(x)$. For the conditional OR and RR, we compare on the log scale, where $\theta(x) = \log \text{OR}(x)$ and $\theta(x) = \log \text{RR}(x)$, respectively. The log transformation symmetrizes the ratio metrics and provides a more meaningful comparison as ratios are naturally multiplicative. Letting $\hat \theta_B(x)=\frac{1}{B}\sum_{b=1}^{B}\hat\theta_b(x)$ denote the average of the estimator across the $B=100$ simulated datasets, the metrics are:
\begin{itemize}
  \item \text{Integrated Bias$^2$ (iBias$^2$)}: $\sum_x P(X=x) \cdot \bigl(\hat \theta_B(x) - \theta(x)\bigr)^2$.
  \item \text{Integrated Variance (iVariance)}: $\sum_x P(X=x) \cdot \frac{1}{B}\sum_{b=1}^{B}\bigl(\hat\theta_b(x) - \hat \theta_B(x)\bigr)^2$.
  \item \text{Integrated Mean Squared Error (iMSE)}: $\sum_x P(X=x) \cdot \frac{1}{B}\sum_{b=1}^B\bigl(\hat\theta_b(x)-\theta(x)\bigr)^2$.
\end{itemize}

\section{Results}
\label{sec:results}

The estimators compared are: S-Learner based on logistic regression (\texttt{LR}), T-Learner based on logistic regression (\texttt{LR-T}), S-Learner based on SuperLearner (\texttt{SL}), T-Learner based on SuperLearner (\texttt{SL-T}), doubly robust plug-in (\texttt{DR-P}), DR-learner variants (\texttt{DR-CATE}, \texttt{DR-LOR}, \texttt{DR-LRR}), and R-learner variants (\texttt{R-CATE}, \texttt{R-LOR}, \texttt{R-LRR}). 

Every numerical value reported in the text is then the median integrated mean squared error (iMSE), integrated squared bias (iBias$^2$), and integrated variance (iVariance) across the simulated distributions in the indicated interaction-order, sample-size, and HTE stratum, computed on the original scale for ATE and on the log scale for OR and RR. The figures show the full distribution of these DGP-specific summaries, while the corresponding tables report the medians and interquartile ranges from which the numerical comparisons in the text are taken. More detailed results are in Supplementary Materials Section~\ref{sec:supp_materials}. We organize our discussion by the joint configuration of HTE level and sample size.

Figures ~\ref{fig:COR_inter3_2000} - ~\ref{fig:COR_inter1_200} and ~\ref{fig:COR_inter3_1000} - ~\ref{fig:CATE_inter1_200} display the empirical reliability function (complementary CDF) of the metric across the generated DGPs, stratified by HTE level. For a given estimator, the curve plots $P(\text{metric} > t)$ on the y-axis against the metric value $t$ on the x-axis. A curve that drops steeply to zero at low values of $t$ indicates an estimator that performs well across most DGPs. Conversely, a curve with a long right tail indicates that the estimator has poor worst-case performance in some DGPs. An estimator stochastically dominates another when its error distribution curve lies entirely to the left, indicating a consistently smaller metric across all quantiles.

\subsection{Conditional OR Performance}

\subsubsection{HTE high, large samples}
At interaction order 3 with $n=2000$, Table~\ref{tab:cor_3_2000_hte_h} reports that \texttt{DR-LOR} has median integrated mean squared error 0.393 across simulated distributions, compared with 0.744 for \texttt{LR}. This is one of the clearest settings in which the proposed DR-learner estimators outperform the parametric alternatives. The decomposition of the error is informative: for \texttt{LR}, the median integrated squared bias is 0.735, so nearly all of the median integrated mean squared error is explained by bias, which is consistent with model misspecification; for \texttt{DR-LOR}, the median integrated mean squared error is more evenly split between integrated squared bias (0.216) and integrated variance (0.162). \texttt{DR-P} is also close in median integrated mean squared error (0.451) and attains the smallest median integrated squared bias (0.140), although its median integrated variance is larger than that of \texttt{DR-LOR}; Figure~\ref{fig:COR_inter3_2000} also suggests better right-tail behavior for \texttt{DR-P}. \texttt{SL} and \texttt{SL-T} are also competitive: among the two, \texttt{SL} has smaller integrated variance, whereas \texttt{SL-T} has smaller integrated squared bias.

At interaction order 1 with $n=2000$ (Table~\ref{tab:cor_1_2000_hte_h}; Figure~\ref{fig:COR_inter1_2000}), several estimators perform similarly in terms of median integrated mean squared error: \texttt{SL}, \texttt{LR-T}, and \texttt{SL-T} attain 0.152, 0.174, and 0.187, while \texttt{LR}, \texttt{DR-P}, and \texttt{DR-LOR} are all close to 0.24. Their error decompositions differ, however. \texttt{LR} reaches that median mainly through extremely small integrated variance (0.015) despite a much larger integrated squared bias (0.215), whereas \texttt{DR-P} and \texttt{DR-LOR} trade somewhat higher variance for substantially less bias. \texttt{R-LOR} attains low median integrated squared bias (0.039) but very large integrated variance (0.564), which leaves it clearly behind overall. This pattern is consistent with the R-learner weighting $w(X)=\hat{e}(X)\{1-\hat{e}(X)\}$, which gives the largest weight to observations with estimated propensity scores near 0.5 and downweights observations with estimated propensities near 0 or 1. When the strongest heterogeneity lies in those lower-weight regions, that reweighting can hurt overall performance.

\subsubsection{HTE low, large samples}
At interaction order 3 with $n=2000$, low heterogeneity favors the simpler learners. Table~\ref{tab:cor_3_2000_hte_l} shows that \texttt{SL} has the smallest median integrated mean squared error (0.046), followed by \texttt{LR} (0.085) and \texttt{DR-LOR} (0.139). The advantage of \texttt{SL} comes from keeping both median integrated squared bias (0.013) and integrated variance (0.030) small. \texttt{LR} does attain a smaller first quartile of the integrated mean squared error distribution than \texttt{SL} (0.023 versus 0.030), suggesting stronger performance on the easiest DGPs in this stratum. However, its median integrated mean squared error is still clearly larger than that of \texttt{SL}, and Figure~\ref{fig:COR_inter3_2000} shows a heavier right tail, indicating less stable performance on the more difficult DGPs within this stratum.

At interaction order 1 with $n=2000$ (Table~\ref{tab:cor_1_2000_hte_l}; Figure~\ref{fig:COR_inter1_2000}), \texttt{LR} and \texttt{SL} are nearly tied in median integrated mean squared error (0.024 versus 0.027). This near-tie is also unsurprising given our implementation: the \texttt{SuperLearner} library includes a main-effects logistic regression, so in relatively simple settings the ensemble can place substantial weight on a model very close to \texttt{LR}, while still retaining flexibility when the underlying signal is more nonlinear or interaction-heavy. Both estimators have median integrated squared bias equal to 0.003, so the small advantage of \texttt{LR} is mainly a variance story. However, \texttt{LR} again shows a somewhat heavier right tail than \texttt{SL}.

\subsubsection{HTE high, small samples}
In settings with high treatment effect heterogeneity and limited sample sizes, integrated variance is the dominant component of error for all nonparametric estimators. This is clearly seen at interaction order 3 with $n=200$ (Table~\ref{tab:cor_3_200_hte_h}; Figure~\ref{fig:COR_inter3_200}), where the ranking by median integrated mean squared error is \texttt{SL} (0.691), \texttt{LR} (0.884), \texttt{SL-T} (1.060), \texttt{LR-T} (1.446), \texttt{DR-LOR} (1.801), \texttt{DR-P} (2.011), and \texttt{R-LOR} (2.395). That ordering closely matches the ranking by median integrated variance. A similar pattern emerges at interaction order 1 with $n=200$ (Table~\ref{tab:cor_1_200_hte_h}; Figure~\ref{fig:COR_inter1_200}), except that \texttt{DR-P} slightly outperforms \texttt{DR-LOR}. These results show that in data-sparse regimes, the stability of an estimator can outweigh its asymptotic bias-reduction properties, therefore masking any potential gains from orthogonal/debiased learners.

\subsubsection{HTE low, small samples}
The overall pattern is similar to the HTE-high, small-sample case, but all methods improve because the target is easier when heterogeneity is weaker. At interaction order 3 with $n=200$, Table~\ref{tab:cor_3_200_hte_l} shows that \texttt{SL} and \texttt{LR} again have the smallest median integrated mean squared errors (0.147 and 0.202), well ahead of \texttt{DR-LOR} (1.455) and \texttt{DR-P} (1.743). At interaction order 1 with $n=200$ (Table~\ref{tab:cor_1_200_hte_l}; Figure~\ref{fig:COR_inter1_200}), the ranking is similar: \texttt{SL} (0.134) and \texttt{LR} (0.176) remain best, followed by \texttt{SL-T}, \texttt{LR-T}, \texttt{DR-P}, \texttt{DR-LOR}, and \texttt{R-LOR}. Relative to the HTE-high case, the main difference is that the median errors for \texttt{SL} and \texttt{LR} drop substantially, reflecting the weaker heterogeneity, but the main message is unchanged: with only $n=200$, variance still dominates and the orthogonal learners do not yet recover their large-sample advantage.

\subsection{Conditional RR Performance}
The pattern for the conditional RR broadly mirrors that for the conditional OR, but the gains from the debiased ratio learners are somewhat more modest. In the most complex large-sample setting, interaction order 3 with $n=2000$ and HTE High, Table~\ref{tab:crr_3_2000_hte_h} shows median integrated mean squared errors of 0.144 for \texttt{SL}, 0.148 for \texttt{SL-T}, 0.153 for \texttt{DR-LRR}, and 0.174 for \texttt{DR-P}, compared with 0.213 for \texttt{LR}. Within the DR-family estimators, directly targeting the log risk ratio lowers both median iMSE and integrated variance relative to \texttt{DR-P} (0.153 versus 0.174 and 0.070 versus 0.115), although \texttt{DR-P} retains the smaller median integrated squared bias (0.047 versus 0.080). At the same time, \texttt{SL} and \texttt{SL-T} remain slightly ahead of \texttt{DR-LRR} in this stratum, suggesting that conditional RR may be somewhat easier to learn than conditional OR in these DGPs, so the incremental benefit of the extra debiasing step is smaller.

That advantage disappears as the setting becomes easier or the sample size becomes smaller. At interaction order 1 with $n=2000$ and HTE High, Table~\ref{tab:crr_1_2000_hte_h} shows that \texttt{DR-P} slightly outperforms \texttt{DR-LRR} in median iMSE (0.095 versus 0.113). With weak heterogeneity, the simpler learners are clearly preferable: at interaction order 1 with $n=2000$ and HTE Low, Table~\ref{tab:crr_1_2000_hte_l} reports median iMSEs of 0.010 for \texttt{LR} and 0.011 for \texttt{SL}, compared with 0.081 for \texttt{DR-P} and 0.094 for \texttt{DR-LRR}. With only $n=200$ and interaction order 3, the variance penalty for the debiased estimators becomes much larger; Table~\ref{tab:crr_3_200_hte_h} gives median iMSEs of 0.214 for \texttt{SL} and 0.252 for \texttt{LR}, versus 0.647 for \texttt{DR-LRR} and 0.725 for \texttt{DR-P}. Overall, the RR results reinforce the same practical pattern as the OR results: the debiased ratio learners are most attractive in the genuinely complex, data-rich regimes, whereas \texttt{SL} and \texttt{LR} remain preferable once the problem becomes easier.

\subsection{Conditional ATE Performance}
The conditional ATE results mainly serve as a benchmark against the existing DR-learner literature, and they show the same broad pattern with smaller absolute differences across estimators. In complex settings with substantial heterogeneity and adequate sample size, \texttt{DR-CATE} is often the best or among the best estimators. At interaction order 3 with $n=2000$ and HTE High, Table~\ref{tab:cate_3_2000_hte_h} reports median iMSEs of 0.012 for \texttt{DR-CATE}, 0.014 for \texttt{DR-P}, 0.017 for \texttt{R-CATE}, and 0.013 for \texttt{SL}. At interaction order 2 with $n=2000$ and HTE High, Table~\ref{tab:cate_2_2000_hte_h} shows the same ranking within the doubly robust family, with 0.009 for \texttt{DR-CATE} and 0.011 for \texttt{DR-P}. These results are consistent with the usual DR-learner intuition that directly learning the target contrast can help when treatment effects are genuinely heterogeneous, although the improvement over other flexible learners is modest in absolute terms.

As with RR and OR, that advantage disappears in easier or more data-limited regimes. At interaction order 3 with $n=2000$ and HTE Low, Table~\ref{tab:cate_3_2000_hte_l} shows that \texttt{SL} has the smallest median iMSE (0.001), while \texttt{LR} and \texttt{DR-CATE} are tied at 0.004 and \texttt{DR-P} reaches 0.010. With only $n=200$ and interaction order 3, the simpler learners again dominate: Table~\ref{tab:cate_3_200_hte_l} gives 0.006 for \texttt{SL}, 0.008 for \texttt{LR}, 0.045 for \texttt{DR-CATE}, and 0.058 for \texttt{DR-P}; in the corresponding HTE-high stratum, Table~\ref{tab:cate_3_200_hte_h} shows 0.026 for \texttt{SL} versus 0.060 for \texttt{DR-CATE}. Thus the CATE benchmark leads to the same practical takeaway as the OR and RR analyses: direct-targeting orthogonalization helps most in complex, data-rich settings, while \texttt{SL} or \texttt{LR} are often enough in easier regimes.

\subsection{Practical Recommendations}
\label{sec:practical_recs}
Based on our simulation results, we offer the following guidance for practitioners:

\begin{itemize}
  \item \textbf{Complex settings} (high HTE, complex outcome model, large sample size): The DR-learner estimators and \texttt{SL} clearly improve on logistic regression. Within the DR-family estimators, the main pattern is a bias-variance tradeoff: the direct-targeting estimators (\texttt{DR-LOR}/\texttt{DR-LRR}/\texttt{DR-CATE}) often reduce iVariance, and in the more complex large-sample settings this often translates into lower median iMSE, whereas \texttt{DR-P} often achieves lower iBias$^2$ and sometimes better right-tail behavior. The R-learner variants are not recommended because of their consistently high variance. If one wants an estimator built from the orthogonal pseudo-outcomes developed here, prefer a DR-learner estimator over \texttt{SL}. Otherwise, \texttt{SL} is usually a strong default.
  \item \textbf{Simpler or data-limited settings} (low HTE, low DGP complexity, or small sample size): \texttt{SL} and \texttt{LR} usually achieve the smallest iMSE. \texttt{LR} offers interpretability and lower computational cost, whereas \texttt{SL} tends to provide better worst-case performance across the full range of DGPs.
  \item \textbf{Intermediate or uncertain settings}: \texttt{SL} provides a robust default.
\end{itemize}

\begin{figure}[H]
\centering
\includegraphics[width=1\textwidth]{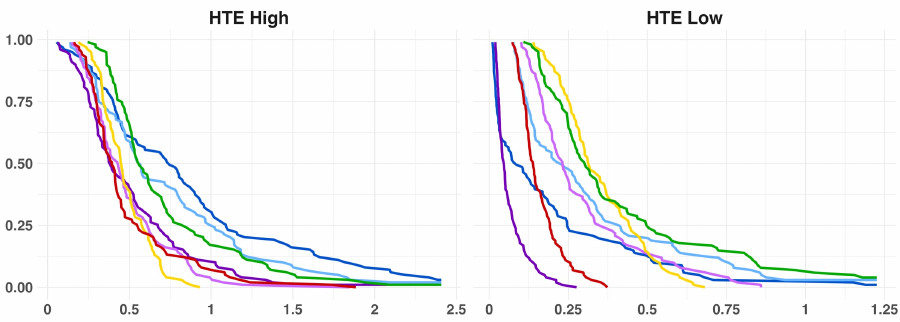}
\caption*{\footnotesize (a) iMSE}

\vspace{0.5em}
\includegraphics[width=1\textwidth]{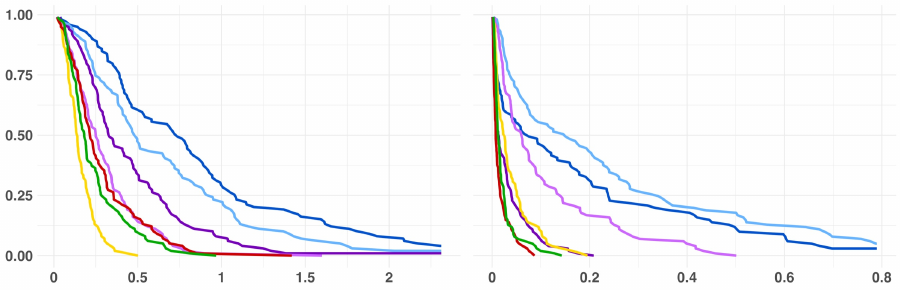}
\caption*{\footnotesize (b) iBias$^2$}

\vspace{0.5em}
\includegraphics[width=1\textwidth]{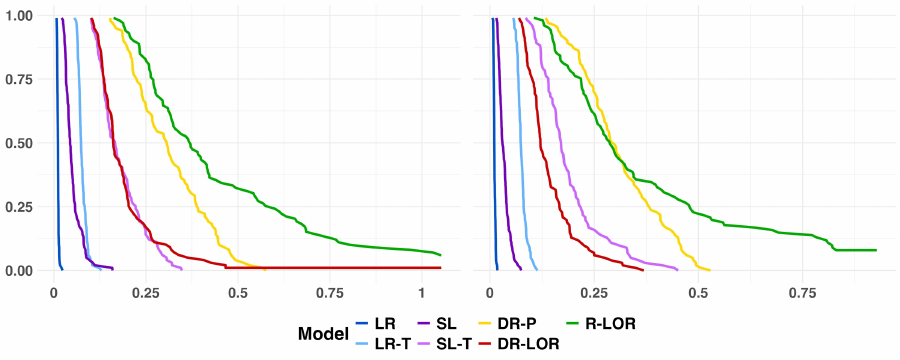}
\caption*{\footnotesize (c) iVariance}

\caption{Conditional OR: interaction order 3, sample size 2000}
\label{fig:COR_inter3_2000}
\end{figure}

\begin{figure}[H]
\centering
\includegraphics[width=1\textwidth]{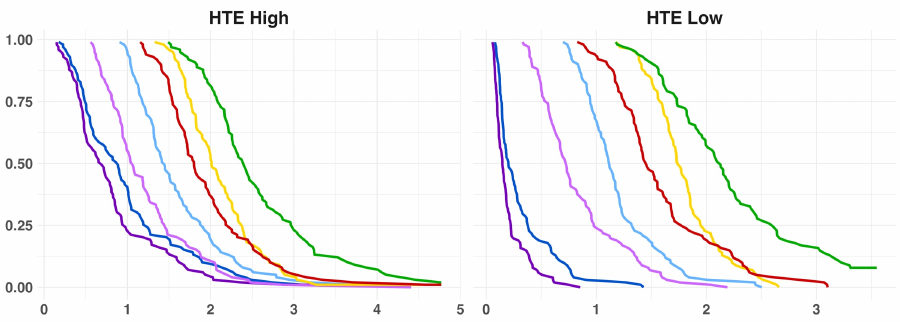}
\caption*{\footnotesize (a) iMSE}

\vspace{0.5em}
\includegraphics[width=1\textwidth]{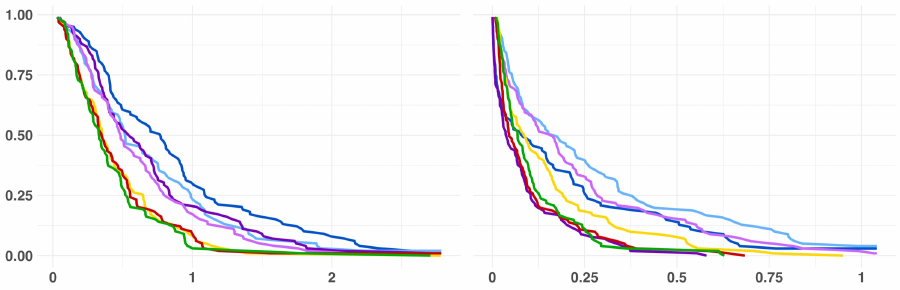}
\caption*{\footnotesize (b) iBias$^2$}

\vspace{0.5em}
\includegraphics[width=1\textwidth]{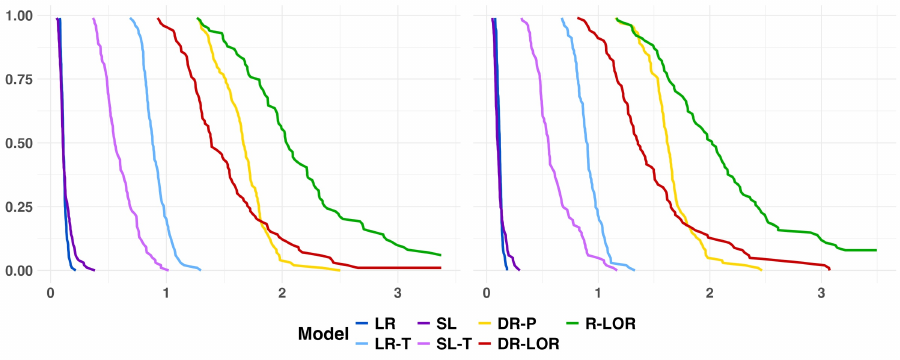}
\caption*{\footnotesize (c) iVariance}

\caption{Conditional OR: interaction order 3, sample size 200}
\label{fig:COR_inter3_200}
\end{figure}

\begin{figure}[H]
\centering
\includegraphics[width=1\textwidth]{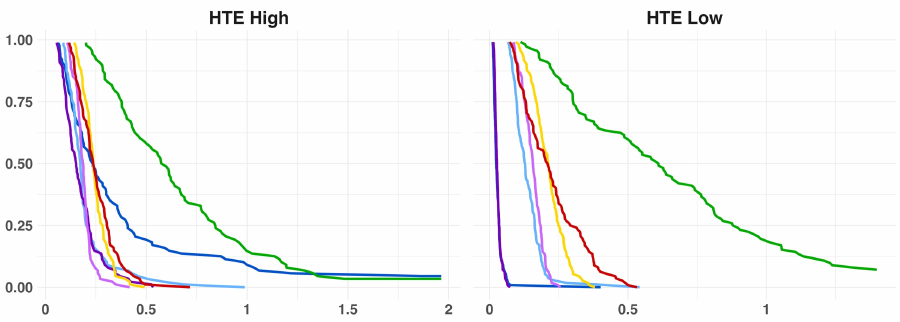}
\caption*{\footnotesize (a) iMSE}

\vspace{0.5em}
\includegraphics[width=1\textwidth]{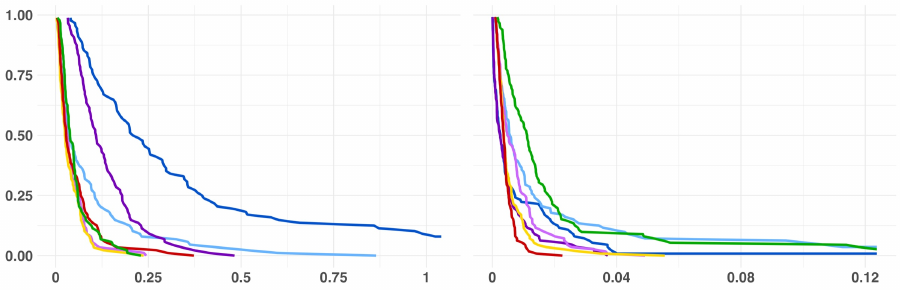}
\caption*{\footnotesize (b) iBias$^2$}

\vspace{0.5em}
\includegraphics[width=1\textwidth]{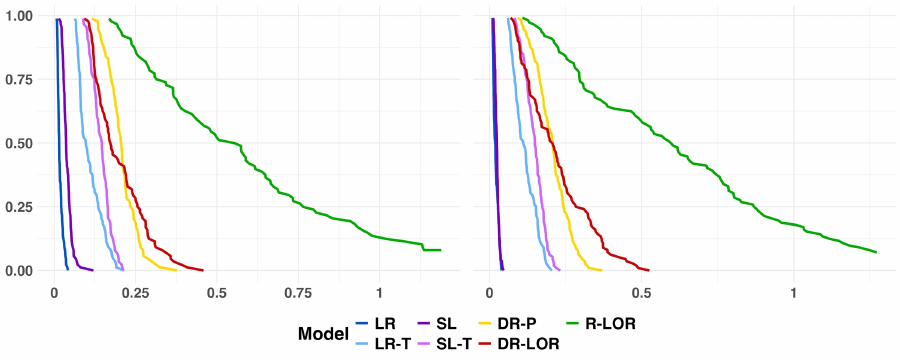}
\caption*{\footnotesize (c) iVariance}

\caption{Conditional OR: interaction order 1, sample size 2000}
\label{fig:COR_inter1_2000}
\end{figure}

\begin{figure}[H]
\centering
\includegraphics[width=1\textwidth]{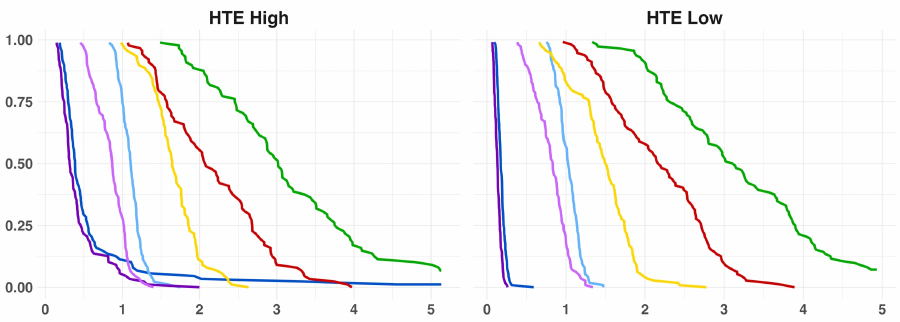}
\caption*{\footnotesize (a) iMSE}

\vspace{0.5em}
\includegraphics[width=1\textwidth]{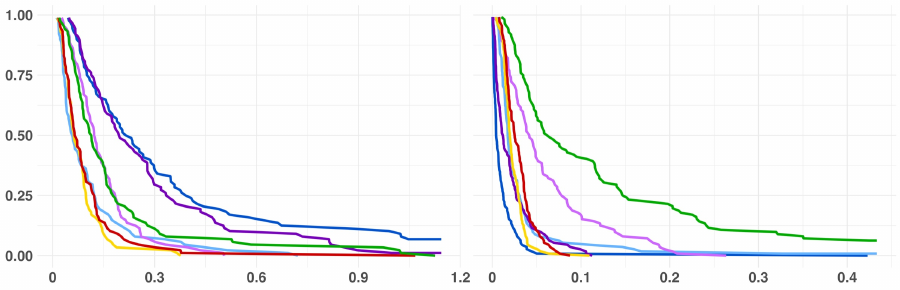}
\caption*{\footnotesize (b) iBias$^2$}

\vspace{0.5em}
\includegraphics[width=1\textwidth]{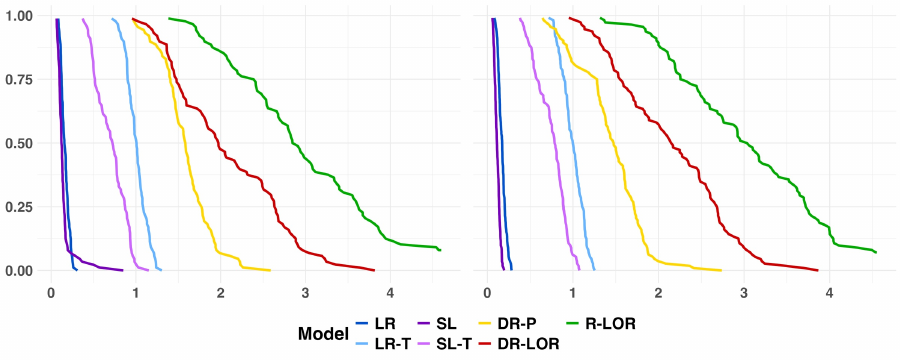}
\caption*{\footnotesize (c) iVariance}

\caption{Conditional OR: interaction order 1, sample size 200}
\label{fig:COR_inter1_200}
\end{figure}

\section{Illustrative Application}
\label{sec:illustrative}

We apply our estimation methods to real-world data from the National Health and Nutrition Examination Survey (NHANES) \citep{NHANES} to demonstrate their practical utility for detecting heterogeneous treatment effects in observational health data.

\subsection{Data and Study Design}

The NHANES is a nationally representative survey of the U.S. civilian population, collecting comprehensive health, nutrition, and demographic information. We focus on assessing the effect of physical activity on sleep trouble while allowing the conditional effect measure to vary across demographic, socioeconomic, and health characteristics.

Our analytic sample includes 6,632 adults (age $\geq$ 18 years) with complete covariate and outcome data. The treatment variable is self-reported participation in moderate or vigorous physical activity during a typical week (53.5\% prevalence). The outcome variable is self-reported trouble sleeping, defined as reporting difficulty falling asleep, staying asleep, or sleeping too much (26.6\% prevalence).

Baseline covariates include: age (continuous), gender (binary), body mass index (BMI, continuous), education level (5 categories), race/ethnicity (5 categories), poverty-income ratio (continuous), diabetes status (binary), and smoking history (binary). In implementation, education level and race/ethnicity are represented by indicator variables, and age, BMI, and poverty-income ratio are standardized. These covariates capture demographic, socioeconomic, and health factors that may confound the relationship between physical activity and sleep trouble and may also modify the treatment effect.

For the empirical analysis, we randomly split the analytic sample into an 80\% training sample and a 20\% validation sample. All estimators are trained on the training sample, and both the heterogeneity plots and the treatment-rule comparisons reported below are evaluated on the held-out validation sample.

For binary outcomes, OR- and RR-based treatment directions agree when they are derived from the same pair of conditional risks, since $OR(x)<1$ if and only if $RR(x)<1$, equivalently $p_1(x)<p_0(x)$. For this reason, we focus on the OR-based analysis below.

To connect these heterogeneity estimates to treatment-rule comparison, we define an OR-based rule for each estimator:
\[
d_m(x)=I\{\widehat{\log OR}_m(x)<0\}.
\]
Because $Y=1$ indicates sleep trouble, this rule recommends treatment when the estimated odds of sleep trouble are lower under treatment than under control.

We compare these rules through the mean counterfactual outcome $E[Y(d_m)]$, estimated on the validation sample using Targeted Maximum Likelihood Estimation (TMLE) \citep{vanderLaanRubin2006TMLE}. Under the standard assumptions of consistency, conditional exchangeability, and positivity, this quantity represents the mean outcome that would be observed if treatment were assigned according to $d_m$. Smaller values are therefore better. In this analysis, all rules are evaluated using a common set of risk predictions and propensity scores estimated by \texttt{SL}.

\subsection{Results}

Figure~\ref{fig:nhanes_application} summarizes the held-out validation-sample results. The left panel displays violin plots of the estimated conditional log odds ratios. Standard logistic regression (\texttt{LR}) yields a constant conditional OR (about 0.05 on the log scale) for all individuals, reflecting its implicit assumption of homogeneous treatment effects, since the log-odds ratio is constrained to depend only on the main effect of treatment. In contrast, the flexible learners and the orthogonal learners produce much broader distributions, suggesting substantial heterogeneity in the association between physical activity and sleep trouble. The corresponding log risk ratio plots were qualitatively similar and are omitted from the main text.

The right panel of Figure~\ref{fig:nhanes_application} shows that this rule-based analysis also separates the estimators. The lowest estimated mean outcome is attained by \texttt{DR-LOR} (0.151), followed by \texttt{DR-P} (0.157), \texttt{R-LOR} (0.162), and \texttt{SL-T} (0.184). These four estimators all perform substantially better than \texttt{LR} (0.259), and \texttt{Treat All} performs worst (0.291). These findings suggest that estimators capturing richer effect heterogeneity can induce materially better treatment rules than simple parametric models in this application. In this application, rather than recommending physical activity uniformly to all adults with sleep trouble, clinicians could target individuals more likely to benefit. The ability to estimate conditional treatment effects, rather than population averages, enables more targeted and efficient health interventions.

\begin{figure}[H]
\centering
\begin{subfigure}{.5\textwidth}
  \centering
  \includegraphics[width=\linewidth]{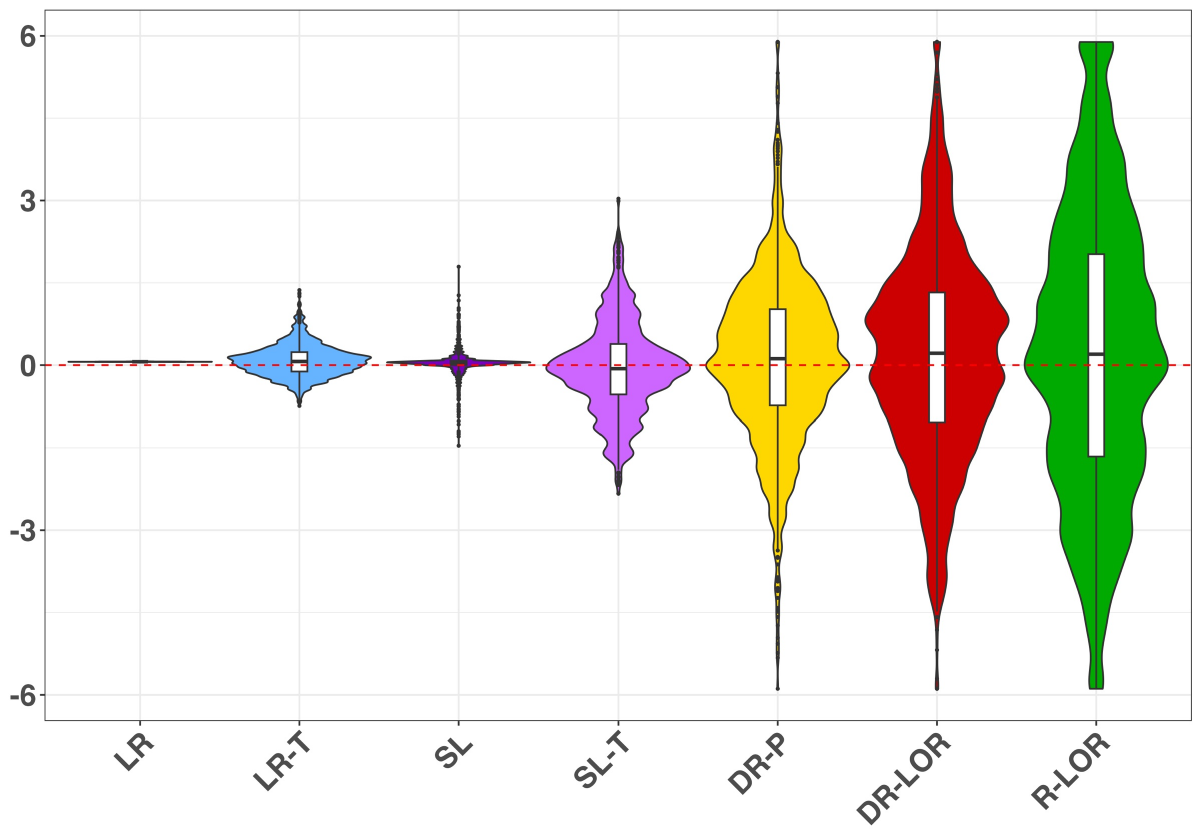}
  \caption{Conditional Odds Ratio (Log Scale)}
\end{subfigure}%
\begin{subfigure}{.5\textwidth}
  \centering
  \includegraphics[width=\linewidth]{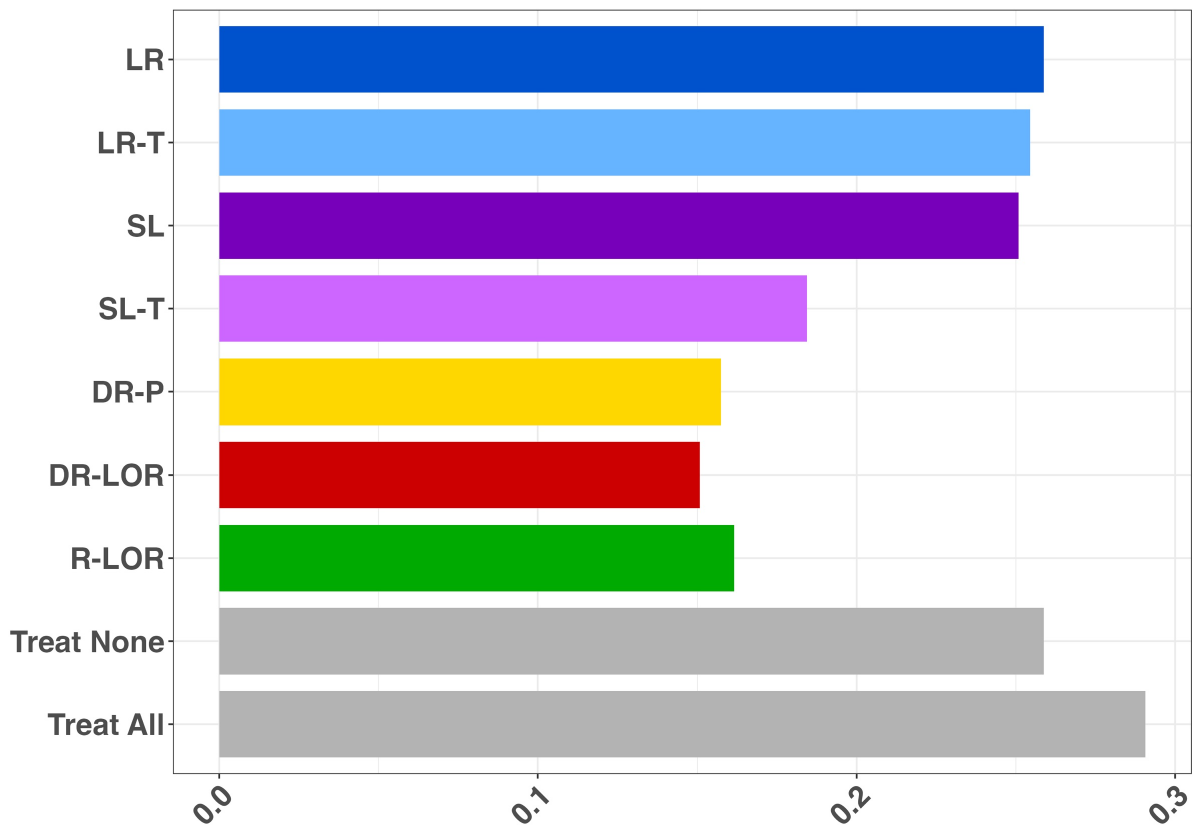}
  \caption{Estimated Mean Outcome under the Induced Rule}
\end{subfigure}
\caption{NHANES application for the effect of physical activity on sleep trouble. The left panel shows the validation-sample distribution of estimated conditional log odds ratios. The right panel compares the estimated mean outcome under OR-based treatment rules induced by each estimator, using TMLE. Because sleep trouble is the undesirable outcome, lower values in the right panel are better.}
\label{fig:nhanes_application}
\end{figure}

\section{Discussion}
\label{sec:discussion}

This study provides a comprehensive comparison of methods for estimating conditional treatment effects on three scales: the risk difference (ATE), odds ratio, and risk ratio. Our theoretical development of orthogonal pseudo-outcomes with second-order remainder properties, combined with extensive simulations and a real-world application, yields several important insights for applied researchers.

We make three primary contributions. First, we propose DR-learner and R-learner estimators for the conditional odds ratio and risk ratio based on efficient-influence-function-derived orthogonal pseudo-outcomes. These pseudo-outcomes satisfy conditional-mean identities in which the target parameter is recovered up to a second-order remainder term. Second, we conduct a rigorous simulation study comparing these DR-learner methods against model (logistic regression) and flexible machine learning (SuperLearner) S- and T-Learners across diverse randomly generated data-generating processes. Third, we demonstrate the practical utility of these methods using NHANES data to estimate heterogeneous effects of physical activity on sleep trouble and to compare the treatment rules induced by different estimators.

Based on our findings (see also the Practical Recommendations in Section~\ref{sec:practical_recs}), we offer the following guidance. In complex scenarios, DR-learner estimators and SuperLearner clearly outperform logistic regression. Within the DR-learner family, a bias--variance tradeoff emerges: direct-targeting estimators (e.g., DR-LOR, DR-LRR, DR-CATE) tend to reduce variance and often achieve lower median iMSE in large-sample complex settings, whereas DR-P often attains lower squared bias and can exhibit more favorable right-tail behavior. R-learner variants are not recommended due to consistently high variance. If one wants an estimator built from the orthogonal pseudo-outcomes introduced here, DR-learner estimators are preferred; otherwise, SuperLearner is typically a strong default. In simpler or data-limited settings, logistic regression and SuperLearner often achieve the lowest iMSE, with logistic regression offering interpretability and computational efficiency, and SuperLearner providing more robust worst-case performance. For intermediate or uncertain settings, SuperLearner remains a reliable default choice.

We note that even in the most complex scenarios considered here, the data-generating mechanisms remain relatively simple compared to real-world data. Nevertheless, the proposed DR-family estimators already match or surpass the best competing methods in these settings. In more complex real-world applications, where data are higher-dimensional and exhibit greater heterogeneity, we expect the theoretical advantages of orthogonal and debiased estimation to become even more pronounced. The illustrative application in Section~\ref{sec:illustrative} provides further empirical support for this perspective.

Several limitations of our study warrant discussion. First, our simulation DGP, while flexible, uses a relatively low-dimensional covariate space (five binary and one continuous covariate) due to computational resource limitations. Real-world applications often involve higher-dimensional confounders. Second, we focus on binary outcomes and treatments; extensions to continuous or survival outcomes require additional methodological development and constitute a natural direction for future work.


\clearpage
\section{Supplementary Materials} \label{sec:supp_materials}

\subsection{Supplementary proofs for the novel estimators}
\subsubsection{Proofs of Theorem~\ref{thm:dr_transforms}}
\label{sec:supp_thm1_proofs}

We first prove the OR case, then treat $\log OR$, $RR$, and $\log RR$ analogously.

\begin{proof}[Proof of Theorem~\ref{thm:dr_transforms} for the OR case]
Let
\[
\psi(u,v)=\frac{u(1-v)}{v(1-u)},
\qquad
\Delta(X)=
\begin{pmatrix}
p_1(X)-q_1(X)\\[2pt]
p_0(X)-q_0(X)
\end{pmatrix}.
\]
Then $OR(X)=\psi(p_1(X),p_0(X))$ and $OR_\eta(X)=\psi(q_1(X),q_0(X))$. Also,
\[
\partial_1\psi(u,v)=\frac{1-v}{v(1-u)^2},
\qquad
\partial_0\psi(u,v)=-\frac{u}{(1-u)v^2}.
\]
Taking the conditional expectation of $\varphi_{OR}(Z;\eta)$ given $X$,
\begin{align}
E[\varphi_{OR}(Z;\eta)\mid X]
&=OR_\eta(X)+\frac{e(X)}{\pi(X)}\partial_1\psi(q_1(X),q_0(X))\{p_1(X)-q_1(X)\}\nonumber\\
&\quad+\frac{1-e(X)}{1-\pi(X)}\partial_0\psi(q_1(X),q_0(X))\{p_0(X)-q_0(X)\}.
\label{eq:dror_conditional_mean_generic}
\end{align}
Adding and subtracting the first-order Taylor expansion of $\psi(p_1(X),p_0(X))$ around $(q_1(X),q_0(X))$, and applying the mean-value form of Taylor's theorem to the second-order remainder (using that $\psi$ is twice continuously differentiable on every compact subset of $(0,1)^2$), we obtain
\begin{align}
E[\varphi_{OR}(Z;\eta)\mid X] = OR(X) + R_{OR}(X;\eta),
\label{eq:dror_second_order_identity}
\end{align}
with
\begin{align}
R_{OR}(X;\eta)
&=
\frac{e(X)-\pi(X)}{\pi(X)}\partial_1\psi(q_1(X),q_0(X))\{p_1(X)-q_1(X)\}\nonumber\\
&\quad+
\frac{\pi(X)-e(X)}{1-\pi(X)}\partial_0\psi(q_1(X),q_0(X))\{p_0(X)-q_0(X)\}\nonumber\\
&\quad-\frac{1}{2}\Delta(X)^\top H_\psi(\bar q(X))\Delta(X),
\label{eq:dror_remainder_explicit}
\end{align}
where $\bar{q}(X)$ lies on the segment joining $(q_1(X),q_0(X))$ and $(p_1(X),p_0(X))$, and
\[
H_\psi(u,v)=
\begin{pmatrix}
\dfrac{2(1-v)}{v(1-u)^3} & -\dfrac{1}{(1-u)^2v^2}\\[10pt]
-\dfrac{1}{(1-u)^2v^2} & \dfrac{2u}{(1-u)v^3}
\end{pmatrix}.
\]
The remainder is second order: the first two terms are products of propensity and outcome regression errors, and the last term is quadratic in the outcome regression errors. If, in addition, the nuisance class satisfies the boundary condition in Theorem~\ref{thm:or_orthogonal}, then there exists $C<\infty$ such that
\[
|R_{OR}(X;\eta)|
\le
C\Big(
|e(X)-\pi(X)|\,|p_1(X)-q_1(X)|
+|e(X)-\pi(X)|\,|p_0(X)-q_0(X)|
+\|\Delta(X)\|^2
\Big).
\]
Setting $\eta=\hat\eta$ gives $E[\hat\varphi_{OR}(Z)\mid X]=OR(X)+R_{OR}(X;\hat\eta)$.
\end{proof}

\begin{proof}[Proof of Theorem~\ref{thm:dr_transforms} for $\log OR$]
Let
\[
\psi(u,v)=\log\!\left(\frac{u(1-v)}{v(1-u)}\right) = \log u - \log(1-u) + \log(1-v) - \log v,
\qquad
\Delta(X)=
\begin{pmatrix}
p_1(X)-q_1(X)\\[2pt]
p_0(X)-q_0(X)
\end{pmatrix}.
\]
Then $\log OR(X)=\psi(p_1(X),p_0(X))$ and $\log OR_\eta(X)=\psi(q_1(X),q_0(X))$. The partial derivatives are
\[
\partial_1\psi(u,v)=\frac{1}{u(1-u)},
\qquad
\partial_0\psi(u,v)=-\frac{1}{v(1-v)}.
\]
Taking the conditional expectation of $\varphi_{\log OR}(Z;\eta)$ given $X$,
\begin{align*}
E[\varphi_{\log OR}(Z;\eta)\mid X]
&=\log OR_\eta(X)+\frac{e(X)}{\pi(X)}\,\partial_1\psi(q_1(X),q_0(X))\{p_1(X)-q_1(X)\}\\
&\quad+\frac{1-e(X)}{1-\pi(X)}\,\partial_0\psi(q_1(X),q_0(X))\{p_0(X)-q_0(X)\}.
\end{align*}
Adding and subtracting the first-order Taylor expansion of $\psi(p_1(X),p_0(X))$ around $(q_1(X),q_0(X))$, and applying the mean-value form of Taylor's theorem to the second-order remainder (using that $\psi$ is twice continuously differentiable on every compact subset of $(0,1)^2$), we obtain
\[
E[\varphi_{\log OR}(Z;\eta)\mid X] = \log OR(X) + R_{\log OR}(X;\eta),
\]
with
\begin{align*}
R_{\log OR}(X;\eta)
&=\frac{e(X)-\pi(X)}{\pi(X)}\,\partial_1\psi(q_1(X),q_0(X))\{p_1(X)-q_1(X)\}\\
&\quad+\frac{\pi(X)-e(X)}{1-\pi(X)}\,\partial_0\psi(q_1(X),q_0(X))\{p_0(X)-q_0(X)\}\\
&\quad-\frac{1}{2}\Delta(X)^\top H_\psi(\bar q(X))\,\Delta(X),
\end{align*}
where $\bar{q}(X)$ lies on the segment joining $(q_1(X),q_0(X))$ and $(p_1(X),p_0(X))$, and
\[
H_\psi(u,v)=
\begin{pmatrix}
\dfrac{2u-1}{u^2(1-u)^2} & 0\\[10pt]
0 & \dfrac{1-2v}{v^2(1-v)^2}
\end{pmatrix}.
\]
The Hessian is diagonal because $\psi$ is additively separable in $u$ and $v$. The remainder is second order: the first two terms are products of propensity and outcome regression errors, and the last term is quadratic in the outcome regression errors. If, in addition, the nuisance class satisfies the boundary condition in Theorem~\ref{thm:or_orthogonal}, then there exists $C<\infty$ such that
\[
|R_{\log OR}(X;\eta)|
\le
C\Big(
|e(X)-\pi(X)|\,|p_1(X)-q_1(X)|
+|e(X)-\pi(X)|\,|p_0(X)-q_0(X)|
+\|\Delta(X)\|^2
\Big).
\]
Setting $\eta=\hat\eta$ gives $E[\hat\varphi_{\log OR}(Z)\mid X]=\log OR(X)+R_{\log OR}(X;\hat\eta)$.
\end{proof}

\begin{proof}[Proof of Theorem~\ref{thm:dr_transforms} for $RR$]
Let
\[
\psi(u,v)=\frac{u}{v},
\qquad
\Delta(X)=
\begin{pmatrix}
p_1(X)-q_1(X)\\[2pt]
p_0(X)-q_0(X)
\end{pmatrix}.
\]
Then $RR(X)=\psi(p_1(X),p_0(X))$ and $RR_\eta(X)=\psi(q_1(X),q_0(X))$. The partial derivatives are
\[
\partial_1\psi(u,v)=\frac{1}{v},
\qquad
\partial_0\psi(u,v)=-\frac{u}{v^2}.
\]
Taking the conditional expectation of $\varphi_{RR}(Z;\eta)$ given $X$, we obtain
\begin{align*}
E[\varphi_{RR}(Z;\eta)\mid X]
&=RR_\eta(X)+\frac{e(X)}{\pi(X)}\,\partial_1\psi(q_1(X),q_0(X))\{p_1(X)-q_1(X)\}\\
&\quad+\frac{1-e(X)}{1-\pi(X)}\,\partial_0\psi(q_1(X),q_0(X))\{p_0(X)-q_0(X)\}.
\end{align*}
Adding and subtracting the first-order Taylor expansion of $\psi(p_1(X),p_0(X))$ around $(q_1(X),q_0(X))$, and applying the mean-value form of Taylor's theorem to the second-order remainder (using that $\psi$ is twice continuously differentiable on every compact subset of $(0,1)^2$), we obtain
\[
E[\varphi_{RR}(Z;\eta)\mid X] = RR(X) + R_{RR}(X;\eta),
\]
with
\begin{align*}
R_{RR}(X;\eta)
&=\frac{e(X)-\pi(X)}{\pi(X)}\,\partial_1\psi(q_1(X),q_0(X))\{p_1(X)-q_1(X)\}\\
&\quad+\frac{\pi(X)-e(X)}{1-\pi(X)}\,\partial_0\psi(q_1(X),q_0(X))\{p_0(X)-q_0(X)\}\\
&\quad-\frac{1}{2}\Delta(X)^\top H_\psi(\bar q(X))\,\Delta(X),
\end{align*}
where $\bar q(X)$ lies on the segment joining $(q_1(X),q_0(X))$ and $(p_1(X),p_0(X))$, and
\[
H_\psi(u,v)=
\begin{pmatrix}
0 & -\dfrac{1}{v^2}\\[10pt]
-\dfrac{1}{v^2} & \dfrac{2u}{v^3}
\end{pmatrix}.
\]
The remainder is second order: the first two terms are products of propensity and outcome regression errors, and the last term is quadratic in the outcome regression errors. If, in addition, the nuisance class satisfies the boundary condition in Theorem~\ref{thm:or_orthogonal}, then there exists $C<\infty$ such that
\[
|R_{RR}(X;\eta)|
\le
C\Big(
|e(X)-\pi(X)|\,|p_1(X)-q_1(X)|
+|e(X)-\pi(X)|\,|p_0(X)-q_0(X)|
+\|\Delta(X)\|^2
\Big).
\]
Setting $\eta=\hat\eta$ gives $E[\hat\varphi_{RR}(Z)\mid X]=RR(X)+R_{RR}(X;\hat\eta)$.
\end{proof}

\begin{proof}[Proof of Theorem~\ref{thm:dr_transforms} for $\log RR$]
Let
\[
\psi(u,v)=\log u - \log v,
\qquad
\Delta(X)=
\begin{pmatrix}
p_1(X)-q_1(X)\\[2pt]
p_0(X)-q_0(X)
\end{pmatrix}.
\]
Then $\log RR(X)=\psi(p_1(X),p_0(X))$ and $\log RR_\eta(X)=\psi(q_1(X),q_0(X))$. The partial derivatives are
\[
\partial_1\psi(u,v)=\frac{1}{u},
\qquad
\partial_0\psi(u,v)=-\frac{1}{v}.
\]
Taking the conditional expectation of $\varphi_{\log RR}(Z;\eta)$ given $X$, we obtain
\begin{align*}
E[\varphi_{\log RR}(Z;\eta)\mid X]
&=\log RR_\eta(X)+\frac{e(X)}{\pi(X)}\,\partial_1\psi(q_1(X),q_0(X))\{p_1(X)-q_1(X)\}\\
&\quad+\frac{1-e(X)}{1-\pi(X)}\,\partial_0\psi(q_1(X),q_0(X))\{p_0(X)-q_0(X)\}.
\end{align*}
Adding and subtracting the first-order Taylor expansion of $\psi(p_1(X),p_0(X))$ around $(q_1(X),q_0(X))$, and applying the mean-value form of Taylor's theorem to the second-order remainder (using that $\psi$ is twice continuously differentiable on every compact subset of $(0,1)^2$), we obtain
\[
E[\varphi_{\log RR}(Z;\eta)\mid X] = \log RR(X) + R_{\log RR}(X;\eta),
\]
with
\begin{align*}
R_{\log RR}(X;\eta)
&=\frac{e(X)-\pi(X)}{\pi(X)}\,\partial_1\psi(q_1(X),q_0(X))\{p_1(X)-q_1(X)\}\\
&\quad+\frac{\pi(X)-e(X)}{1-\pi(X)}\,\partial_0\psi(q_1(X),q_0(X))\{p_0(X)-q_0(X)\}\\
&\quad-\frac{1}{2}\Delta(X)^\top H_\psi(\bar q(X))\,\Delta(X),
\end{align*}
where $\bar q(X)$ lies on the segment joining $(q_1(X),q_0(X))$ and $(p_1(X),p_0(X))$, and
\[
H_\psi(u,v)=
\begin{pmatrix}
-\dfrac{1}{u^2} & 0\\[10pt]
0 & \dfrac{1}{v^2}
\end{pmatrix}.
\]
As with $\log OR$, the Hessian is diagonal because $\psi$ is additively separable. The remainder is second order: the first two terms are products of propensity and outcome regression errors, and the last term is quadratic in the outcome regression errors. If, in addition, the nuisance class satisfies the boundary condition in Theorem~\ref{thm:or_orthogonal}, then there exists $C<\infty$ such that
\[
|R_{\log RR}(X;\eta)|
\le
C\Big(
|e(X)-\pi(X)|\,|p_1(X)-q_1(X)|
+|e(X)-\pi(X)|\,|p_0(X)-q_0(X)|
+\|\Delta(X)\|^2
\Big).
\]
Setting $\eta=\hat\eta$ gives $E[\hat\varphi_{\log RR}(Z)\mid X]=\log RR(X)+R_{\log RR}(X;\hat\eta)$.
\end{proof}

\subsubsection{Proof of Theorem~\ref{thm:or_orthogonal}}
\label{sec:supp_thm2_proof}

\begin{proof}
We prove parts~(i) and (ii) for an arbitrary fixed $\theta\in\{OR,\log OR,RR,\log RR\}$. For part~(iii), we verify the conditions in detail for the OR loss and defer the corresponding verifications for $\log OR$, $RR$, and $\log RR$ to Section~\ref{sec:supp_thm2_partiii}.

\medskip
\noindent\textbf{Proof of (i).}
Fix $\eta$ and write $m_{\theta,\eta}(X)=E[\varphi_{\theta}(Z;\eta)\mid X]$. By the bias--variance decomposition,
\begin{align*}
\mathcal{L}_{\theta}(f,\eta)
&= E\!\left[\left\{\varphi_{\theta}(Z;\eta)-m_{\theta,\eta}(X)+m_{\theta,\eta}(X)-f(X)\right\}^2\right]\\
&= E\!\left[\left\{\varphi_{\theta}(Z;\eta)-m_{\theta,\eta}(X)\right\}^2\right]
+ E\!\left[\left\{m_{\theta,\eta}(X)-f(X)\right\}^2\right],
\end{align*}
since the cross term vanishes:
$2\,E\!\left[\{m_{\theta,\eta}(X)-f(X)\}\,E[\varphi_{\theta}(Z;\eta)-m_{\theta,\eta}(X)\mid X]\right]=0$.
The first term does not depend on $f$, so $m_{\theta,\eta}$ is the unique minimizer over all measurable functions. Taking $\eta=\eta_0$ and using Theorem~\ref{thm:dr_transforms} gives $R_{\theta}(X;\eta_0)=0$ and hence $m_{\theta,\eta_0}(X)=\theta(X)$. Therefore
\[
\mathcal{L}_{\theta}(f,\eta_0)-\mathcal{L}_{\theta}(\theta,\eta_0) = E\!\left[(f(X)-\theta(X))^2\right].
\]

\medskip
\noindent\textbf{Proof of (ii).}
We verify the Neyman orthogonality condition in Assumption~1 of \citet{foster2023orthogonal}.

\noindent\textbf{Step 1 (compute $D_f\mathcal{L}_{\theta}$).}
By direct differentiation of the squared loss,
\begin{align}
D_f\mathcal{L}_{\theta}(f,\eta)[h]
&= -2\,E\!\left[(\varphi_{\theta}(Z;\eta)-f(X))\,h(X)\right].
\label{eq:DfL}
\end{align}

\noindent\textbf{Step 2 (evaluate at $(\theta,\eta_0)$).}
Conditioning on $X$ and using $E[\varphi_{\theta}(Z;\eta_0)\mid X]=\theta(X)$ (from Theorem~\ref{thm:dr_transforms} with $R_{\theta}=0$),
\begin{align}
D_f\mathcal{L}_{\theta}(\theta,\eta_0)[h]
&= -2\,E\!\left[E[\varphi_{\theta}(Z;\eta_0)-\theta(X)\mid X]\,h(X)\right] = 0.
\label{eq:DfL_at_true}
\end{align}
This confirms the first-order optimality condition in Assumption~2 of \citet{foster2023orthogonal}.

\noindent\textbf{Step 3 (compute the cross derivative $D_\eta D_f\mathcal{L}_{\theta}$).}
Taking the pathwise derivative of \eqref{eq:DfL} with respect to $\eta$ at $\eta_0$ in direction $k$:
\begin{align}
D_\eta D_f\mathcal{L}_{\theta}(\theta,\eta_0)[h,k]
&= -2\,E\!\left[D_\eta\varphi_{\theta}(Z;\eta_0)[k]\cdot h(X)\right].
\label{eq:crossderiv_raw}
\end{align}
Conditioning on $X$ and exchanging the derivative with the conditional expectation,
\begin{align}
D_\eta D_f\mathcal{L}_{\theta}(\theta,\eta_0)[h,k]
&= -2\,E\!\left[D_\eta\,E[\varphi_{\theta}(Z;\eta)\mid X]\Big|_{\eta=\eta_0}[k]\cdot h(X)\right].
\label{eq:crossderiv}
\end{align}
By Theorem~\ref{thm:dr_transforms}, $E[\varphi_{\theta}(Z;\eta)\mid X]=\theta(X)+R_{\theta}(X;\eta)$, so
\[
D_\eta E[\varphi_{\theta}(Z;\eta)\mid X]\Big|_{\eta=\eta_0}[k] = D_\eta R_{\theta}(X;\eta)\Big|_{\eta=\eta_0}[k].
\]
By Theorem~\ref{thm:dr_transforms} and Section~\ref{sec:supp_thm1_proofs}, each $R_{\theta}(X;\eta)$ is the sum of cross terms in the nuisance errors and a Taylor remainder that is quadratic in the outcome-regression errors. Hence every term in $R_{\theta}(X;\eta)$ contains at least two factors that vanish at $\eta=\eta_0$, so
\[
D_\eta R_{\theta}(X;\eta)\Big|_{\eta=\eta_0}[k] = 0,
\]
and substituting back into \eqref{eq:crossderiv},
\[
D_\eta D_f\mathcal{L}_{\theta}(\theta,\eta_0)[h,k] = 0.
\]
This establishes Neyman orthogonality for every $\theta\in\{OR,\log OR,RR,\log RR\}$.

\medskip
\noindent\textbf{Proof of (iii).}
We verify the conditions required by Theorem~3 of \citet{foster2023orthogonal} for the OR loss. The corresponding verifications for $\log OR$, $RR$, and $\log RR$ are given in Section~\ref{sec:supp_thm2_partiii}.

\noindent\textbf{Strong convexity (Assumption~4 of \citet{foster2023orthogonal}).}
For any fixed $\eta$, the Hessian of $\mathcal{L}_{OR}(f,\eta)$ with respect to $f$ is
\[
D_f^2\, \mathcal{L}_{OR}(\bar f,\eta)[h,h]
= 2\,E\!\left[(h(X))^2\right]
= 2\,\|h\|_{L_2}^2.
\]
With $\|f\|_\mathcal{F}:=\|f\|_{L_2}$, this gives Assumption~4 of \citet{foster2023orthogonal} with $\lambda=2$ and $\kappa=0$ (the strong convexity holds uniformly in $\eta$, so there is no correction term).

\noindent\textbf{Second-order smoothness (Assumption~3 of \citet{foster2023orthogonal}).}
The second derivative in $f$ satisfies $D_f^2\mathcal{L}_{OR}(\bar f,\eta)[h,h]\le 2\|h\|_{L_2}^2$, giving $\beta_1=2$. For the higher-order smoothness condition on the mixed derivative,
\[
D_\eta^2 D_f\mathcal{L}_{OR}(OR,\bar\eta)[h,k,k]
= -2\,E\!\left[D_\eta^2\varphi_{OR}(Z;\bar\eta)[k,k]\cdot h(X)\right].
\]
We bound this by computing the second derivatives of $\varphi_{OR}(Z;\eta)$ with respect to the nuisance coordinates $(q_1,q_0,\pi)$. Since $\varphi_{OR}(Z;\eta)$ is a rational function of $(q_1,q_0,\pi)$ with each denominator bounded below by $c_0$, the second partial derivatives are uniformly bounded. Specifically, denoting the nuisance direction $k=(\delta_{q_1},\delta_{q_0},\delta_\pi)$ and applying H\"older's inequality:
\[
\left|D_\eta^2 D_f\mathcal{L}_{OR}(OR,\bar\eta)[h,k,k]\right|
\le \beta_{2,OR}\,\|h\|_{L_2}\,\|k\|_\mathcal{G}^2,
\]
for some finite constant $\beta_{2,OR}=\beta_{2,OR}(c_0)$.

\noindent\textbf{Lipschitz condition.}
Under the boundary condition in Theorem~\ref{thm:or_orthogonal}, the pseudo-outcome is uniformly bounded. The OR plug-in satisfies $|OR_\eta(X)|\le (1-c_0)^2/c_0^2$, and the two AIPW corrections satisfy
\[
\frac{OR_\eta(X)}{q_1(X)\{1-q_1(X)\}}\frac{1}{\pi(X)}
\le \frac{1-c_0}{c_0^4},
\qquad
\frac{OR_\eta(X)}{q_0(X)\{1-q_0(X)\}}\frac{1}{1-\pi(X)}
\le \frac{1-c_0}{c_0^4}.
\]
Therefore
\[
B_{\varphi}:=\sup_{\eta\in\mathcal{G},\,Z}|\varphi_{OR}(Z;\eta)|
\le \frac{(1-c_0)^2}{c_0^2}+\frac{2(1-c_0)}{c_0^4}<\infty.
\]
For the pointwise squared loss underlying $\mathcal{L}_{OR}(f,\eta)$, namely $\{\varphi_{OR}(Z;\eta)-f(X)\}^2$, we have for any $f_1,f_2\in\mathcal{F}$,
\begin{align*}
\left|\{\varphi_{OR}(Z;\eta)-f_1(X)\}^2-\{\varphi_{OR}(Z;\eta)-f_2(X)\}^2\right|
&=|f_1(X)-f_2(X)|\,|2\varphi_{OR}(Z;\eta)-f_1(X)-f_2(X)|\\
&\le 2(B_\varphi+M)\,|f_1(X)-f_2(X)|,
\end{align*}
so the pointwise squared loss is Lipschitz in $f$ with constant $L=2(B_\varphi+M)$.

\noindent\textbf{Extension to $\log OR$, $RR$, and $\log RR$.}
Section~\ref{sec:supp_thm2_partiii} verifies the remaining Assumption~3(b) condition of \citet{foster2023orthogonal} and the corresponding Lipschitz conditions for the pointwise squared losses underlying $\mathcal{L}_{\log OR}$, $\mathcal{L}_{RR}$, and $\mathcal{L}_{\log RR}$, with parameter-specific constants $\beta_{2,\theta}=\beta_{2,\theta}(c_0)$ and $B_{\varphi,\theta}=B_{\varphi,\theta}(c_0)$.

\noindent\textbf{Applying Theorem~3 of \citet{foster2023orthogonal}.}
Parts~(i) and (ii) verify Assumptions~1 and~2 of \citet{foster2023orthogonal} (orthogonality and first-order optimality). The strong convexity and smoothness bounds verify Assumptions~3 and~4 of \citet{foster2023orthogonal}. Therefore Theorem~3 of \citet{foster2023orthogonal} applies with $r=0$, yielding: with probability at least $1-\delta$,
\begin{align*}
\mathcal{L}_{OR}(\widehat{OR},\eta_0) - \mathcal{L}_{OR}(OR,\eta_0)
&\le \underbrace{C_1\!\left(\frac{\delta_n^2}{M^2}+\frac{\log(\delta^{-1})}{n}\right)}_{\text{oracle term}}
+\underbrace{C_2\,\|\hat\eta-\eta_0\|_\mathcal{G}^4}_{\text{nuisance term}},
\end{align*}
and by part~(i), the left-hand side equals $E[(\widehat{OR}(X)-OR(X))^2]$. Replacing the OR-specific constants by their $\theta$-specific analogues yields the displayed bound for each $\theta\in\{OR,\log OR,RR,\log RR\}$.
\end{proof}

\subsubsection{Verification of Theorem~\ref{thm:or_orthogonal}(iii) for $\log OR$, $RR$, and $\log RR$}
\label{sec:supp_thm2_partiii}

This subsection verifies the parameter-specific boundedness and higher-order smoothness conditions used in part~(iii) of Theorem~\ref{thm:or_orthogonal}. As in the OR verification in Section~\ref{sec:supp_thm2_proof}, the squared-loss structure alone implies Assumption~4 of \citet{foster2023orthogonal} with $(\lambda,\kappa)=(2,0)$ and Assumption~3(a) of \citet{foster2023orthogonal} with $\beta_1=2$. Thus it remains only to verify Assumption~3(b) of \citet{foster2023orthogonal} and the Lipschitz condition required by Theorem~3 of \citet{foster2023orthogonal}.

\begin{proposition}[Parameter-specific verification for Theorem~\ref{thm:or_orthogonal}(iii)]
\label{prop:supp_thm2_partiii}
Under the assumptions of Theorem~\ref{thm:or_orthogonal}, for each $\theta\in\{\log OR,RR,\log RR\}$ there exist finite constants $\beta_{2,\theta}=\beta_{2,\theta}(c_0)$ and $B_{\varphi,\theta}=B_{\varphi,\theta}(c_0)$ such that
\[
\left|D_\eta^2D_f\mathcal{L}_\theta(\theta,\bar\eta)[h,k,k]\right|
\le
\beta_{2,\theta}\,\|h\|_{L_2}\,\|k\|_{\mathcal{G}}^2
\]
for all nuisance values $\bar\eta=(\bar q_1,\bar q_0,\bar\pi)$ with coordinates in $[c_0,1-c_0]^3$, and the pointwise squared loss underlying $\mathcal{L}_\theta(f,\eta)$, namely $\{\varphi_\theta(Z;\eta)-f(X)\}^2$, is Lipschitz in $f$ with constant $L_\theta=2(B_{\varphi,\theta}+M)$.
\end{proposition}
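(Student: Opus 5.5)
The proof mirrors the OR verification in Section~\ref{sec:supp_thm2_proof}, done separately for each $\theta\in\{\log OR,RR,\log RR\}$.

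\textbf{Step 1: uniform bound on the pseudo-outcome.} Write each pseudo-outcome in the generic form
\[
\varphi_\theta(Z;\eta)=\psi_\theta(q_1,q_0)+a_\theta(q_1,q_0)\frac{I(T=1)}{\pi}\{Y-q_1\}+b_\theta(q_1,q_0)\frac{I(T=0)}{1-\pi}\{Y-q_0\}.
\]
Here $\psi_\theta$ is the plug-in contrast, and $a_\theta=\partial_1\psi_\theta$, $b_\theta=\partial_0\psi_\theta$ are the partial derivatives already computed in Section~\ref{sec:supp_thm1_proofs}. On $[c_0,1-c_0]^3$:
\begin{itemize}
\item For $\log OR$: $|\psi|\le 2\log\{(1-c_0)/c_0\}$ and $|a|,|b|\le 1/\{c_0(1-c_0)\}$.
\item For $RR$: $|\psi|\le (1-c_0)/c_0$, $|a|\le 1/c_0$, $|b|\le (1-c_0)/c_0^2$.
\item For $\log RR$: $|\psi|\le\log\{(1-c_0)/c_0\}$ and $|a|,|b|\le 1/c_0$.
\end{itemize}
Since $|Y-q_t|\le 1$ and $1/\pi,\,1/(1-\pi)\le 1/c_0$, this gives $B_{\varphi,\theta}=\sup|\psi_\theta|+(\sup|a_\theta|+\sup|b_\theta|)/c_0<\infty$. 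The Lipschitz constant $L_\theta=2(B_{\varphi,\theta}+M)$ then follows from the same difference-of-squares identity used for OR.

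\textbf{Step 2: the second mixed derivative.} Because $\varphi_\theta$ is linear in the candidate $f$ through the loss, we have
\[
D_\eta^2D_f\mathcal{L}_\theta(\theta,\bar\eta)[h,k,k]=-2E\!\left[D_\eta^2\varphi_\theta(Z;\bar\eta)[k,k]\,h(X)\right].
\]
For a direction $k=(\delta_{q_1},\delta_{q_0},\delta_\pi)$, the second directional derivative is a quadratic form $\sum_{j,l}\partial_j\partial_l\varphi_\theta(Z;\bar\eta)\,k_j k_l$ in the three nuisance coordinates. Each term of $\varphi_\theta$ is a product of smooth functions of $(q_1,q_0)$ (namely $\psi_\theta,a_\theta,b_\theta$ and their first and second partials, which involve only $\log$ and powers of $q_t$, $1-q_t$ in denominators), times $1/\pi$ or $1/(1-\pi)$, times the bounded factor $(Y-q_t)$. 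All these second partials are therefore continuous on the compact set $[c_0,1-c_0]^3$ and bounded by a constant $K_\theta(c_0)$. One can list them: for example, for $RR$ the $\partial_{q_0}^2$ of $-\frac{q_1}{q_0^2}\frac{I(T=0)}{1-\pi}(Y-q_0)$ involves $q_0^{-3}$ and $q_0^{-4}$ factors. This gives
\[
|D_\eta^2\varphi_\theta[k,k]|\le K_\theta\|k(X)\|_{\ell_2}^2
\]
pointwise.

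\textbf{Step 3: from the pointwise bound to the norm bound.} Apply Cauchy--Schwarz:
\[
\big|E[K_\theta\|k(X)\|_{\ell_2}^2\,h(X)]\big|\le K_\theta\|h\|_{L_2}\big(E\|k(X)\|_{\ell_2}^4\big)^{1/2}=K_\theta\|h\|_{L_2}\|k\|_{L_4(\ell_2)}^2.
\]
This yields $\beta_{2,\theta}=2K_\theta$ with the pre-norm $\|\cdot\|_{\mathcal G}=\|\cdot\|_{L_4(\ell_2)}$. Note that this requires $\bar\eta$ to lie on segments between points with coordinates in $[c_0,1-c_0]$; convexity of that box guarantees it.

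\textbf{Main obstacle.} The only delicate point is justifying the interchange of differentiation and expectation in Step 2. Dominated convergence handles it, using the uniform bound $K_\theta$ together with the boundedness of $h$, which holds since $\mathcal F$ is bounded. Beyond that, the work is bookkeeping of explicit derivative bounds, and none of it differs structurally from the OR case.
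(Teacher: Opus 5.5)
Your proposal is correct and follows essentially the same route as the paper. Like the paper, you bound the plug-in contrast and the two AIPW coefficients on $[c_0,1-c_0]^3$ to obtain $B_{\varphi,\theta}$ and the Lipschitz constant $2(B_{\varphi,\theta}+M)$, then use boundedness of the second nuisance partials on the compact box to get $\beta_{2,\theta}$. Where you go further than the paper (the Cauchy--Schwarz step that yields $\|h\|_{L_2}\|k\|_{L_4(\ell_2)}^2$, the dominated-convergence justification, and the slightly sharper $\log OR$ constant via $q(1-q)\ge c_0(1-c_0)$), these are refinements of the same argument rather than a different one.
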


\begin{proof}
We treat the three remaining parameters in turn.

\noindent\textbf{Case $\log OR$.}
Under the boundary condition in Theorem~\ref{thm:or_orthogonal},
\[
|\log OR_\eta(X)|
\le
2\log\!\left(\frac{1-c_0}{c_0}\right),
\qquad
\frac{1}{q_t(X)\{1-q_t(X)\}}
\le \frac{1}{c_0^2},
\qquad
\frac{1}{\pi(X)},\frac{1}{1-\pi(X)}
\le \frac{1}{c_0}.
\]
Since $|Y-q_t(X)|\le 1$, it follows that
\[
|\varphi_{\log OR}(Z;\eta)|
\le
2\log\!\left(\frac{1-c_0}{c_0}\right)+\frac{2}{c_0^3}
=:B_{\varphi,\log OR}<\infty.
\]
Moreover, $\varphi_{\log OR}(Z;\eta)$ is twice continuously differentiable in $(q_1,q_0,\pi)$ on the compact domain $[c_0,1-c_0]^3$. Therefore all second partial derivatives are uniformly bounded, and there exists $\beta_{2,\log OR}=\beta_{2,\log OR}(c_0)$ such that
\[
\left|D_\eta^2D_f\mathcal{L}_{\log OR}(\log OR,\bar\eta)[h,k,k]\right|
\le
\beta_{2,\log OR}\,\|h\|_{L_2}\,\|k\|_{\mathcal{G}}^2.
\]
The corresponding pointwise loss is Lipschitz in $f$ with constant
\[
L_{\log OR}=2(B_{\varphi,\log OR}+M).
\]

\noindent\textbf{Case $RR$.}
Again under the boundary condition in Theorem~\ref{thm:or_orthogonal},
\[
|RR_\eta(X)|\le \frac{1-c_0}{c_0},
\qquad
\frac{1}{q_0(X)\pi(X)}\le \frac{1}{c_0^2},
\qquad
\frac{q_1(X)}{q_0(X)^2\{1-\pi(X)\}}
\le \frac{1-c_0}{c_0^3}.
\]
Hence
\[
|\varphi_{RR}(Z;\eta)|
\le
\frac{1-c_0}{c_0}+\frac{1}{c_0^2}+\frac{1-c_0}{c_0^3}
=:B_{\varphi,RR}<\infty.
\]
Since $\varphi_{RR}(Z;\eta)$ is a rational function of $(q_1,q_0,\pi)$ with denominators bounded below by $c_0$, all second partial derivatives are uniformly bounded on $[c_0,1-c_0]^3$. Therefore there exists $\beta_{2,RR}=\beta_{2,RR}(c_0)$ such that
\[
\left|D_\eta^2D_f\mathcal{L}_{RR}(RR,\bar\eta)[h,k,k]\right|
\le
\beta_{2,RR}\,\|h\|_{L_2}\,\|k\|_{\mathcal{G}}^2.
\]
The pointwise loss is Lipschitz in $f$ with constant
\[
L_{RR}=2(B_{\varphi,RR}+M).
\]

\noindent\textbf{Case $\log RR$.}
Under the same boundary condition,
\[
|\log RR_\eta(X)|
\le
\log\!\left(\frac{1-c_0}{c_0}\right),
\qquad
\frac{1}{q_1(X)\pi(X)},\frac{1}{q_0(X)\{1-\pi(X)\}}
\le \frac{1}{c_0^2}.
\]
Hence
\[
|\varphi_{\log RR}(Z;\eta)|
\le
\log\!\left(\frac{1-c_0}{c_0}\right)+\frac{2}{c_0^2}
=:B_{\varphi,\log RR}<\infty.
\]
Because $\varphi_{\log RR}(Z;\eta)$ is twice continuously differentiable on $[c_0,1-c_0]^3$, all second partial derivatives are uniformly bounded there. Thus there exists $\beta_{2,\log RR}=\beta_{2,\log RR}(c_0)$ such that
\[
\left|D_\eta^2D_f\mathcal{L}_{\log RR}(\log RR,\bar\eta)[h,k,k]\right|
\le
\beta_{2,\log RR}\,\|h\|_{L_2}\,\|k\|_{\mathcal{G}}^2.
\]
The pointwise loss is Lipschitz in $f$ with constant
\[
L_{\log RR}=2(B_{\varphi,\log RR}+M).
\]

Combining these verifications with parts~(i) and~(ii) of Theorem~\ref{thm:or_orthogonal} and with Theorem~3 of \citet{foster2023orthogonal} yields part~(iii) for $\log OR$, $RR$, and $\log RR$.
\end{proof}

\subsubsection{Remark on the weighted-loss extension to the R-learner}

Because the R-learner uses the same pseudo-outcomes as the DR-learner, Theorem~\ref{thm:dr_transforms} applies directly to the R-learner pseudo-outcomes. A weighted analogue of Theorem~\ref{thm:or_orthogonal} would replace the unweighted loss
\[
\mathcal{L}_\theta(f,\eta)=E\!\left[(\varphi_\theta(Z;\eta)-f(X))^2\right]
\]
by the propensity-overlap-weighted loss
\[
\mathcal{L}^{R}_\theta(f,\eta)=E\!\left[\pi(X)\{1-\pi(X)\}(\varphi_\theta(Z;\eta)-f(X))^2\right],
\]
and then verify the corresponding weighted orthogonality and regularity conditions, as in the weighted orthogonal learner framework of \citet{morzywolek2025weighted}. This gives the natural weighted-loss analogue of Theorem~\ref{thm:or_orthogonal} for the R-learner.

\subsection{Plots}






\begin{figure}[H]
\centering
\includegraphics[width=1\textwidth]{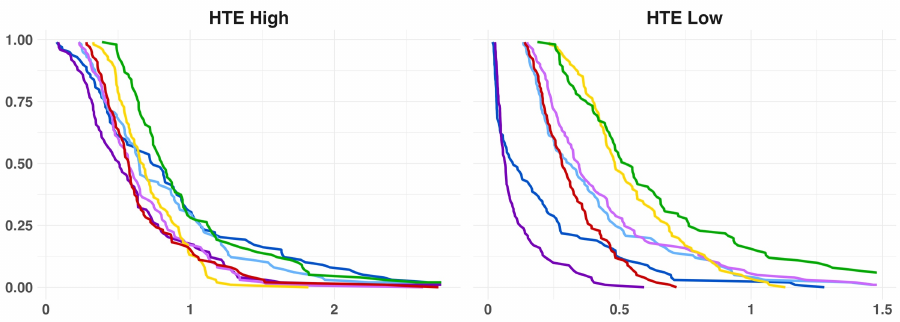}
\caption*{\footnotesize (a) iMSE}

\includegraphics[width=1\textwidth]{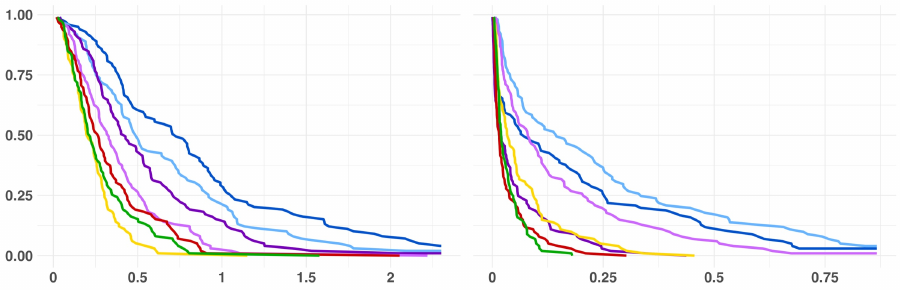}
\caption*{\footnotesize (b) iBias$^2$}

\includegraphics[width=1\textwidth]{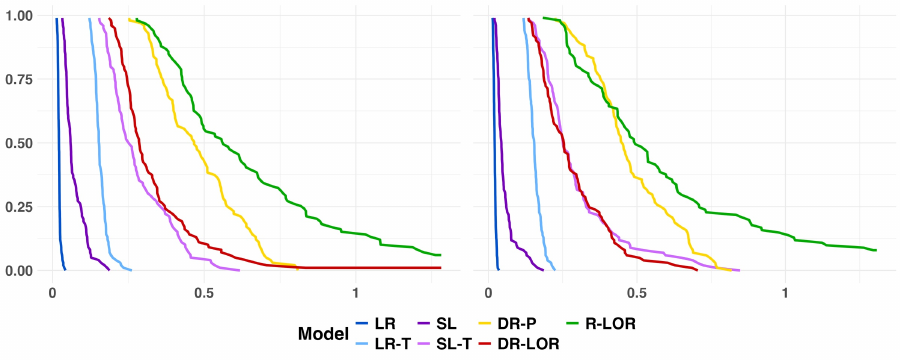}
\caption*{\footnotesize (c) iVariance}

\caption{Conditional OR: interaction order 3, sample size 1000}
\label{fig:COR_inter3_1000}
\end{figure}

\begin{figure}[H]
\centering
\includegraphics[width=1\textwidth]{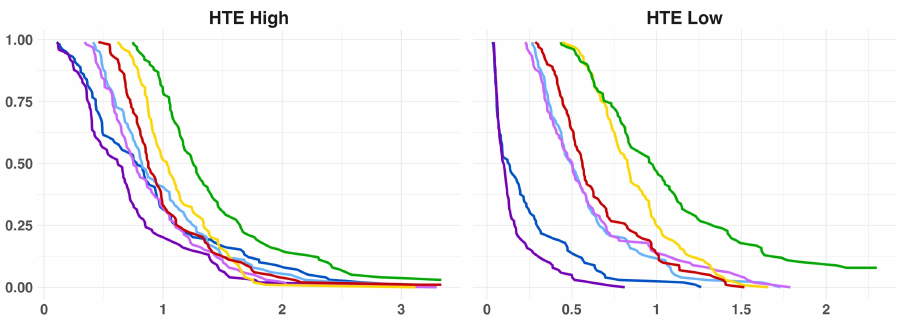}
\caption*{\footnotesize (a) iMSE}

\includegraphics[width=1\textwidth]{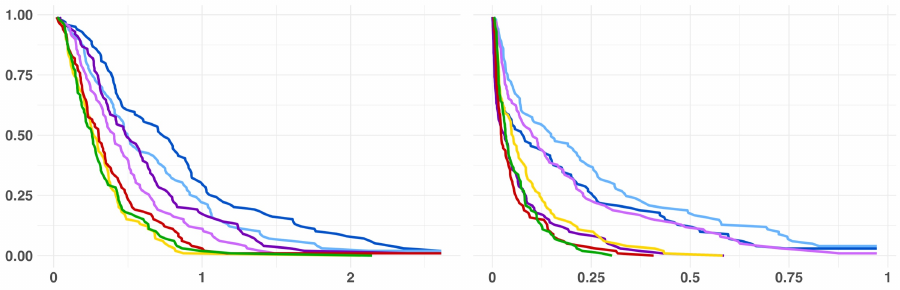}
\caption*{\footnotesize (b) iBias$^2$}

\includegraphics[width=1\textwidth]{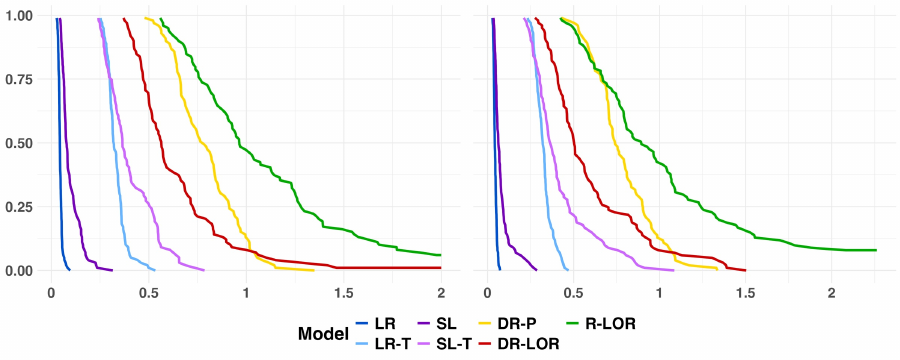}
\caption*{\footnotesize (c) iVariance}

\caption{Conditional OR: interaction order 3, sample size 500}
\label{fig:COR_inter3_500}
\end{figure}





\begin{figure}[H]
\centering
\includegraphics[width=1\textwidth]{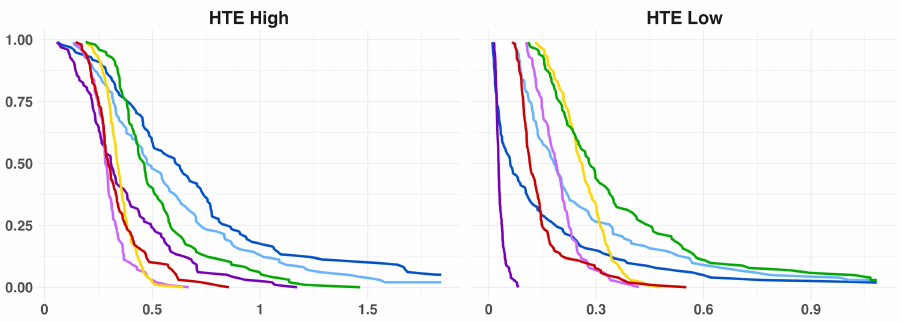}
\caption*{\footnotesize (a) iMSE}

\includegraphics[width=1\textwidth]{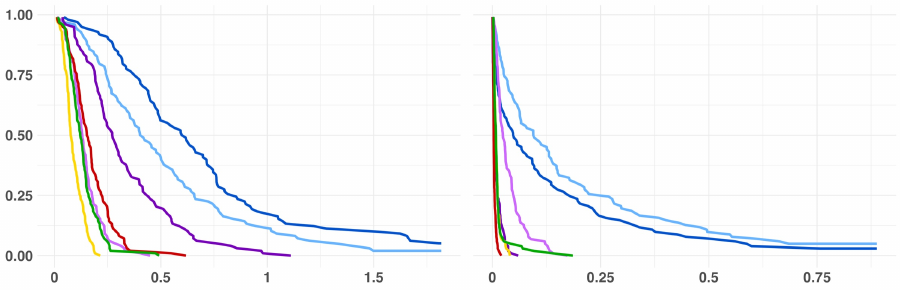}
\caption*{\footnotesize (b) iBias$^2$}

\includegraphics[width=1\textwidth]{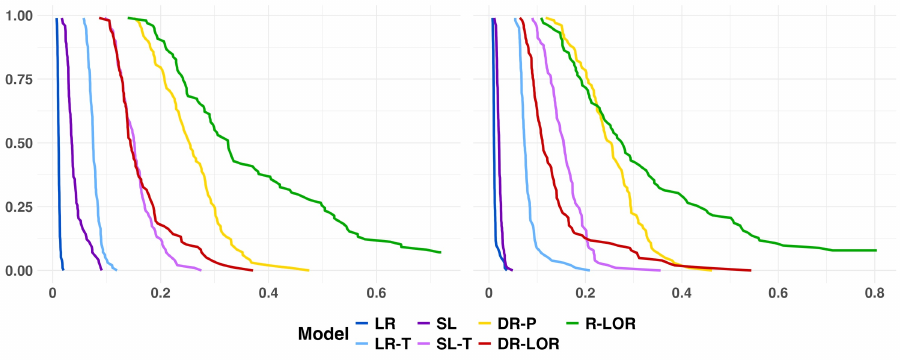}
\caption*{\footnotesize (c) iVariance}

\caption{Conditional OR: interaction order 2, sample size 2000}
\label{fig:COR_inter2_2000}
\end{figure}

\begin{figure}[H]
\centering
\includegraphics[width=1\textwidth]{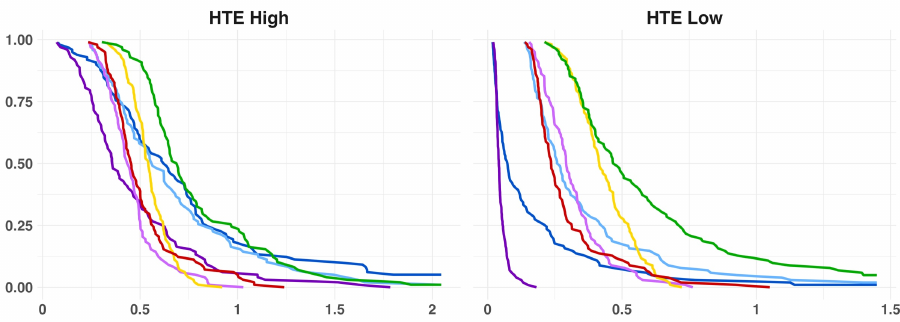}
\caption*{\footnotesize (a) iMSE}

\includegraphics[width=1\textwidth]{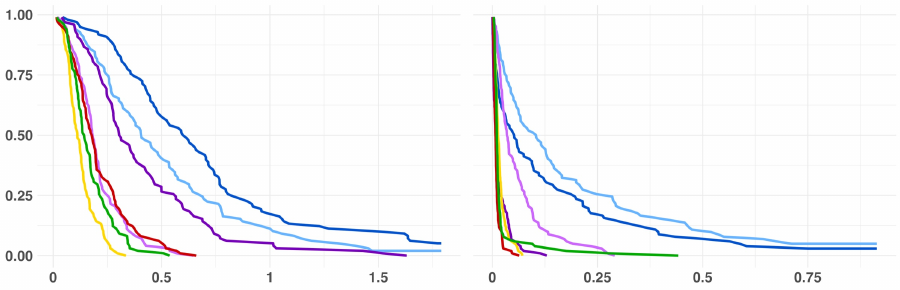}
\caption*{\footnotesize (b) iBias$^2$}

\includegraphics[width=1\textwidth]{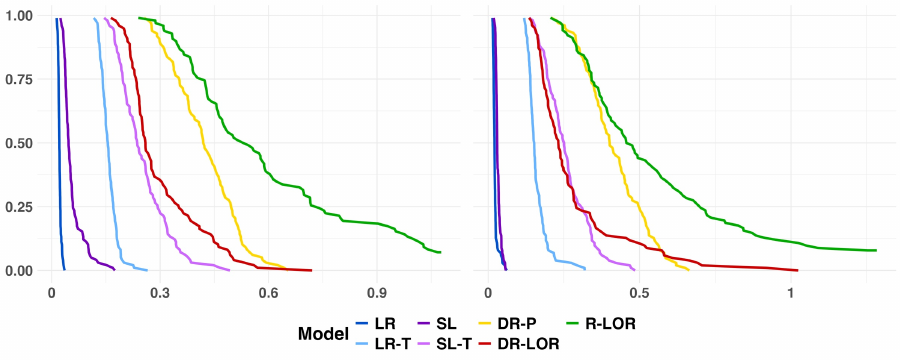}
\caption*{\footnotesize (c) iVariance}

\caption{Conditional OR: interaction order 2, sample size 1000}
\label{fig:COR_inter2_1000}
\end{figure}

\begin{figure}[H]
\centering
\includegraphics[width=1\textwidth]{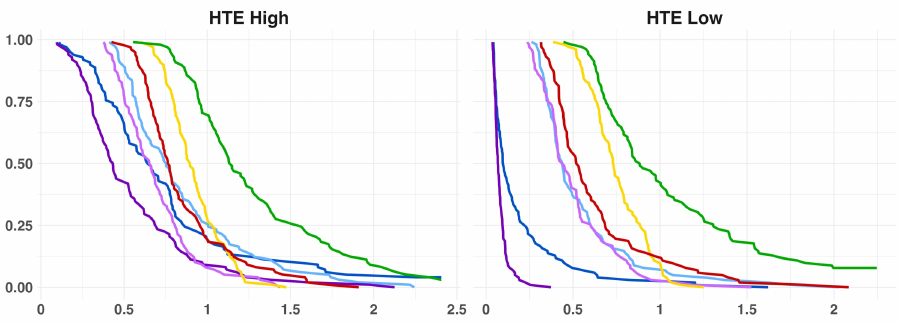}
\caption*{\footnotesize (a) iMSE}

\includegraphics[width=1\textwidth]{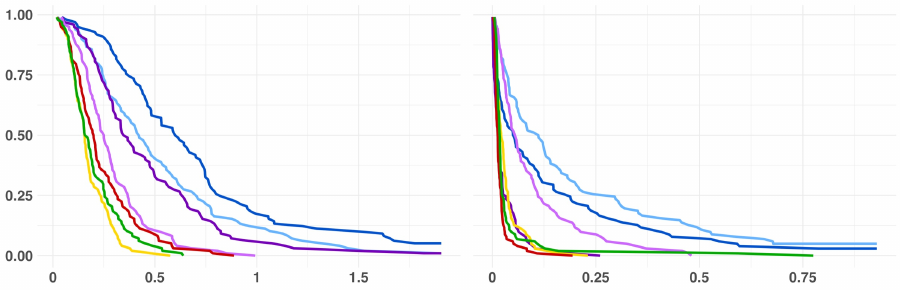}
\caption*{\footnotesize (b) iBias$^2$}

\includegraphics[width=1\textwidth]{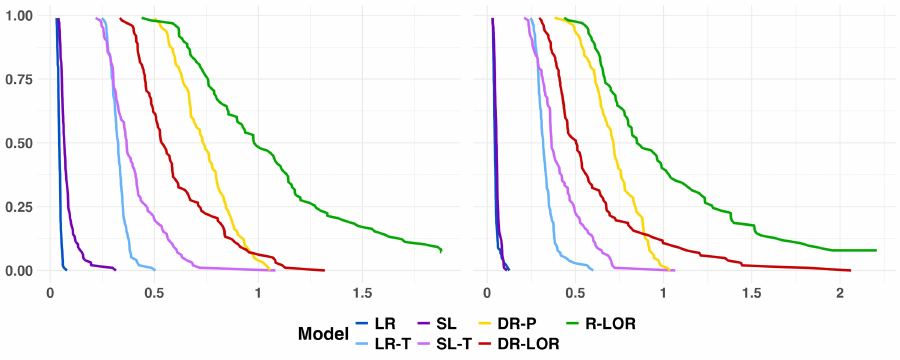}
\caption*{\footnotesize (c) iVariance}

\caption{Conditional OR: interaction order 2, sample size 500}
\label{fig:COR_inter2_500}
\end{figure}

\begin{figure}[H]
\centering
\includegraphics[width=1\textwidth]{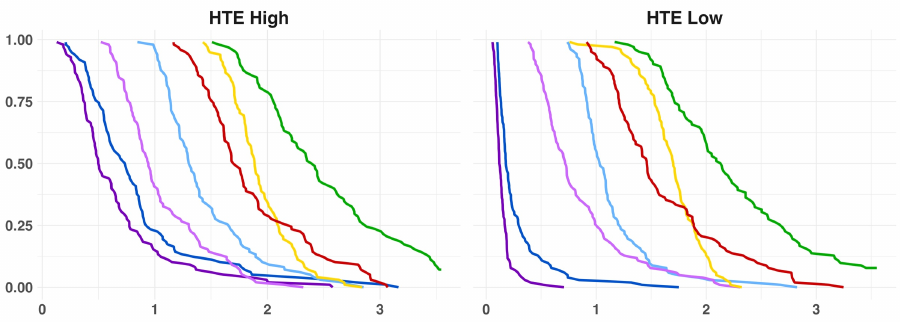}
\caption*{\footnotesize (a) iMSE}

\includegraphics[width=1\textwidth]{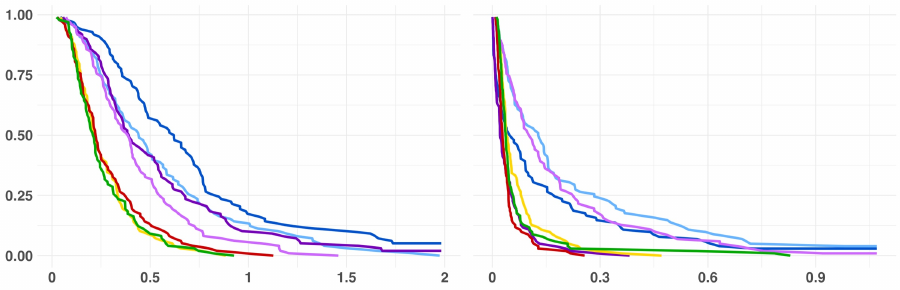}
\caption*{\footnotesize (b) iBias$^2$}

\includegraphics[width=1\textwidth]{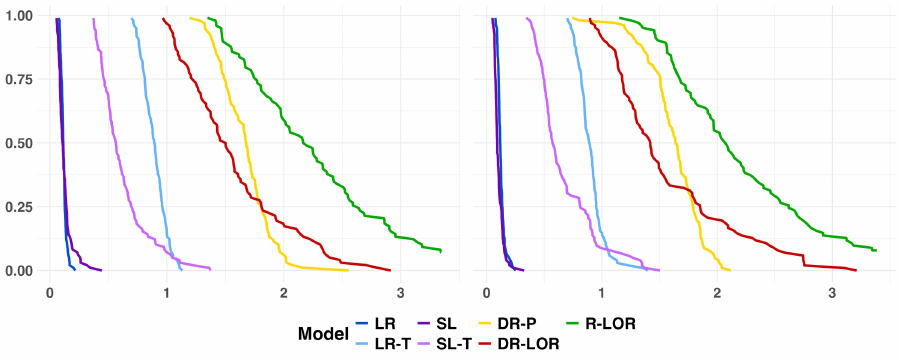}
\caption*{\footnotesize (c) iVariance}

\caption{Conditional OR: interaction order 2, sample size 200}
\label{fig:COR_inter2_200}
\end{figure}





\begin{figure}[H]
\centering
\includegraphics[width=1\textwidth]{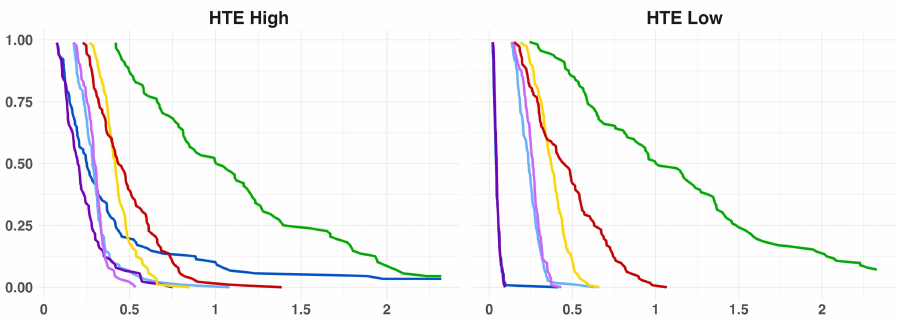}
\caption*{\footnotesize (a) iMSE}

\includegraphics[width=1\textwidth]{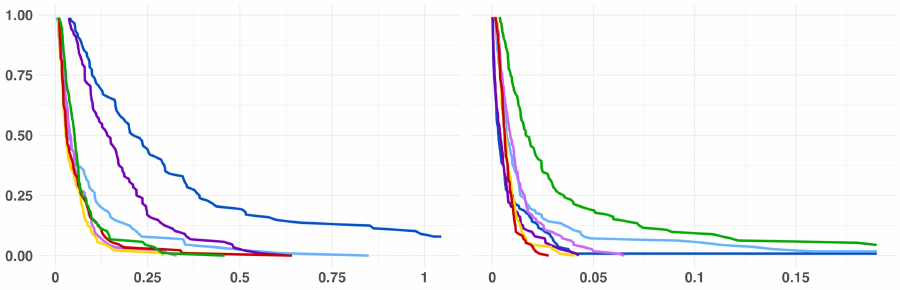}
\caption*{\footnotesize (b) iBias$^2$}

\includegraphics[width=1\textwidth]{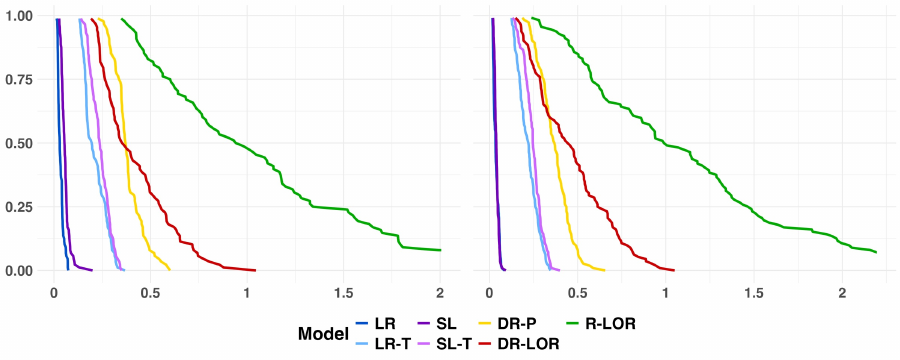}
\caption*{\footnotesize (c) iVariance}

\caption{Conditional OR: interaction order 1, sample size 1000}
\label{fig:COR_inter1_1000}
\end{figure}

\begin{figure}[H]
\centering
\includegraphics[width=1\textwidth]{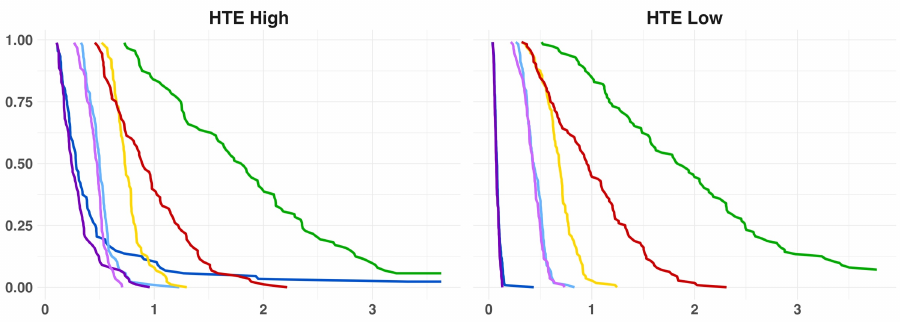}
\caption*{\footnotesize (a) iMSE}

\includegraphics[width=1\textwidth]{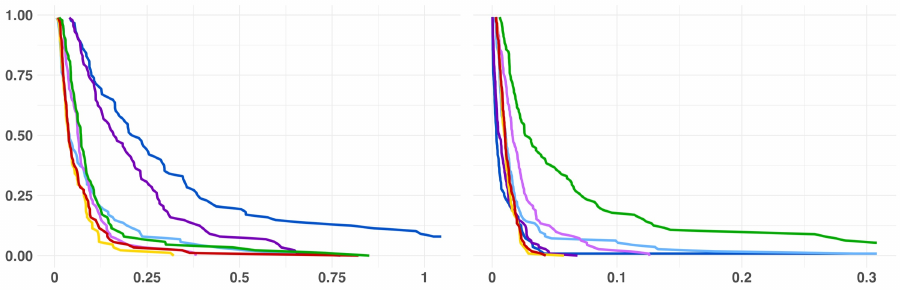}
\caption*{\footnotesize (b) iBias$^2$}

\includegraphics[width=1\textwidth]{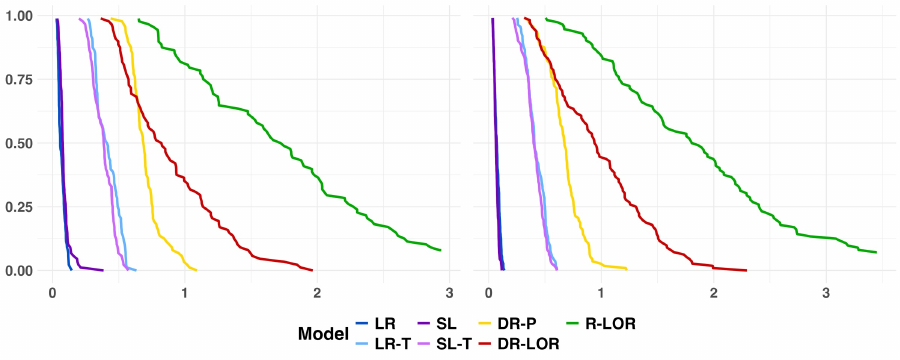}
\caption*{\footnotesize (c) iVariance}

\caption{Conditional OR: interaction order 1, sample size 500}
\label{fig:COR_inter1_500}
\end{figure}






\begin{figure}[H]
\centering
\includegraphics[width=1\textwidth]{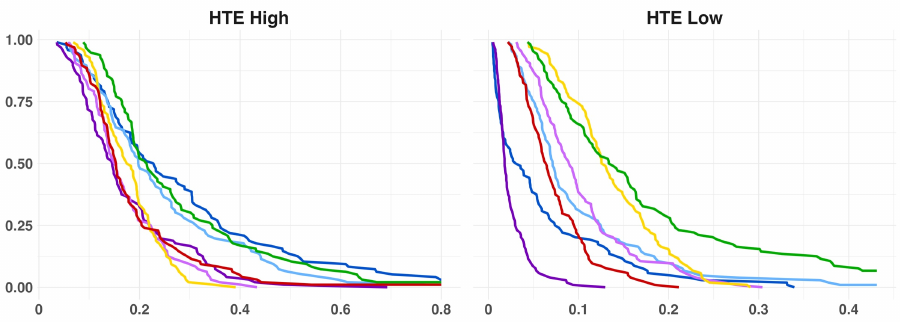}
\caption*{\footnotesize (a) iMSE}

\includegraphics[width=1\textwidth]{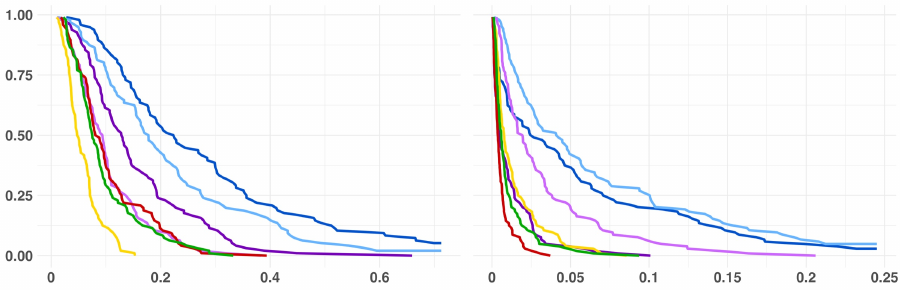}
\caption*{\footnotesize (b) iBias$^2$}

\includegraphics[width=1\textwidth]{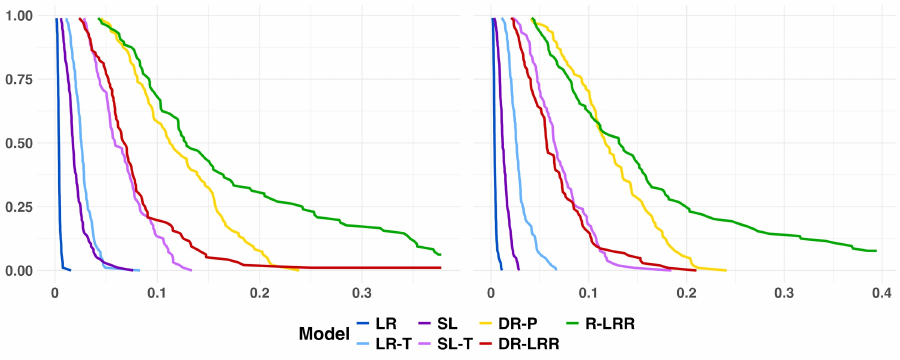}
\caption*{\footnotesize (c) iVariance}

\caption{Conditional RR: interaction order 3, sample size 2000}
\label{fig:CRR_inter3_2000}
\end{figure}

\begin{figure}[H]
\centering
\includegraphics[width=1\textwidth]{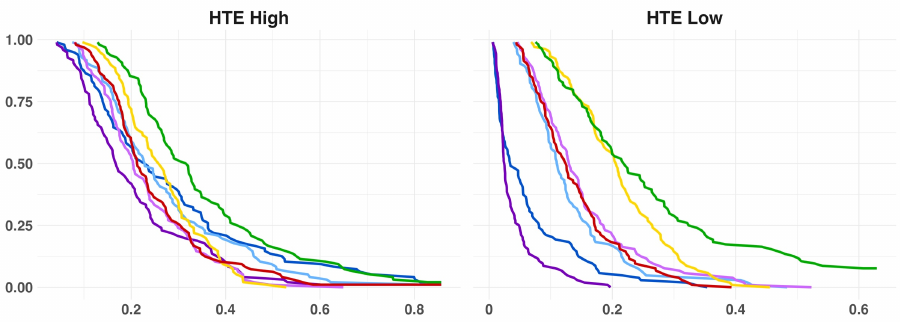}
\caption*{\footnotesize (a) iMSE}

\includegraphics[width=1\textwidth]{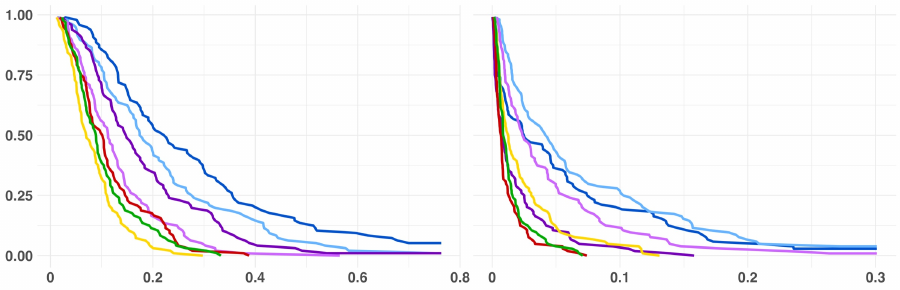}
\caption*{\footnotesize (b) iBias$^2$}

\includegraphics[width=1\textwidth]{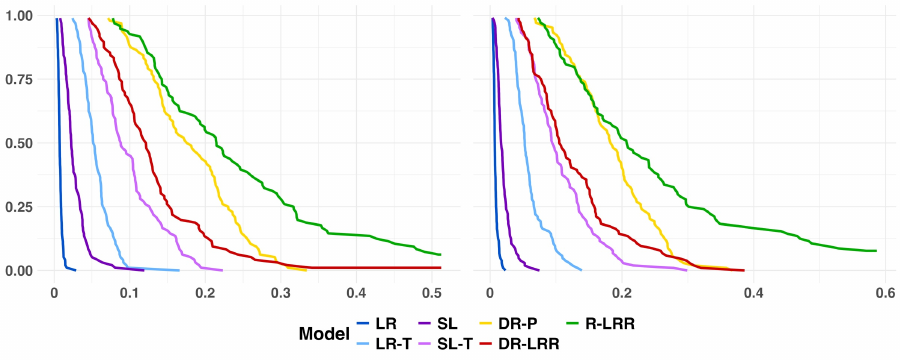}
\caption*{\footnotesize (c) iVariance}

\caption{Conditional RR: interaction order 3, sample size 1000}
\label{fig:CRR_inter3_1000}
\end{figure}

\begin{figure}[H]
\centering
\includegraphics[width=1\textwidth]{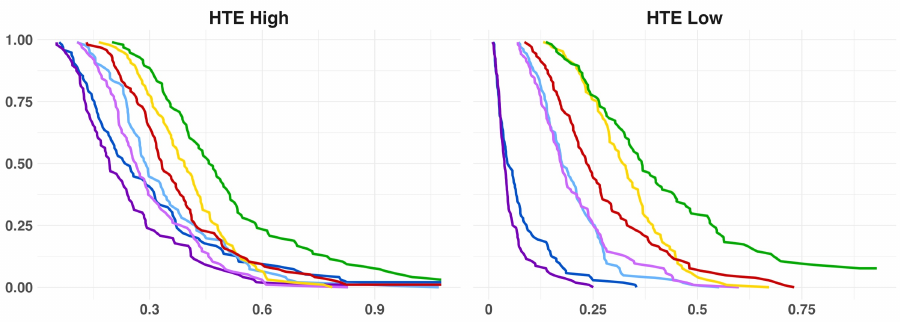}
\caption*{\footnotesize (a) iMSE}

\includegraphics[width=1\textwidth]{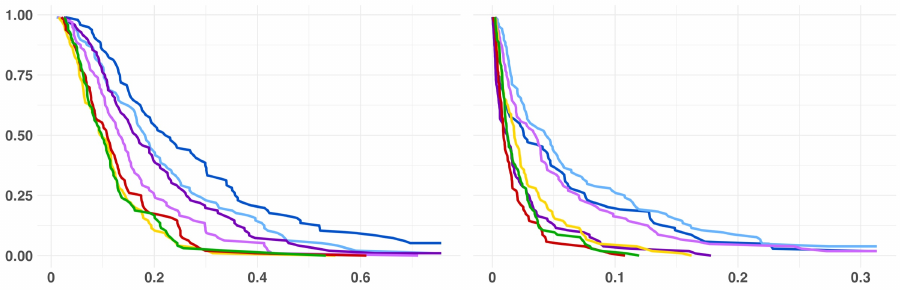}
\caption*{\footnotesize (b) iBias$^2$}

\includegraphics[width=1\textwidth]{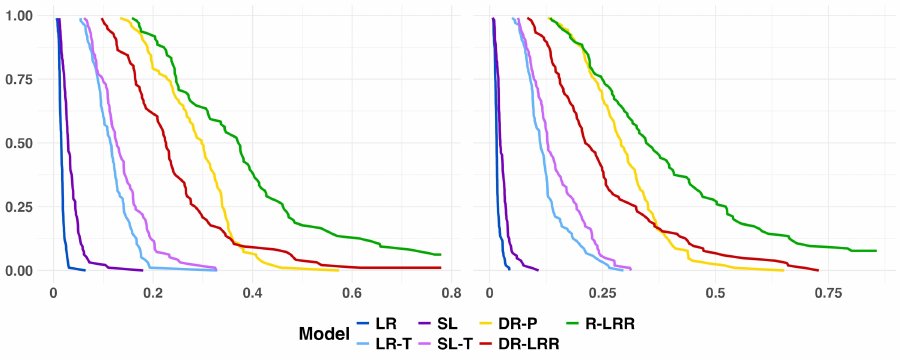}
\caption*{\footnotesize (c) iVariance}

\caption{Conditional RR: interaction order 3, sample size 500}
\label{fig:CRR_inter3_500}
\end{figure}

\begin{figure}[H]
\centering
\includegraphics[width=1\textwidth]{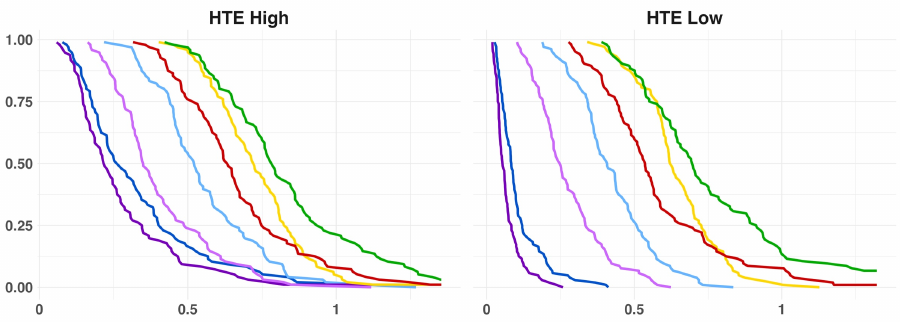}
\caption*{\footnotesize (a) iMSE}

\includegraphics[width=1\textwidth]{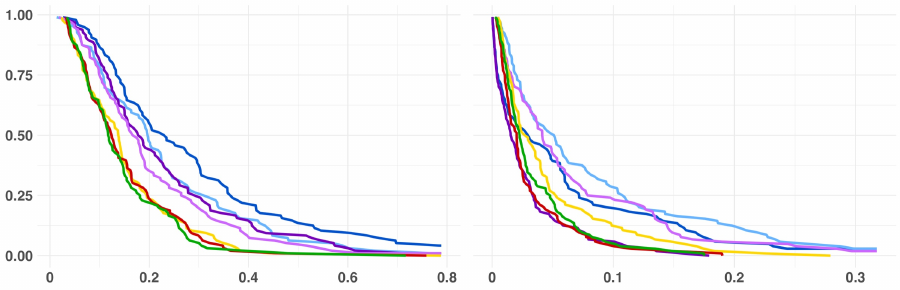}
\caption*{\footnotesize (b) iBias$^2$}

\includegraphics[width=1\textwidth]{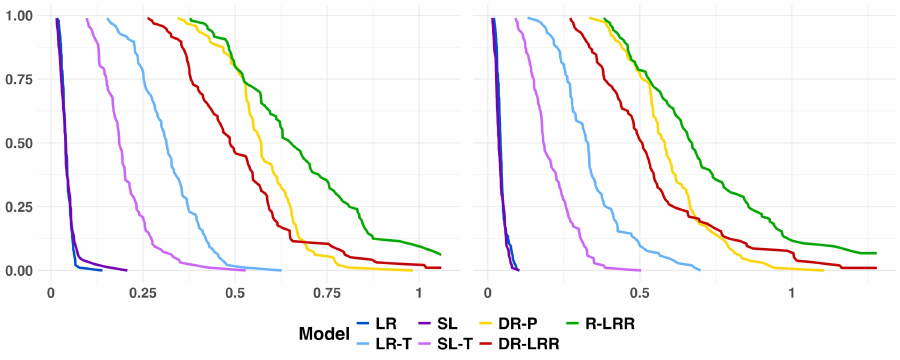}
\caption*{\footnotesize (c) iVariance}

\caption{Conditional RR: interaction order 3, sample size 200}
\label{fig:CRR_inter3_200}
\end{figure}

\begin{figure}[H]
\centering
\includegraphics[width=1\textwidth]{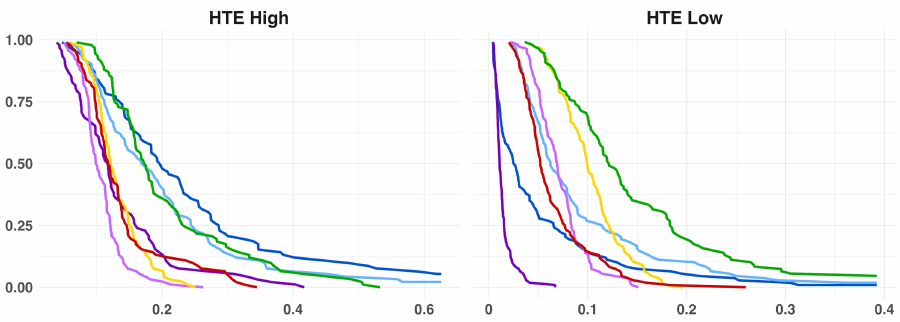}
\caption*{\footnotesize (a) iMSE}

\includegraphics[width=1\textwidth]{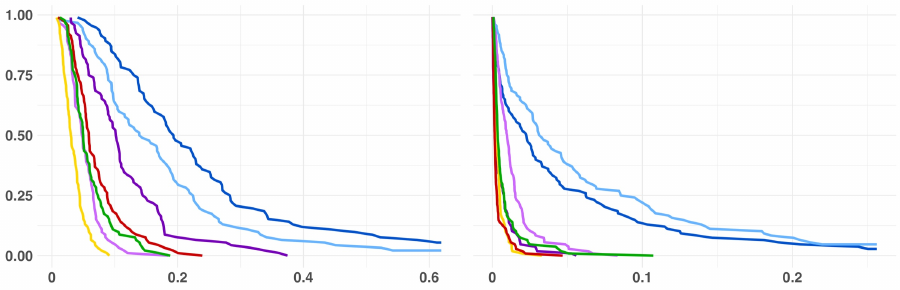}
\caption*{\footnotesize (b) iBias$^2$}

\includegraphics[width=1\textwidth]{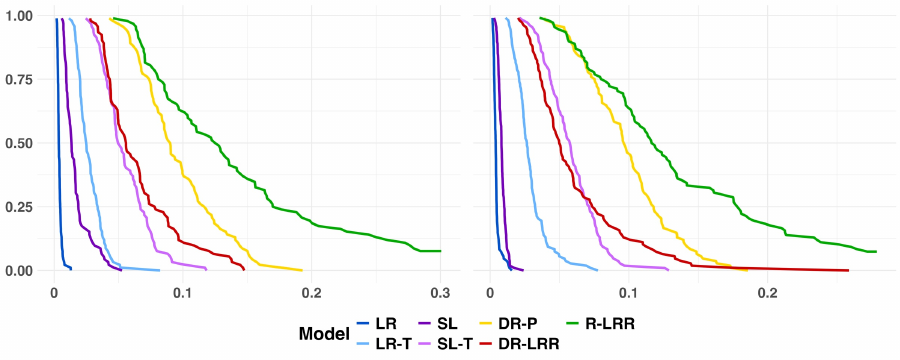}
\caption*{\footnotesize (c) iVariance}

\caption{Conditional RR: interaction order 2, sample size 2000}
\label{fig:CRR_inter2_2000}
\end{figure}

\begin{figure}[H]
\centering
\includegraphics[width=1\textwidth]{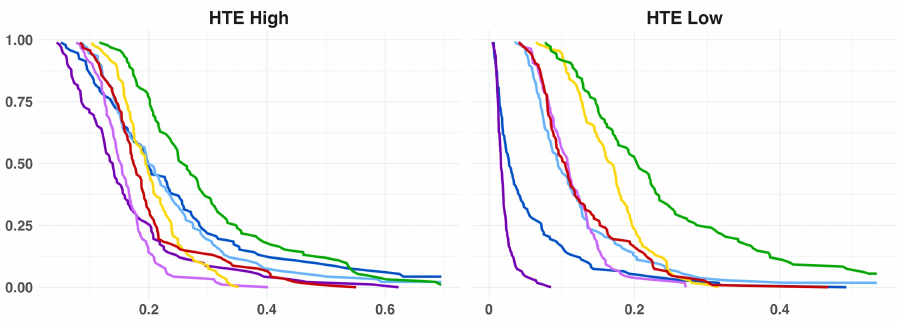}
\caption*{\footnotesize (a) iMSE}

\includegraphics[width=1\textwidth]{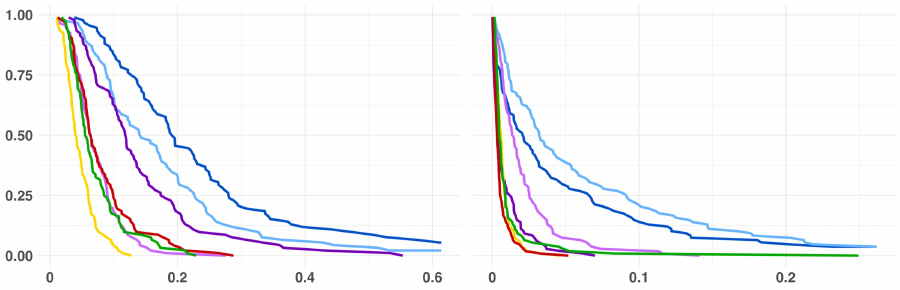}
\caption*{\footnotesize (b) iBias$^2$}

\includegraphics[width=1\textwidth]{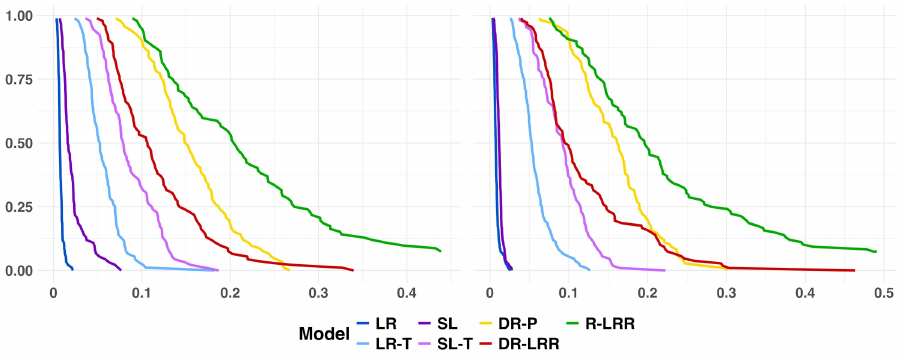}
\caption*{\footnotesize (c) iVariance}

\caption{Conditional RR: interaction order 2, sample size 1000}
\label{fig:CRR_inter2_1000}
\end{figure}

\begin{figure}[H]
\centering
\includegraphics[width=1\textwidth]{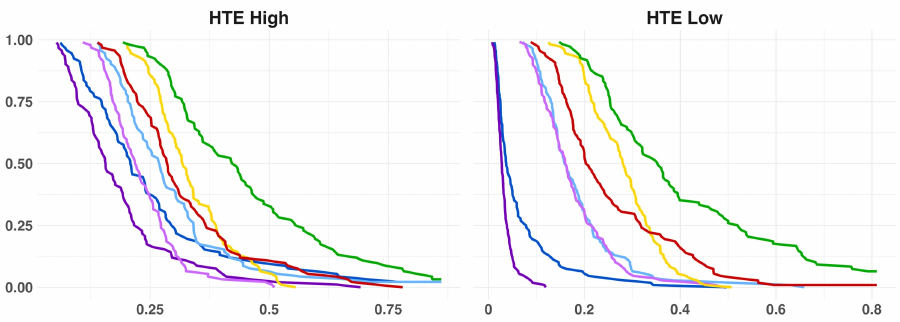}
\caption*{\footnotesize (a) iMSE}

\includegraphics[width=1\textwidth]{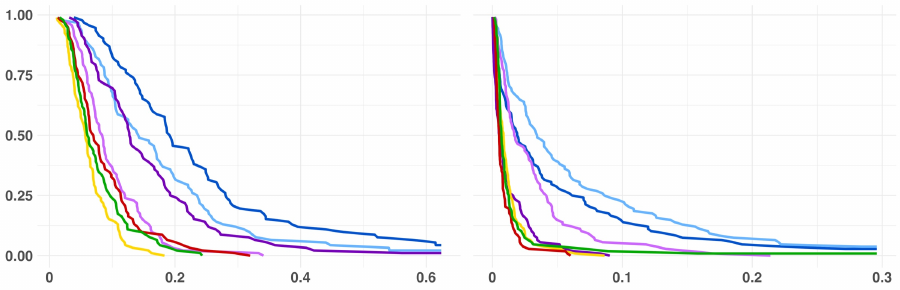}
\caption*{\footnotesize (b) iBias$^2$}

\includegraphics[width=1\textwidth]{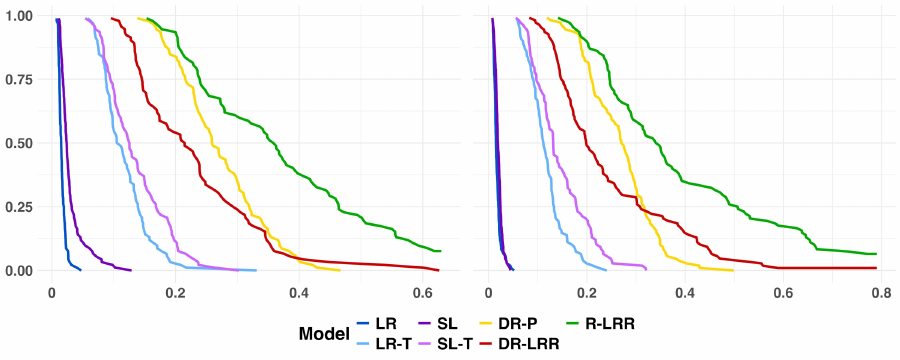}
\caption*{\footnotesize (c) iVariance}

\caption{Conditional RR: interaction order 2, sample size 500}
\label{fig:CRR_inter2_500}
\end{figure}

\begin{figure}[H]
\centering
\includegraphics[width=1\textwidth]{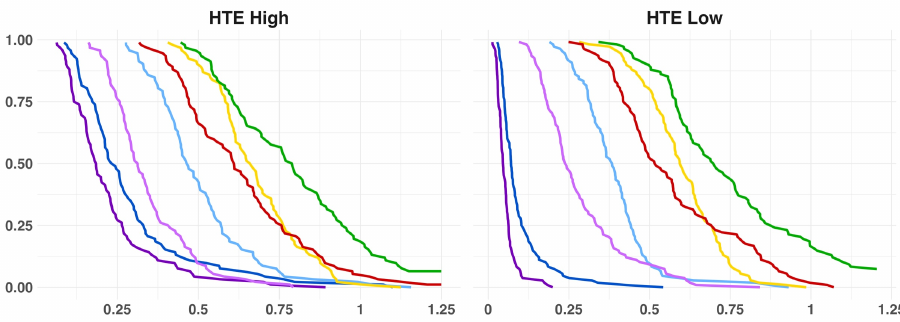}
\caption*{\footnotesize (a) iMSE}

\includegraphics[width=1\textwidth]{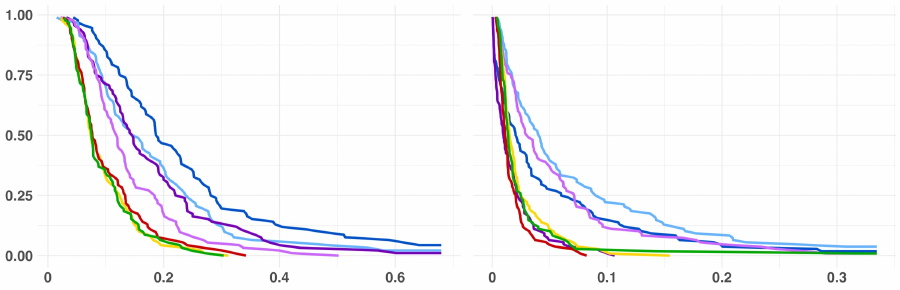}
\caption*{\footnotesize (b) iBias$^2$}

\includegraphics[width=1\textwidth]{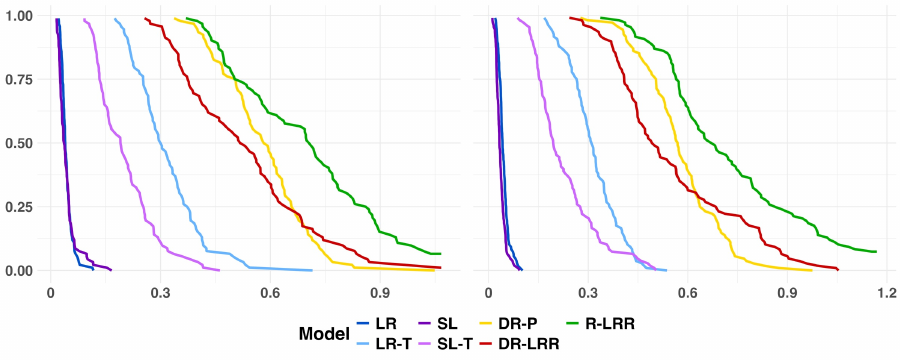}
\caption*{\footnotesize (c) iVariance}

\caption{Conditional RR: interaction order 2, sample size 200}
\label{fig:CRR_inter2_200}
\end{figure}

\begin{figure}[H]
\centering
\includegraphics[width=1\textwidth]{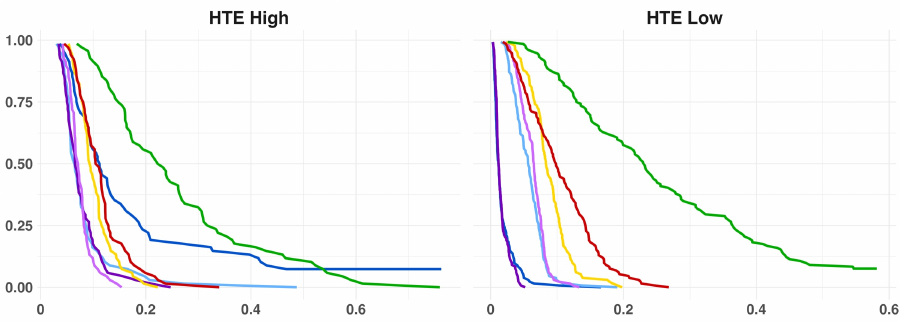}
\caption*{\footnotesize (a) iMSE}

\includegraphics[width=1\textwidth]{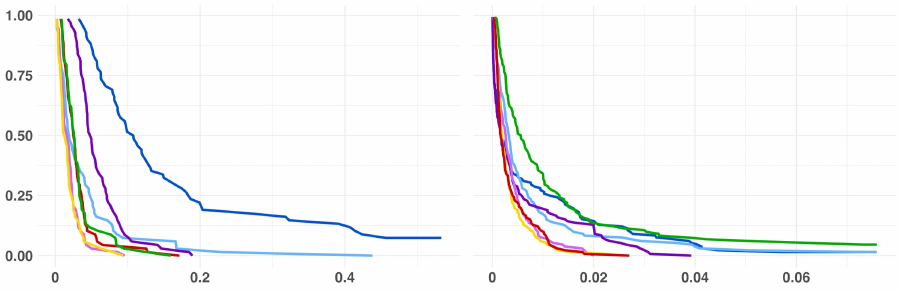}
\caption*{\footnotesize (b) iBias$^2$}

\includegraphics[width=1\textwidth]{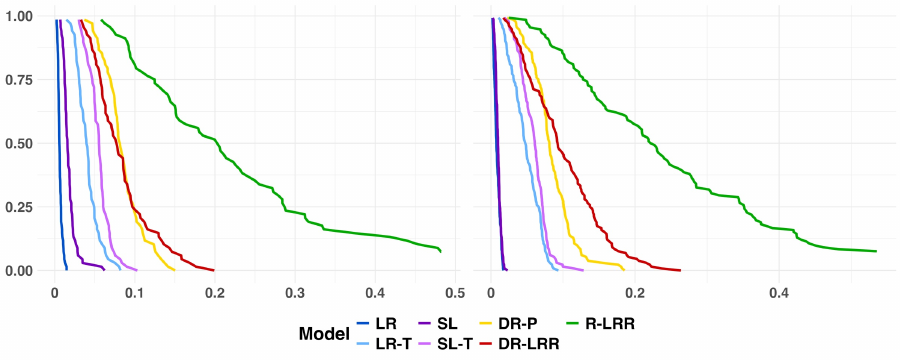}
\caption*{\footnotesize (c) iVariance}

\caption{Conditional RR: interaction order 1, sample size 2000}
\label{fig:CRR_inter1_2000}
\end{figure}

\begin{figure}[H]
\centering
\includegraphics[width=1\textwidth]{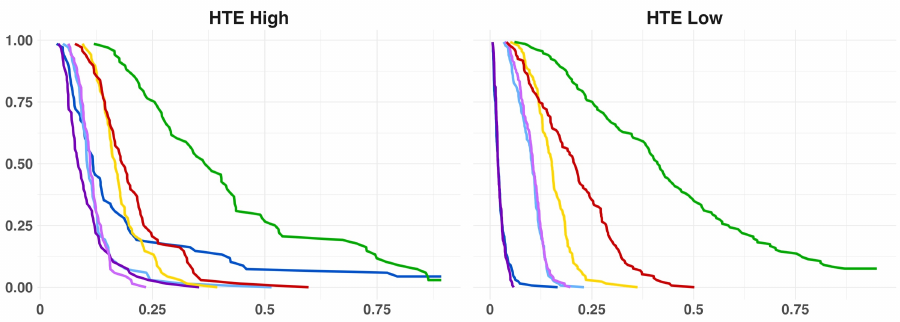}
\caption*{\footnotesize (a) iMSE}

\includegraphics[width=1\textwidth]{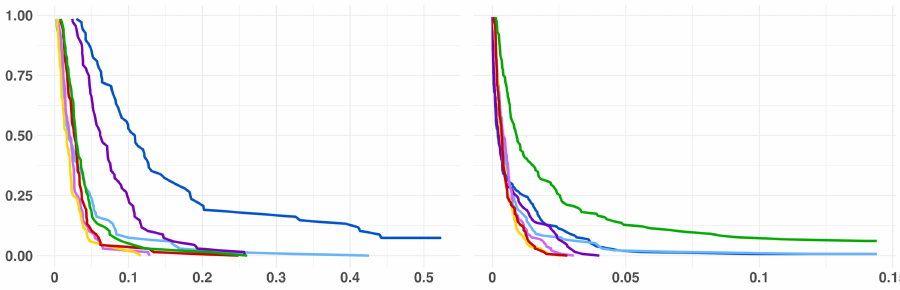}
\caption*{\footnotesize (b) iBias$^2$}

\includegraphics[width=1\textwidth]{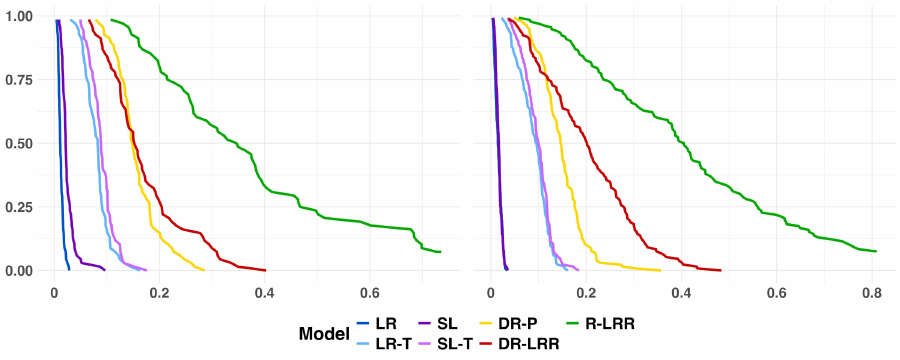}
\caption*{\footnotesize (c) iVariance}

\caption{Conditional RR: interaction order 1, sample size 1000}
\label{fig:CRR_inter1_1000}
\end{figure}

\begin{figure}[H]
\centering
\includegraphics[width=1\textwidth]{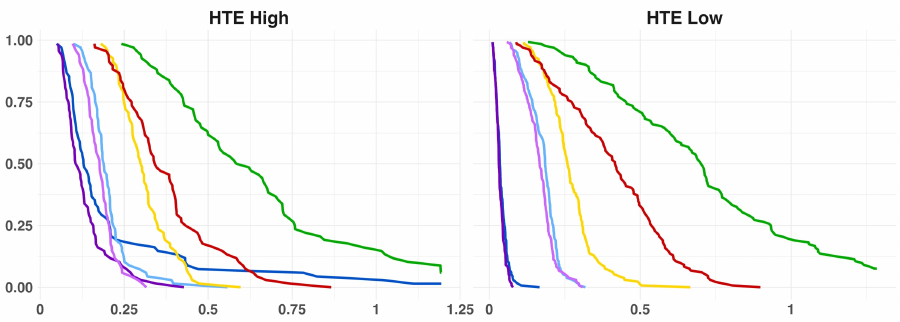}
\caption*{\footnotesize (a) iMSE}

\includegraphics[width=1\textwidth]{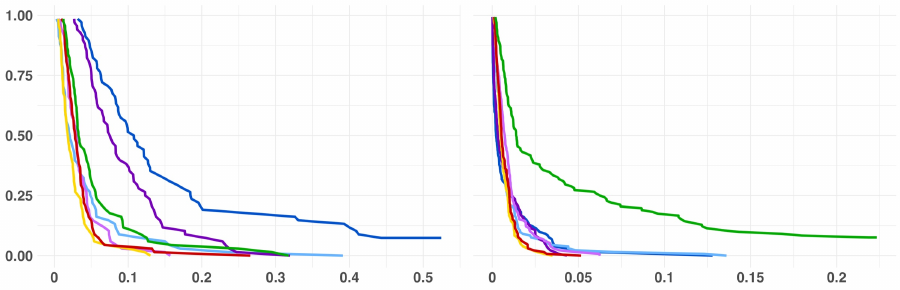}
\caption*{\footnotesize (b) iBias$^2$}

\includegraphics[width=1\textwidth]{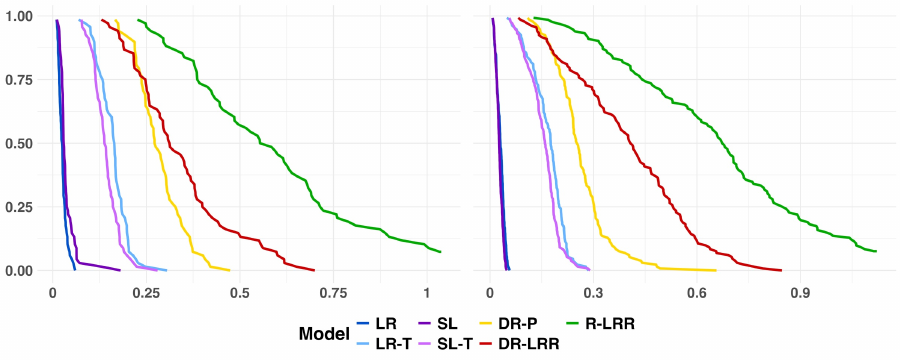}
\caption*{\footnotesize (c) iVariance}

\caption{Conditional RR: interaction order 1, sample size 500}
\label{fig:CRR_inter1_500}
\end{figure}

\begin{figure}[H]
\centering
\includegraphics[width=1\textwidth]{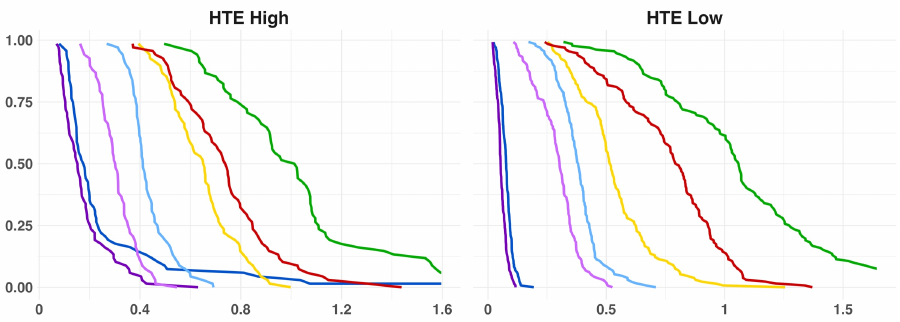}
\caption*{\footnotesize (a) iMSE}

\includegraphics[width=1\textwidth]{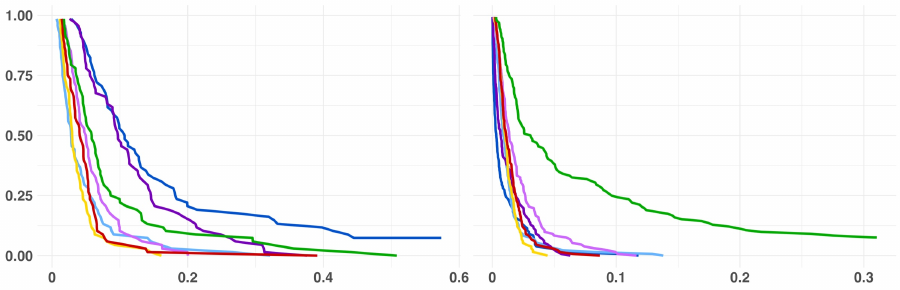}
\caption*{\footnotesize (b) iBias$^2$}

\includegraphics[width=1\textwidth]{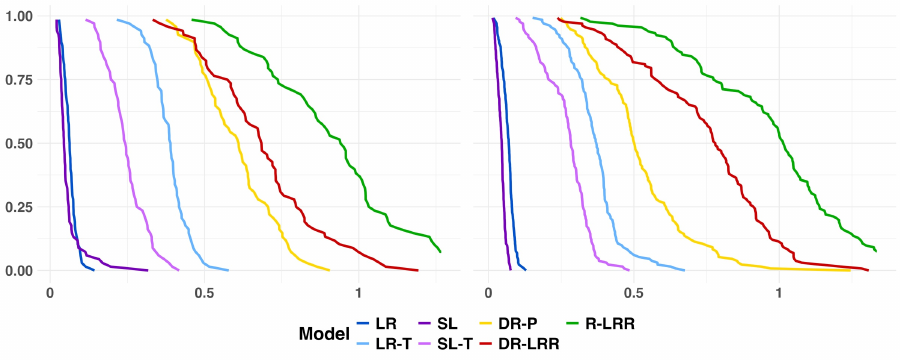}
\caption*{\footnotesize (c) iVariance}

\caption{Conditional RR: interaction order 1, sample size 200}
\label{fig:CRR_inter1_200}
\end{figure}


\begin{figure}[H]
\centering
\includegraphics[width=1\textwidth]{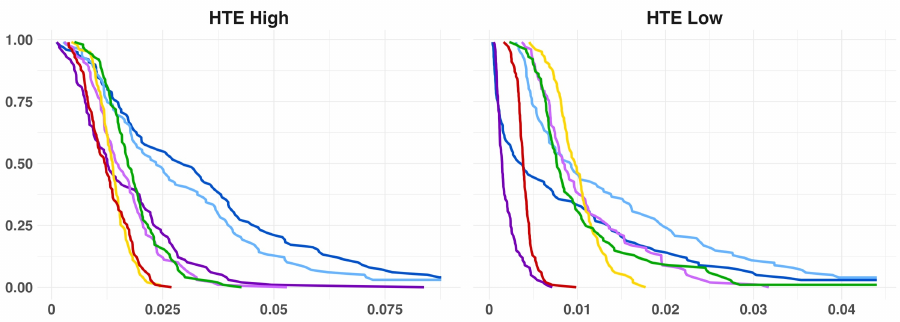}
\caption*{\footnotesize (a) iMSE}

\includegraphics[width=1\textwidth]{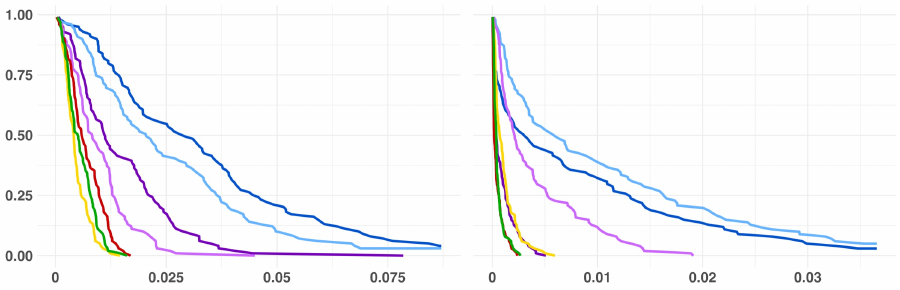}
\caption*{\footnotesize (b) iBias$^2$}

\includegraphics[width=1\textwidth]{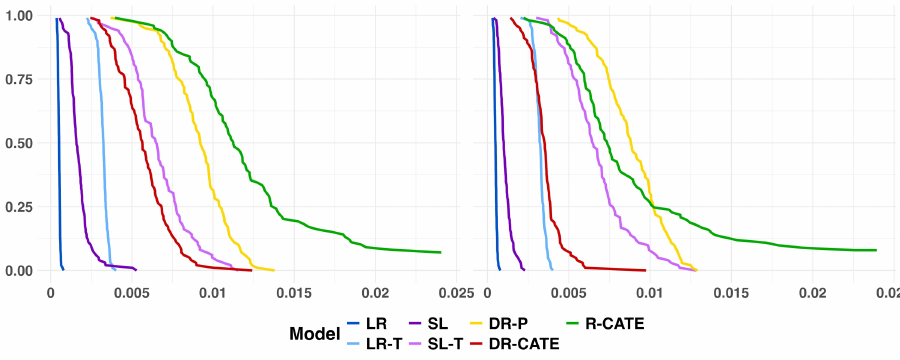}
\caption*{\footnotesize (c) iVariance}

\caption{Conditional ATE: interaction order 3, sample size 2000}
\label{fig:CATE_inter3_2000}
\end{figure}

\begin{figure}[H]
\centering
\includegraphics[width=1\textwidth]{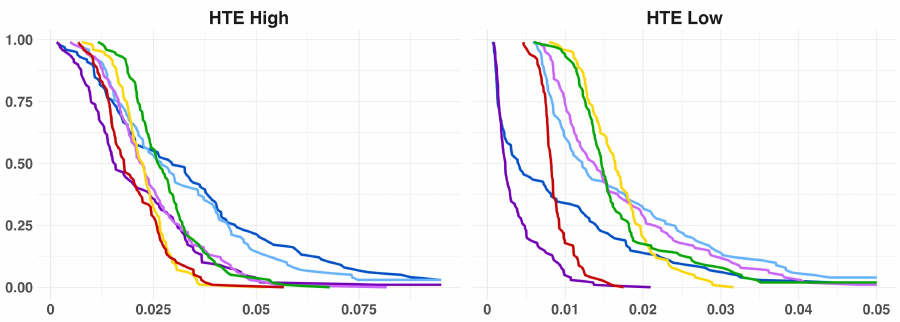}
\caption*{\footnotesize (a) iMSE}

\includegraphics[width=1\textwidth]{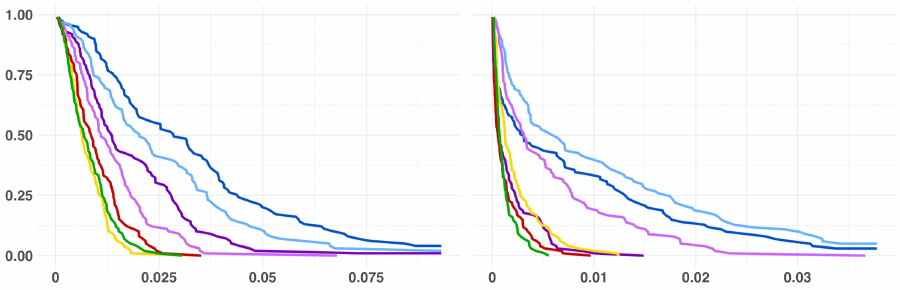}
\caption*{\footnotesize (b) iBias$^2$}

\includegraphics[width=1\textwidth]{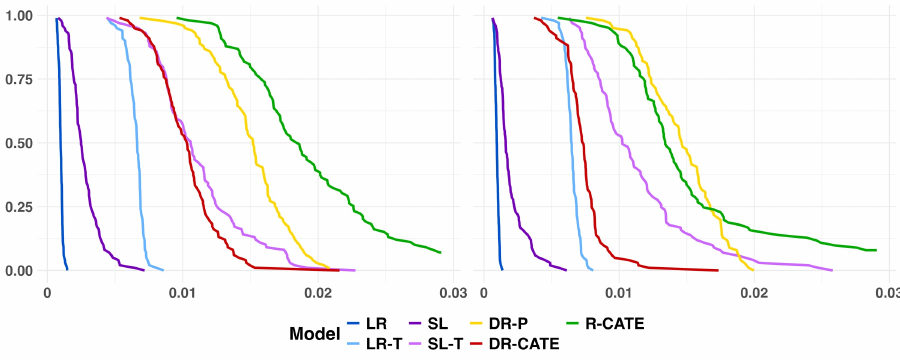}
\caption*{\footnotesize (c) iVariance}

\caption{Conditional ATE: interaction order 3, sample size 1000}
\label{fig:CATE_inter3_1000}
\end{figure}

\begin{figure}[H]
\centering
\includegraphics[width=1\textwidth]{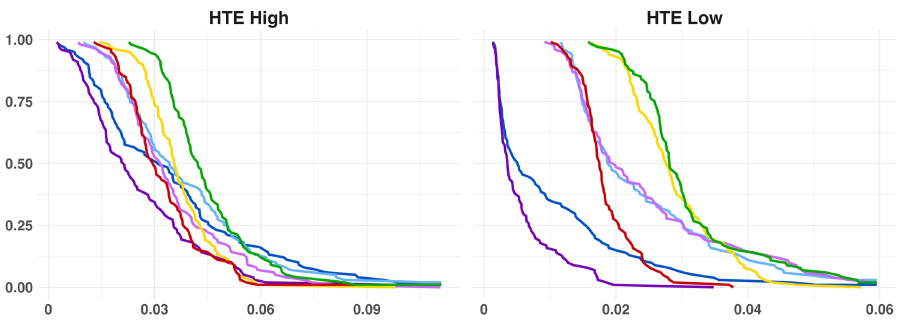}
\caption*{\footnotesize (a) iMSE}

\includegraphics[width=1\textwidth]{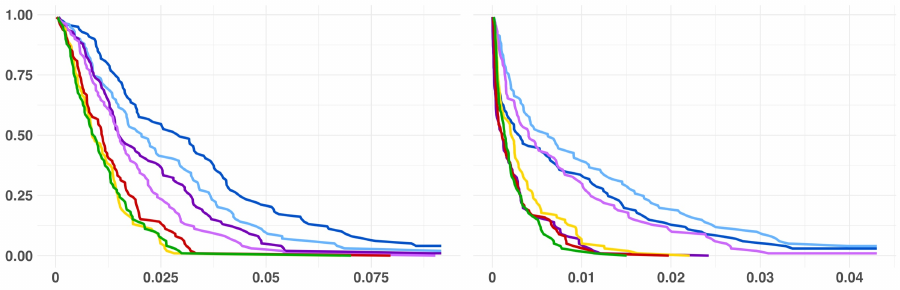}
\caption*{\footnotesize (b) iBias$^2$}

\includegraphics[width=1\textwidth]{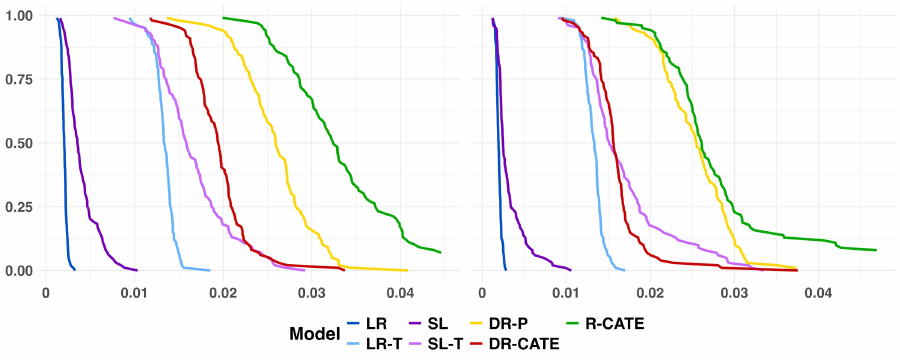}
\caption*{\footnotesize (c) iVariance}

\caption{Conditional ATE: interaction order 3, sample size 500}
\label{fig:CATE_inter3_500}
\end{figure}

\begin{figure}[H]
\centering
\includegraphics[width=1\textwidth]{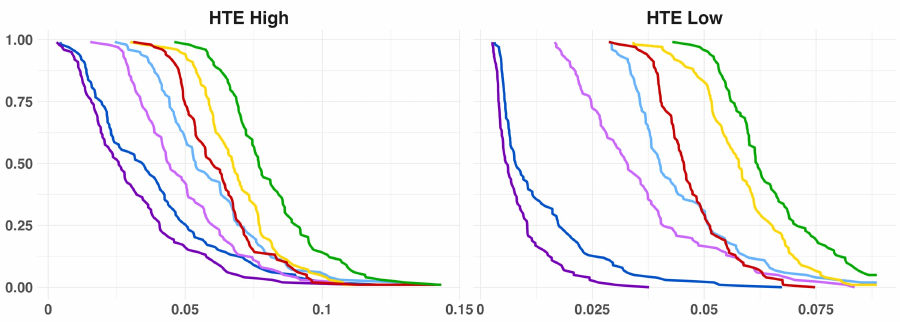}
\caption*{\footnotesize (a) iMSE}

\includegraphics[width=1\textwidth]{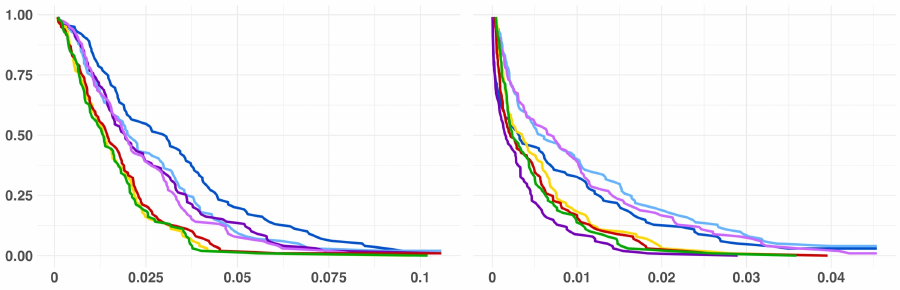}
\caption*{\footnotesize (b) iBias$^2$}

\includegraphics[width=1\textwidth]{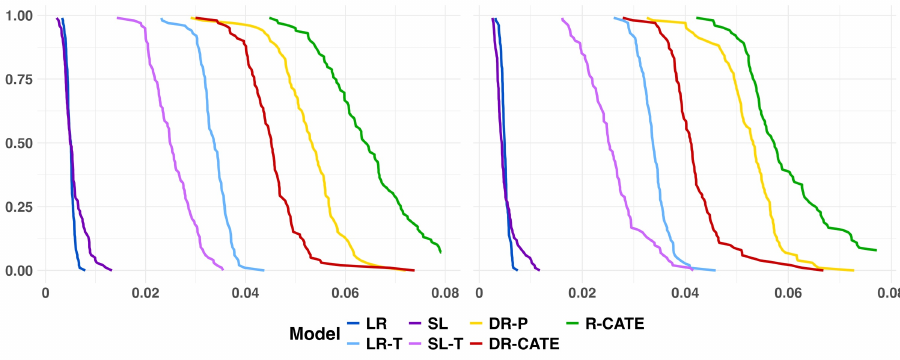}
\caption*{\footnotesize (c) iVariance}

\caption{Conditional ATE: interaction order 3, sample size 200}
\label{fig:CATE_inter3_200}
\end{figure}

\begin{figure}[H]
\centering
\includegraphics[width=1\textwidth]{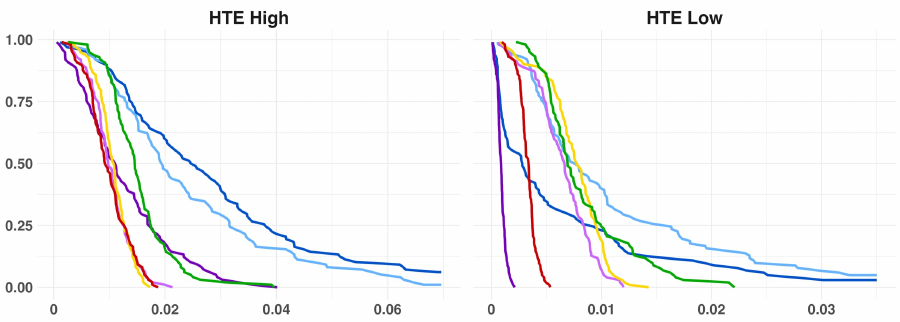}
\caption*{\footnotesize (a) iMSE}

\includegraphics[width=1\textwidth]{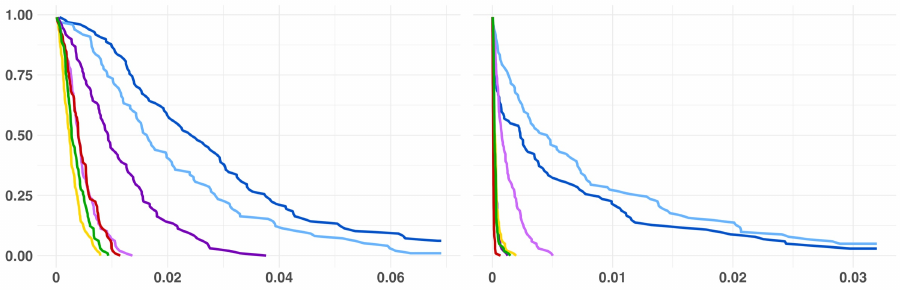}
\caption*{\footnotesize (b) iBias$^2$}

\includegraphics[width=1\textwidth]{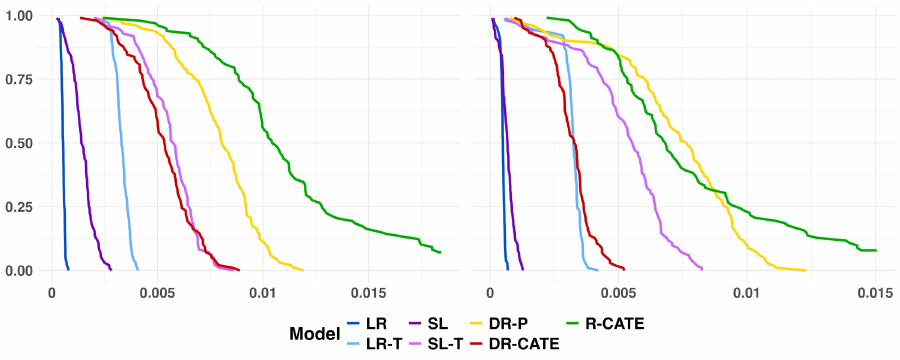}
\caption*{\footnotesize (c) iVariance}

\caption{Conditional ATE: interaction order 2, sample size 2000}
\label{fig:CATE_inter2_2000}
\end{figure}

\begin{figure}[H]
\centering
\includegraphics[width=1\textwidth]{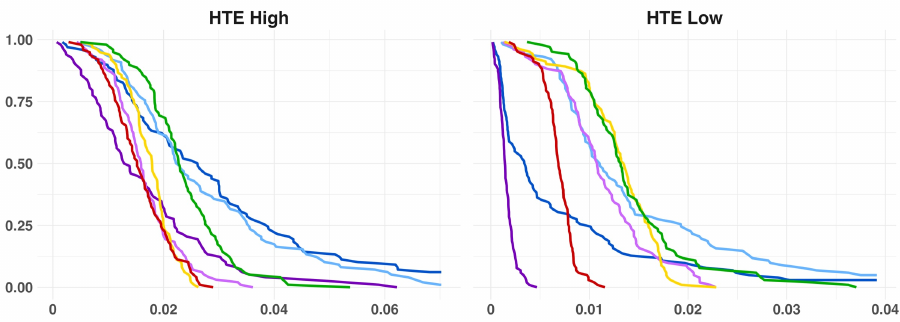}
\caption*{\footnotesize (a) iMSE}

\includegraphics[width=1\textwidth]{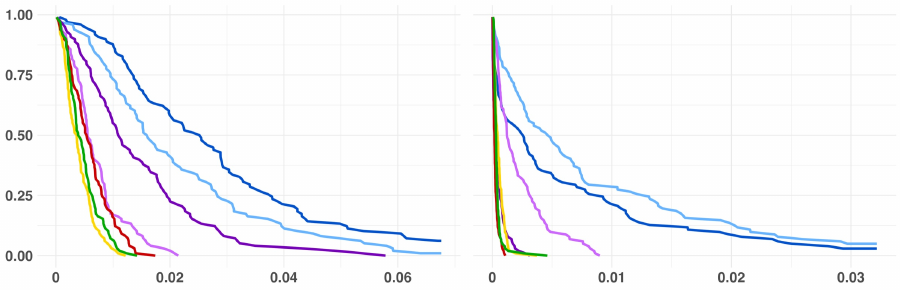}
\caption*{\footnotesize (b) iBias$^2$}

\includegraphics[width=1\textwidth]{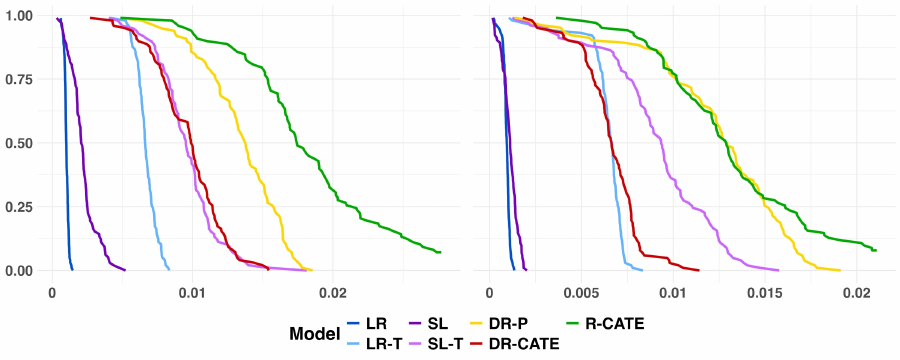}
\caption*{\footnotesize (c) iVariance}

\caption{Conditional ATE: interaction order 2, sample size 1000}
\label{fig:CATE_inter2_1000}
\end{figure}

\begin{figure}[H]
\centering
\includegraphics[width=1\textwidth]{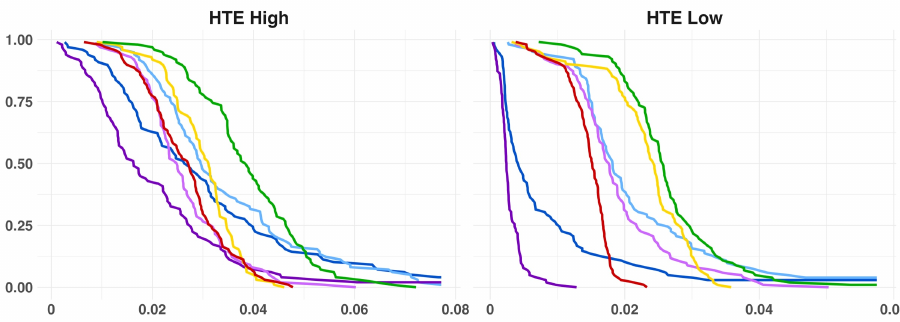}
\caption*{\footnotesize (a) iMSE}

\includegraphics[width=1\textwidth]{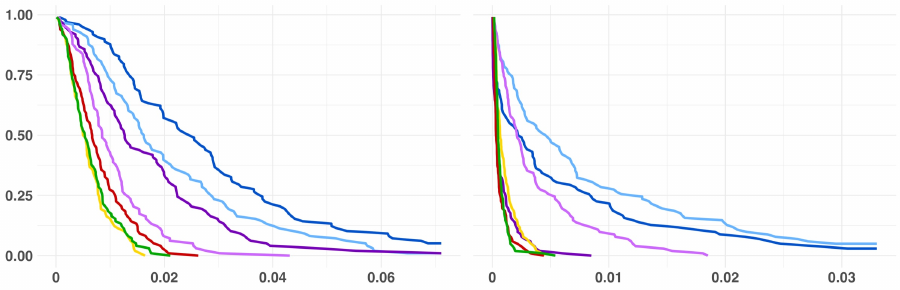}
\caption*{\footnotesize (b) iBias$^2$}

\includegraphics[width=1\textwidth]{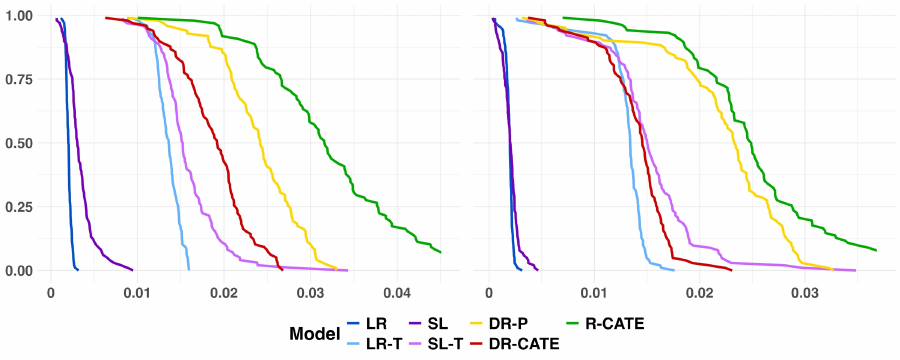}
\caption*{\footnotesize (c) iVariance}

\caption{Conditional ATE: interaction order 2, sample size 500}
\label{fig:CATE_inter2_500}
\end{figure}

\begin{figure}[H]
\centering
\includegraphics[width=1\textwidth]{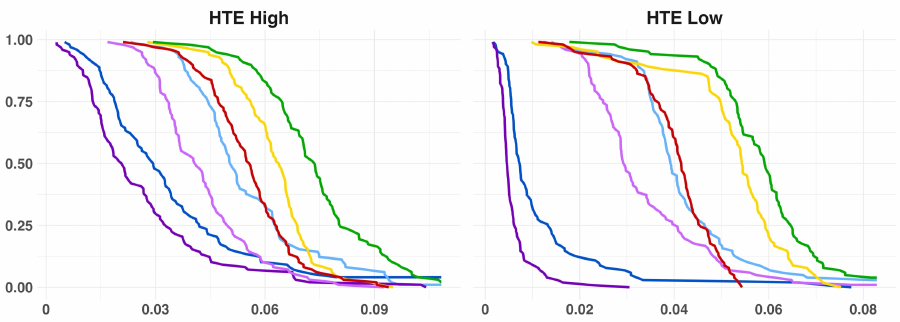}
\caption*{\footnotesize (a) iMSE}

\includegraphics[width=1\textwidth]{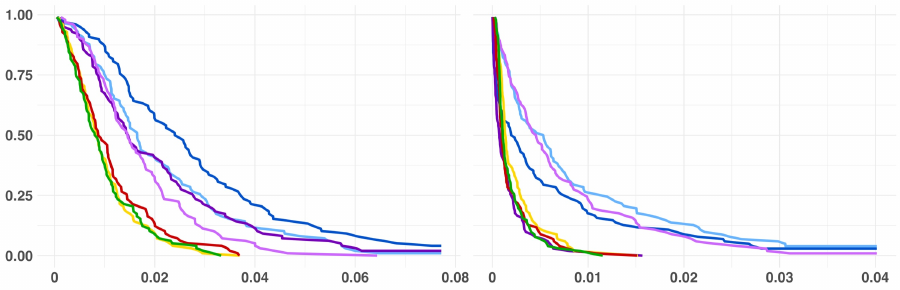}
\caption*{\footnotesize (b) iBias$^2$}

\includegraphics[width=1\textwidth]{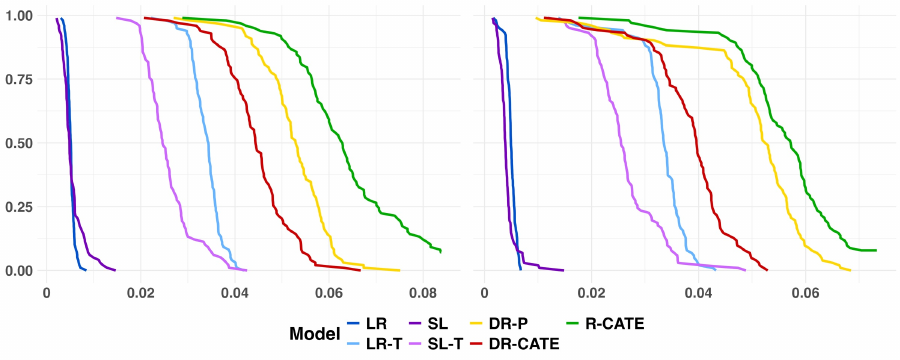}
\caption*{\footnotesize (c) iVariance}

\caption{Conditional ATE: interaction order 2, sample size 200}
\label{fig:CATE_inter2_200}
\end{figure}

\begin{figure}[H]
\centering
\includegraphics[width=1\textwidth]{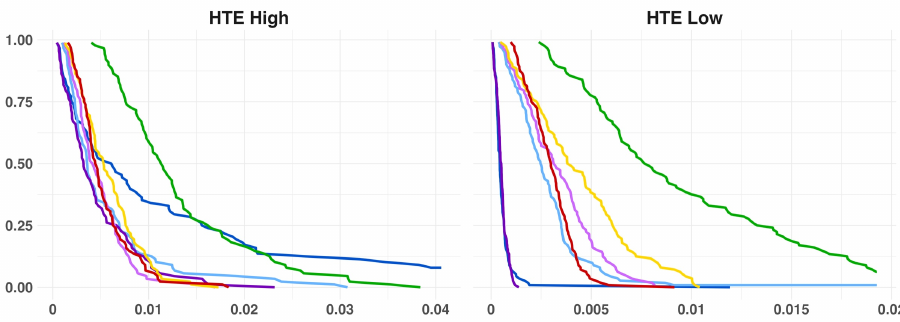}
\caption*{\footnotesize (a) iMSE}

\includegraphics[width=1\textwidth]{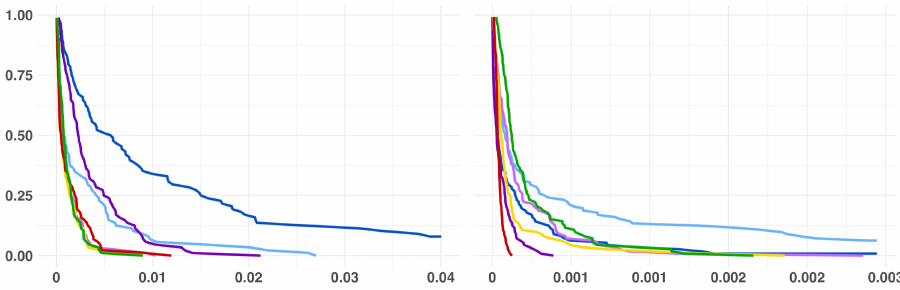}
\caption*{\footnotesize (b) iBias$^2$}

\includegraphics[width=1\textwidth]{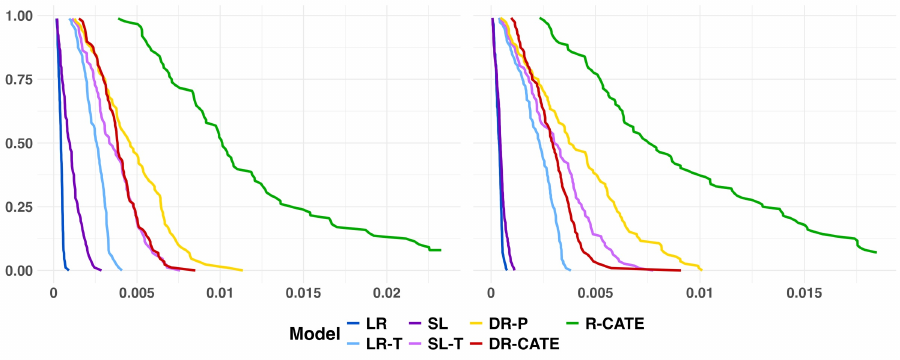}
\caption*{\footnotesize (c) iVariance}

\caption{Conditional ATE: interaction order 1, sample size 2000}
\label{fig:CATE_inter1_2000}
\end{figure}

\begin{figure}[H]
\centering
\includegraphics[width=1\textwidth]{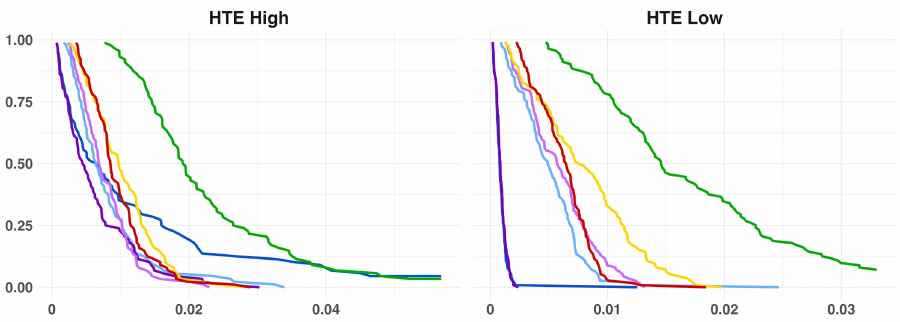}
\caption*{\footnotesize (a) iMSE}

\includegraphics[width=1\textwidth]{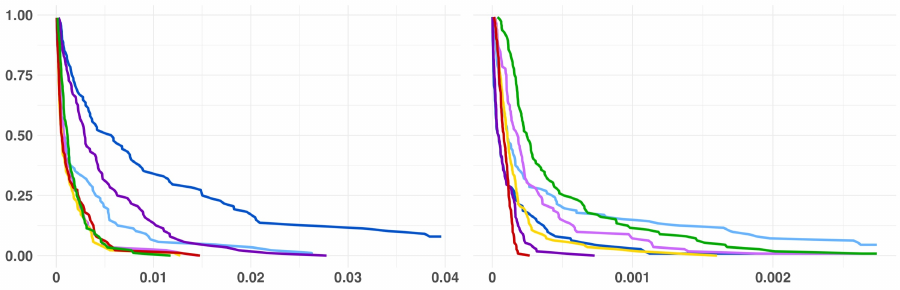}
\caption*{\footnotesize (b) iBias$^2$}

\includegraphics[width=1\textwidth]{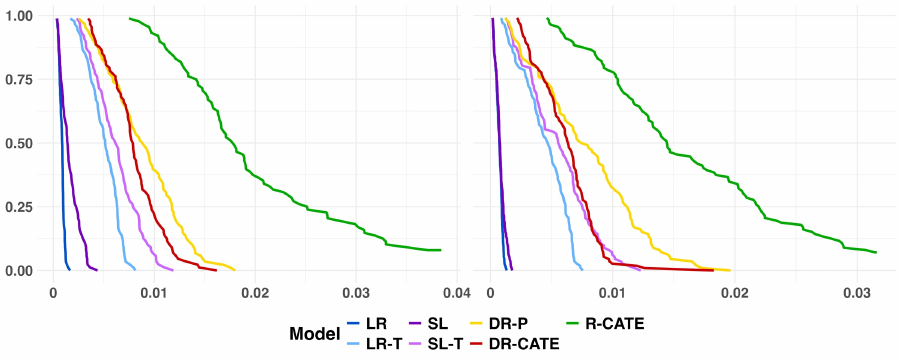}
\caption*{\footnotesize (c) iVariance}

\caption{Conditional ATE: interaction order 1, sample size 1000}
\label{fig:CATE_inter1_1000}
\end{figure}

\begin{figure}[H]
\centering
\includegraphics[width=1\textwidth]{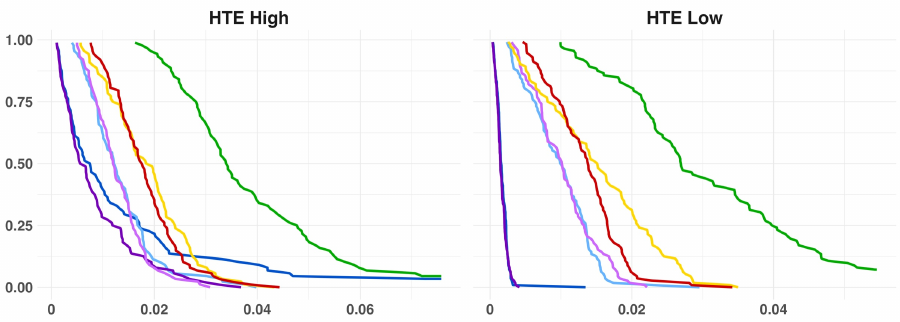}
\caption*{\footnotesize (a) iMSE}

\includegraphics[width=1\textwidth]{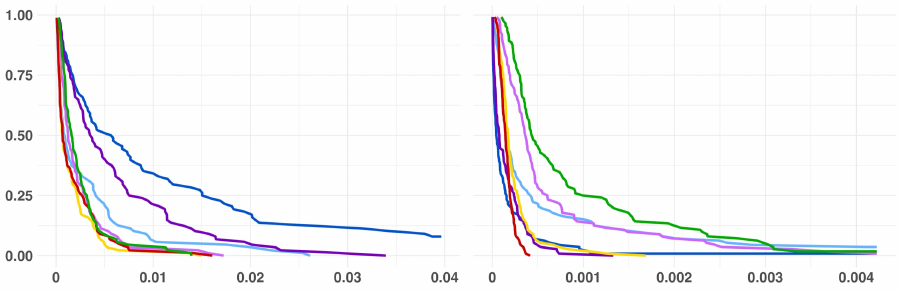}
\caption*{\footnotesize (b) iBias$^2$}

\includegraphics[width=1\textwidth]{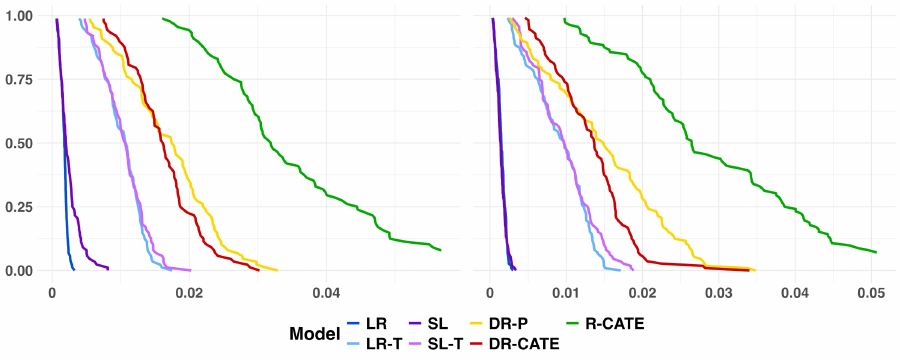}
\caption*{\footnotesize (c) iVariance}

\caption{Conditional ATE: interaction order 1, sample size 500}
\label{fig:CATE_inter1_500}
\end{figure}

\begin{figure}[H]
\centering
\includegraphics[width=1\textwidth]{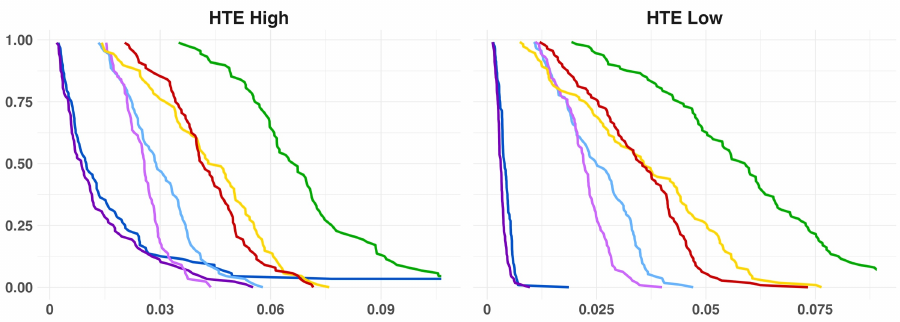}
\caption*{\footnotesize (a) iMSE}

\includegraphics[width=1\textwidth]{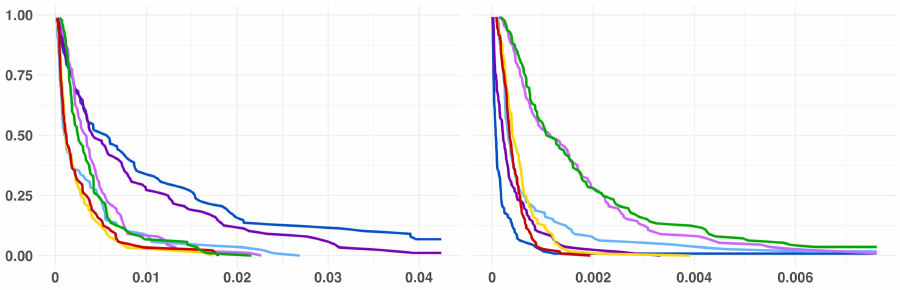}
\caption*{\footnotesize (b) iBias$^2$}

\includegraphics[width=1\textwidth]{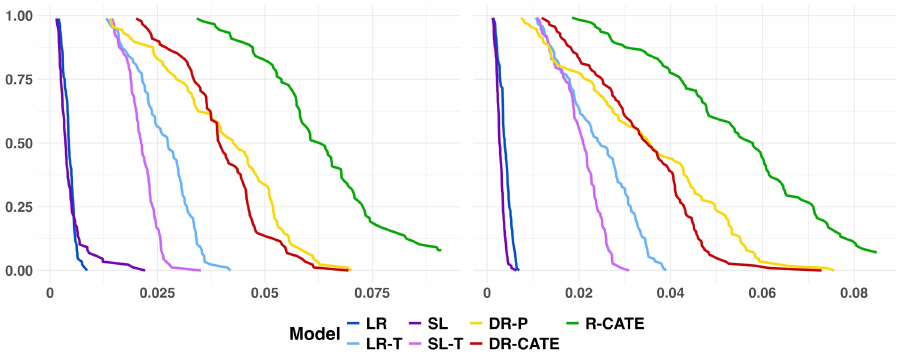}
\caption*{\footnotesize (c) iVariance}

\caption{Conditional ATE: interaction order 1, sample size 200}
\label{fig:CATE_inter1_200}
\end{figure}

\subsection{Summary Tables}
The following tables summarize performance metrics (iMSE, iBias$^2$, iVariance) for CATE, COR, and CRR estimation, stratified by treatment effect heterogeneity level (HTE High vs HTE Low). All numerical values are multiplied by 1000 for readability. Tables report the median and interquartile range (IQR) for each metric.


\begin{table}[H]
    \centering
    \small

    \caption{CATE estimation: Interaction Order 1, Sample Size 200, HTE High}
    \label{tab:cate_1_200_hte_h}
\end{table}

\begin{table}[H]
    \centering
    \small
    %
    \caption{CATE estimation: Interaction Order 1, Sample Size 200, HTE Low}
    \label{tab:cate_1_200_hte_l}
\end{table}

\begin{table}[H]
    \centering
    \small
    %
    \caption{CATE estimation: Interaction Order 1, Sample Size 500, HTE High}
    \label{tab:cate_1_500_hte_h}
\end{table}

\begin{table}[H]
    \centering
    \small
    %
    \caption{CATE estimation: Interaction Order 1, Sample Size 500, HTE Low}
    \label{tab:cate_1_500_hte_l}
\end{table}

\begin{table}[H]
    \centering
    \small
    %
    \caption{CATE estimation: Interaction Order 1, Sample Size 1000, HTE High}
    \label{tab:cate_1_1000_hte_h}
\end{table}

\begin{table}[H]
    \centering
    \small
    %
    \caption{CATE estimation: Interaction Order 1, Sample Size 1000, HTE Low}
    \label{tab:cate_1_1000_hte_l}
\end{table}

\begin{table}[H]
    \centering
    \small
    %
    \caption{CATE estimation: Interaction Order 3, Sample Size 2000, HTE Low}
    \label{tab:cate_3_2000_hte_l}
\end{table}


\begin{table}[H]
    \centering
    \small
    %
    \caption{COR estimation: Interaction Order 1, Sample Size 200, HTE High}
    \label{tab:cor_1_200_hte_h}
\end{table}

\begin{table}[H]
    \centering
    \small
    %
    \caption{COR estimation: Interaction Order 1, Sample Size 200, HTE Low}
    \label{tab:cor_1_200_hte_l}
\end{table}

\begin{table}[H]
    \centering
    \small
    %
    \caption{COR estimation: Interaction Order 1, Sample Size 500, HTE High}
    \label{tab:cor_1_500_hte_h}
\end{table}

\begin{table}[H]
    \centering
    \small
    %
    \caption{COR estimation: Interaction Order 1, Sample Size 500, HTE Low}
    \label{tab:cor_1_500_hte_l}
\end{table}

\begin{table}[H]
    \centering
    \small
    %
    \caption{COR estimation: Interaction Order 1, Sample Size 1000, HTE High}
    \label{tab:cor_1_1000_hte_h}
\end{table}

\begin{table}[H]
    \centering
    \small
    %
    \caption{COR estimation: Interaction Order 1, Sample Size 1000, HTE Low}
    \label{tab:cor_1_1000_hte_l}
\end{table}

\begin{table}[H]
    \centering
    \small
    %
    \caption{COR estimation: Interaction Order 3, Sample Size 2000, HTE Low}
    \label{tab:cor_3_2000_hte_l}
\end{table}


\begin{table}[H]
    \centering
    \small
    %
    \caption{CRR estimation: Interaction Order 1, Sample Size 200, HTE High}
    \label{tab:crr_1_200_hte_h}
\end{table}

\begin{table}[H]
    \centering
    \small
    %
    \caption{CRR estimation: Interaction Order 1, Sample Size 200, HTE Low}
    \label{tab:crr_1_200_hte_l}
\end{table}

\begin{table}[H]
    \centering
    \small
    %
    \caption{CRR estimation: Interaction Order 1, Sample Size 500, HTE High}
    \label{tab:crr_1_500_hte_h}
\end{table}

\begin{table}[H]
    \centering
    \small
    %
    \caption{CRR estimation: Interaction Order 1, Sample Size 500, HTE Low}
    \label{tab:crr_1_500_hte_l}
\end{table}

\begin{table}[H]
    \centering
    \small
    %
    \caption{CRR estimation: Interaction Order 1, Sample Size 1000, HTE High}
    \label{tab:crr_1_1000_hte_h}
\end{table}

\begin{table}[H]
    \centering
    \small
    %
    \caption{CRR estimation: Interaction Order 1, Sample Size 1000, HTE Low}
    \label{tab:crr_1_1000_hte_l}
\end{table}

\begin{table}[H]
    \centering
    \small
    %
    \caption{CRR estimation: Interaction Order 3, Sample Size 2000, HTE Low}
    \label{tab:crr_3_2000_hte_l}
\end{table}

\end{document}